\documentclass{article}
\usepackage{silence}
\WarningFilter{minitoc(hints)}{W0023}
\WarningFilter{minitoc(hints)}{W0024}
\WarningFilter{minitoc(hints)}{W0030}
\WarningFilter{minitoc(hints)}{W0039}

\usepackage[margin=1in]{geometry}
\usepackage{mathpazo}

\usepackage{enumitem}
\usepackage[utf8]{inputenc}
\usepackage[T1]{fontenc}
\usepackage{amsmath,amssymb,amsthm}
\usepackage{mathtools,thmtools}

\let\counterwithin\relax
\usepackage{chngcntr}
\usepackage{bbm}
\usepackage{hyperref}
\usepackage{cleveref}
\usepackage{graphicx}
\usepackage{color}
\usepackage{booktabs}
\usepackage{multirow}
\usepackage{subcaption}
\usepackage{makecell}
\usepackage{mdframed}
\usepackage{adjustbox}
\usepackage{mleftright}
\usepackage{bm}
\usepackage{amssymb}
\usepackage{algorithm}
\usepackage{caption}
\usepackage[noend]{algpseudocode}

\usepackage[toc,page,header]{appendix}
\usepackage{minitoc}

\usepackage{caption}
\captionsetup[figure]{labelfont=bf}

\usepackage{comment}
\usepackage[backend=biber,
            style=alphabetic,
            natbib=true,
            maxbibnames=99,
            minalphanames=3,
            maxcitenames=2,
            sorting=ynt,
            sortcites=true]{biblatex}
\addbibresource{bibliography/fmt_bib.bib}

\usepackage{tikz}
\usetikzlibrary{decorations.pathreplacing,calligraphy,shapes,calc,graphs,shapes.misc,math}

\usepackage{listings}
\lstset{
numbersep=8pt,
frame = lines,
language=Python,
framexleftmargin=15pt,
basicstyle=\footnotesize}

\definecolor{mydarkblue}{rgb}{0,0.08,0.85}
\definecolor{mylightblue}{rgb}{0.06,0.56,1.0}
\definecolor{mylightorange}{rgb}{1.0,0.62,0.12}
\definecolor{mylightred}{rgb}{0.99,0.00,0.04}
\hypersetup{colorlinks=true,
linkcolor=mydarkblue,
citecolor=mydarkblue,
filecolor=mydarkblue,
urlcolor=mydarkblue,
bookmarksopen=true,
linktoc=page}

\newcommand\supp{\textsc{Support}}

\newcommand{\lr}[1]{\left({#1}\right)}

\makeatletter
\newcommand*{\E}{%
  \def\E@sub{}%
  \def\E@sup{}%
  \E@scripts
}
\newcommand*{\E@scripts}{%
  \@ifnextchar_\E@subscript{%
    \@ifnextchar^\E@supscript\E@finish
  }%
}
\def\E@subscript_#1{%
  \ifx\E@sub\@empty
    \def\E@sub{#1}%
  \else
    \errmessage{E already has a subscript}%
  \fi
  \E@scripts
}
\def\E@supscript^#1{%
  \ifx\E@sup\@empty
    \def\E@sup{#1}%
  \else
    \errmessage{E already has a superscript}%
  \fi
  \E@scripts
}
\newcommand*{\E@finish}[1]{%
  \mathbb{E}%
  \ifx\E@sub\@empty\else _{\E@sub}\fi
  \ifx\E@sup\@empty\else ^{\E@sup}\fi
  \mleft(#1\mright)%
}
\makeatother

\makeatletter
\renewcommand*{\P}{%
  \def\P@sub{}%
  \def\P@sup{}%
  \P@scripts
}
\newcommand*{\P@scripts}{%
  \@ifnextchar_\P@subscript{%
    \@ifnextchar^\P@supscript\P@finish
  }%
}
\def\P@subscript_#1{%
  \ifx\P@sub\@empty
    \def\P@sub{#1}%
  \else
    \errmessage{P already has a subscript}%
  \fi
  \P@scripts
}
\def\P@supscript^#1{%
  \ifx\P@sup\@empty
    \def\P@sup{#1}%
  \else
    \errmessage{P already has a superscript}%
  \fi
  \P@scripts
}
\newcommand*{\P@finish}[1]{%
  \mathbb{P}%
  \ifx\P@sub\@empty\else _{\P@sub}\fi
  \ifx\P@sup\@empty\else ^{\P@sup}\fi
  \mleft(#1\mright)%
}
\makeatother

\renewcommand{\hat}[1]{\widehat{#1}}
\newcommand{\ind}[1]{\bm{1}\{{#1}\}}
\newcommand{\modeleval}[2]{f_{\mathcal{A}}({#1};{#2})}

\newmdtheoremenv{defn}{Definition}
\newmdtheoremenv{thm}{Theorem}
\newmdtheoremenv{rem}{Remark}
\newmdtheoremenv{appmode}{Use Case}

\newtheorem{remark}{Remark}

\title{Datamodels: Predicting Predictions from Training Data}
\newcommand\AND{
    \end{tabular}\hfil\linebreak[4]\hfil%
    \begin{tabular}[t]{c}\ignorespaces%
}
\author{
    Andrew Ilyas\footnote{Equal contribution.} \\
    \texttt{ailyas@mit.edu} \\
    MIT
    \and
    Sung Min Park\footnotemark[1] \\
    \texttt{sp765@mit.edu} \\
    MIT
    \and
    Logan Engstrom\footnotemark[1] \\
    \texttt{engstrom@mit.edu} \\
    MIT
    \AND
    Guillaume Leclerc \\
    \texttt{leclerc@mit.edu} \\
    MIT
    \and
    Aleksander M\k{a}dry \\
    \texttt{madry@mit.edu} \\
    MIT
}
\date{}

\usepackage{xspace}

\newcommand{\fmow}{FMoW}

\begin{document}
\setcounter{tocdepth}{2}
\doparttoc %
\renewcommand\ptctitle{}
\faketableofcontents %

\maketitle

\begin{abstract}
  We present a conceptual framework, \emph{datamodeling}, for analyzing the
behavior of a model class in terms of the training data.
For any fixed ``target'' example $x$, training set $S$, and
learning algorithm, a {\em datamodel} is a parameterized function
$2^S \to \mathbb{R}$ that for any subset of $S' \subset S$---using only
information about which examples of $S$ are contained in $S'$---predicts the
outcome of training a model on $S'$ and evaluating on $x$.
Despite the potential complexity of the underlying process being approximated (e.g.,
end-to-end training and evaluation of deep neural networks), we
show that even simple {\em linear} datamodels can successfully predict model outputs.
We then demonstrate that datamodels give rise to a variety of applications, such as:
accurately predicting the effect of dataset counterfactuals; identifying
brittle predictions; finding semantically similar examples; quantifying
train-test leakage; and embedding data into a well-behaved and feature-rich
{\em representation space}.\footnote{Data for this paper (including pre-computed
datamodels as well as raw predictions from four million trained deep neural
networks) is available at \url{https://github.com/MadryLab/datamodels-data}.}

\end{abstract}

\newcommand{\mask}[1]{\bm{1}_{#1}}
\newcommand{\parmap}[1]{g_{\theta}(#1)}

\section{Introduction}
\label{sec:intro}
\begin{center}
{\em What kinds of biases does my (machine learning) system exhibit?
What correlations does it exploit?
On what subpopulations does it perform well (or poorly)?}
\end{center}
A recent body of work in machine learning suggests that the answers to these
questions lie within both the learning algorithm and the training data used
\citep{carlini2019secret,gu2017badnets,ilyas2019adversarial, hooker2021moving,
jain2021co}. 
However, it is often difficult to understand {\em how} algorithms and data
combine to yield model predictions. In this work, we present {\em
datamodeling}---a framework for tackling this question by forming an
explicit model for predictions in terms of the training data.

\newcommand{\trainex}{\ensuremath{z}}
\paragraph{Setting.} Consider a typical machine learning setup, starting
with a training set $S$ comprising $d$ input-label pairs. 
The focal point of this setup is
a {\em learning algorithm} $\mathcal{A}$ that takes in such a training
set of input-label pairs, and outputs a trained model. (Note that this learning
algorithm does not have to be deterministic---for example, $\mathcal{A}$ might encode
the process of training a deep neural network from random initialization using
stochastic gradient descent.)

Now, consider a {\em fixed} input $x$
(e.g., a photo from the test set of a computer vision benchmark) and define
\begin{align}
    \label{eq:modeleval}
    \modeleval{x}{S} \coloneqq \text{the outcome of training a model on $S$ using $\mathcal{A}$ and evaluating it on the input $x$},
\end{align}
where we leave ``outcome'' intentionally broad to capture a variety of use cases.
For example, $\modeleval{x}{S}$ may be the
cross-entropy loss of a classifier on $x$, or the squared-error of a regression
model on $x$. The potentially stochastic nature of $\mathcal{A}$
means that $\modeleval{x}{S}$ is a random variable.

\paragraph{Goal.} Broadly, we aim to understand how the training examples
in $S$ combine through the learning algorithm $\mathcal{A}$ to yield
$\modeleval{x}{S}$ (again, for the {\em specifically} chosen input $x$).
To this end, we leverage a classic technique for
studying black-box functions: {\em surrogate modeling}
\citep{sacks1989design}. In surrogate modeling, one replaces a complex black-box
function with an inexact but significantly easier-to-analyze approximation,
then uses the latter to shed light on the behavior of the original function.

In our context, the complex black-box function is $\modeleval{x}{\cdot}$.
We thus aim to find a simple {\em surrogate} function $g(S')$ whose output
roughly matches $\modeleval{x}{S'}$ for a variety of training sets $S'$
(but again, for a {\em fixed} input $x$). Achieving this goal would reduce the
challenge of scrutinizing $\modeleval{x}{\cdot}$---and more generally, the map
from training data to predictions through learning algorithm
$\mathcal{A}$---to the (hopefully easier) task of analyzing $g$.

\paragraph{Datamodeling.}  By parameterizing the surrogate function $g$ (e.g.,
as $g_\theta$, for a parameter vector $\theta$), we transform the
challenge of constructing a surrogate function into a {\em supervised learning}
problem.
In this problem, the ``training examples'' are subsets $S' \subset S$ of the
original task's training set $S$, and the
corresponding ``labels'' are given by $\modeleval{x}{S'}$ (which we can compute
by simply training a new model on $S'$ with algorithm $\mathcal{A}$, and
evaluating on $x$).
Our goal is then to fit a parametric function $g_\theta$ mapping the former to
the latter.

We now formalize this idea as {\em datamodeling}---a framework that forms the
basis of our work. In this framework, we
first fix a distribution over subsets that we will use to collect
``training data'' for $g_\theta$,
\begin{align}
    \mathcal{D}_S \coloneqq
    \text{a fixed distribution over subsets of $S$ (i.e., }
    \text{support}(\mathcal{D}_S) \subseteq 2^S \text{)},
\end{align}
and then use $\mathcal{D}_S$ to collect a {\em datamodel training set}, or a collection of pairs
$$\left\{(S_1,\, \modeleval{x}{S_1}), \ldots, (S_m,\, \modeleval{x}{S_m})\right\},$$
where $S_i \sim \mathcal{D}_S$, and again $\modeleval{x}{S_i}$ is the result of
training a model on $S_i$ and evaluating on $x$ (cf. \eqref{eq:modeleval}).

We next focus on how to parameterize our surrogate function $g_\theta$.
In theory, $g_\theta$ can be any map that takes as input
subsets of the training set, and returns estimates of $\modeleval{x}{\cdot}$.
However, to simplify $g_\theta$ we ignore the actual {\em
contents} of the subsets $S_i$, and instead focus solely on the
{\em presence} of each training example of $S$ within $S_i$.
In particular, we consider the {\em characteristic vector} corresponding to each $S_i$,
\begin{align}
    \label{eq:char_vec}
    \mask{S_i} \in \{0,1\}^d
        \qquad
        \text{such that}
        \qquad
    (\mask{S_i})_j = \begin{cases}
        1 &\text{if } z_j \in S_i \\
        0 &\text{otherwise,}
    \end{cases}
\end{align}
a vector that indicates which elements of the original training set $S$
belong to a given subset $S_i$. We then define
a \underline{\em datamodel} for a given input $x$ as a function
\vspace*{-0.75em}
\begin{align}
    \label{eq:datamodeling_redef}
    g_\theta: \{0,1\}^d \to \mathbb{R}, \qquad \text{where}
    \qquad
    \theta =
    \arg\min_{w}\ \frac{1}{m}\sum_{i=1}^m \mathcal{L} \lr{
        g_w(\mask{S_i}),\ \modeleval{x}{S_i}
    },
\end{align}
and $\mathcal{L}(\cdot, \cdot)$ is a fixed loss function (e.g., squared-error).
This setup \eqref{eq:datamodeling_redef} places datamodels squarely
within the realm of supervised learning: e.g., we can easily test a
given datamodel by sampling new subset-output pairs $\{(S_i,
\modeleval{x}{S_i})\}$ and computing average loss.
For completeness, we restate the entire datamodeling
framework below: 
\begin{defn}[Datamodeling]
    \label{def:datamodeling}
    \vspace{-0.7em}
    Consider a fixed training set $S$, a learning algorithm $\mathcal{A}$,
    a target example $x$, and a distribution $\mathcal{D}_S$ over subsets of $S$.
    For any set $S' \subset S$, let $\modeleval{x}{S'}$ be
    the (stochastic) output of training a model on
    $S'$ using $\mathcal{A}$, and evaluating on $x$.
    A \underline{datamodel} for $x$ is a parametric function $g_\theta$ optimized to
    predict $\modeleval{x}{S_i}$ from training subsets $S_i \sim \mathcal{D}_S$,
    i.e.,
    \begin{align*}
        g_\theta: \{0,1\}^{|S|} \to \mathbb{R}, \qquad\text{ where }\qquad
        \theta =
        \arg\min_{w}\ \widehat{\mathbb{E}}^{(m)}_{S_i \sim \mathcal{D}_S}
        \left[ \mathcal{L} \lr{
            g_w(\mask{S_i}),\ \modeleval{x}{S_i}
        } \right],
    \end{align*}
    $\mask{S_i} \in \{0,1\}^{|S|}$ is the characteristic vector of
    $S_i$ in $S$ (see \eqref{eq:char_vec}),
    $\mathcal{L}(\cdot, \cdot)$ is a loss function, and
    $\widehat{\mathbb{E}}^{(m)}$ is an $m$-sample empirical estimate of the expectation.
\end{defn}
The pseudocode for computing datamodels is in Appendix \ref{app:pseudocode}.
Before proceeding further, we highlight two critical (yet somewhat
subtle) properties of the datamodeling framework:
\begin{itemize}
    \item {\bf Datamodeling studies model {\em classes}, not specific models}:
    Datamodeling focuses on the entire distribution of models
    induced by the algorithm $\mathcal{A}$, rather than a specific model. Recent
    work suggests this distinction is particularly significant for modern learning
    algorithms (e.g., neural networks), where models can exhibit drastically
    different behavior depending on only the choice of random seed during
    training \citep{nakkiran2020distributional, jiang2021assessing,
    damour2020underspecification, zhong2021larger}---we
    discuss this further in Section \ref{sec:related}.
    \item {\bf Datamodels are target example-specific}: A datamodel $g_\theta$
    predicts model outputs on a specific but arbitrary
    target example $x$. This $x$ might be an example from the test set,
    a synthetically generated example, or even (as we will see in
    Section \ref{subsec:impl_details}) an example from the training set
    $S$ itself.
    We will often work with
    {\em collections} of datamodels corresponding to a set of target
    examples (e.g., we might consider a test set $\{x_1,\ldots x_n\}$ with
    corresponding datamodels $\{g_{\theta_1},\ldots g_{\theta_n}\}$). In Section
    \ref{sec:specializing} we show that as long
    as the learning algorithm $\mathcal{A}$ and the training set $S$ are fixed,
    computing a collection of datamodels simultaneously is not much harder than
    computing a single one.
\end{itemize}

\subsection{Roadmap and contributions}
The key contribution of our work is the {\em datamodeling framework}
described above, which allows us to analyze the behavior of a machine learning
algorithm $\mathcal{A}$ in terms of the training data. In the remainder of this
work, we show how to instantiate, implement, and apply this framework.

We begin in Section \ref{sec:constructing} by considering a
concrete instantiation of datamodeling in which the map $g_\theta$
is a {\em linear} function. Then, in Section \ref{sec:specializing} we develop the
remaining machinery required to apply this instantiation to deep neural networks
trained on standard image datasets. In the rest of the paper, we find
that:
\begin{itemize}
    \item {\bf Datamodels successfully predict model outputs
    (\S\,\ref{sec:estimation_results}, Figure \ref{fig:ondist_headline})}:
    despite their simplicity,
    datamodels yield predictions that match expected model outputs
    on new sets $S$ drawn from the same distribution $\mathcal{D}_S$.
    (For example, the Pearson correlation between predicted and ground-truth
    outputs is $r > 0.99$.)
    \item {\bf Datamodels successfully predict counterfactuals
    (\S\,\ref{sec:causal_view}, Figure \ref{fig:causal_headline})}: predictions correlate with model outputs
    even on out-of-distribution training subsets (Figures
    \ref{fig:cifar_causal}, \ref{fig:pca_causality_scatter} and Appendix
    \ref{app:causal_details}) allowing us to estimate the
    {\em causal effect} of removing training images on a given test prediction.
    Leveraging this ability, we find that {\em for
    50\% of CIFAR-10 \citep{krizhevsky2009learning} test images, models can be made incorrect by
    removing less than 200 target-specific training
    points (i.e., 0.4\% of the total training set size).} If one
    mislabels the training examples instead of only removing them, {\em 35
    label-specific points} suffice.
    \begin{figure}[!h]
        \centering
        \begin{minipage}{0.3\textwidth}
            \includegraphics[width=\textwidth,trim={0 0.0cm 0 -0.20cm},clip]{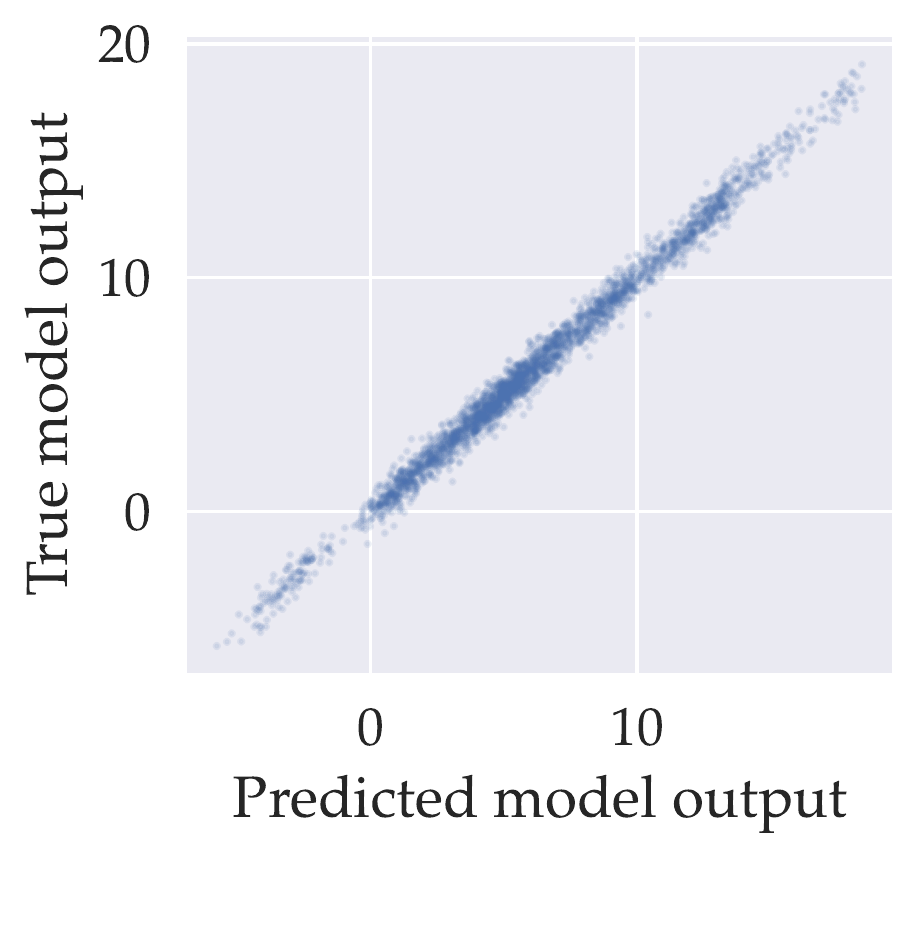}
            \caption{Datamodels predict
            ($g_\theta(S')$, x-axis)
            the outcome of training models on
            subsets $S'$ of the training set $S$ sampled from $\mathcal{D}_S$ and evaluating on $x$
            ($\mathbb{E}[\modeleval{x}{S'}]$, y-axis)}
            \label{fig:ondist_headline}
        \end{minipage}
        \hspace{1em}
        \begin{minipage}{0.65\textwidth}
            \includegraphics[width=\textwidth,trim={0 0 0 0cm},clip]{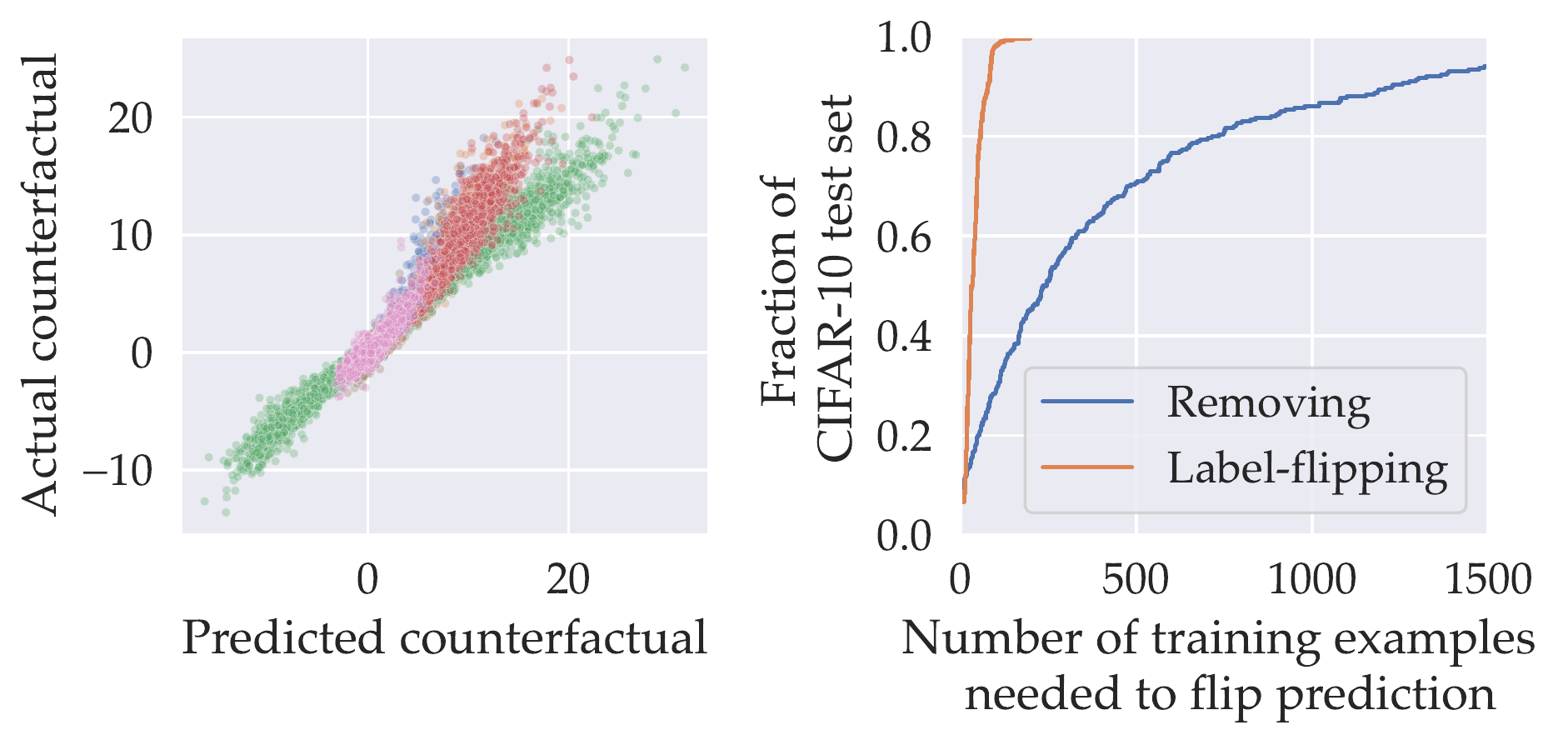}
            \caption{Datamodels predict out-of-distribution dataset
            counterfactuals (left) and identify brittle predictions (right). As
            seen on the right, approximately 50\% of predictions on the CIFAR-10 test
            set can be flipped by removing less than $200$ (target-specific)
            training images. If we flip the labels of chosen training images instead of removing them, just 35 images suffices.}
            \label{fig:causal_headline}
        \end{minipage}
    \end{figure}
    \item {\bf Datamodel weights encode similarity
    (\S\,\ref{sec:distance_metric}, Figure \ref{fig:headline_ttl})}:
    the most positive (resp., negative) datamodel weights tend to
    correspond to similar training images from the same
    (resp., different) class as the target example $x$.
    We use this property to identify significant train-test
    leakage across both datasets we study (CIFAR-10 and Functional Map of the
    World \citep{koh2020wilds,christie2018functional}).
    \begin{figure}[!h]
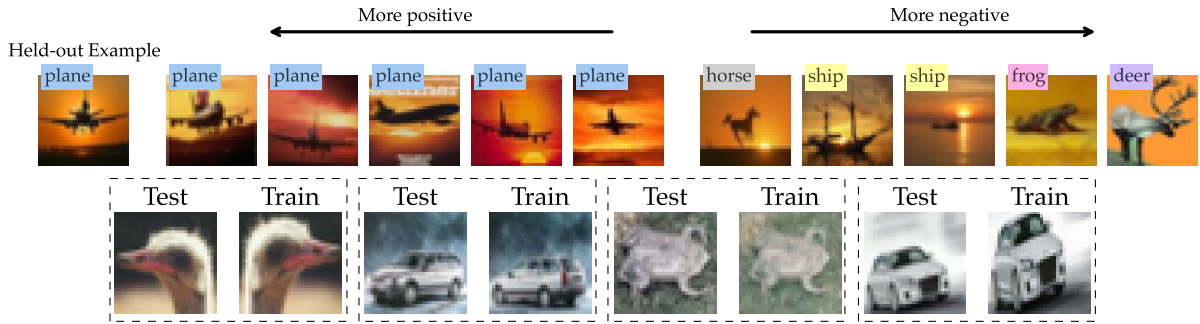

        \begin{subfigure}{\textwidth}
            \centering
            \includegraphics[width=\textwidth,trim={0 2.5cm 0 0.35cm},clip]{paper_figs/nn/posneg}
        \end{subfigure}
        \begin{subfigure}{\textwidth}
            \centering
            \input{paper_figs/ttl_headline_tikz}
        \end{subfigure}
        \caption{High-magnitude datamodel weights identify semantically similar
        training examples (top), which we can use to find train-test leakage in
        benchmark computer vision datasets (bottom).}
        \label{fig:headline_ttl}
    \end{figure}
    \item {\bf Datamodels yield a well-behaved embedding
    (\S\,\ref{sec:rep_view}, Figure \ref{fig:headline_pca})}:
    viewing datamodel weights as a {\em feature embedding} of each image into 
    $\mathbb{R}^d$ (where $d$ is the training set size), we discover a
    well-behaved representation space. In particular, we find that
    so-called  {\em datamodel embeddings}:
    \begin{itemize}
        \item[(a)] enable (qualitatively) high-quality clustering;
        \item[(b)] allow us to identify model-relevant {\em subpopulations} that
        we can causally verify in a natural sense;
        \item[(c)] have a number of advantages over representations derived from, e.g.,
        the penultimate layer of a fixed pre-trained network, such as higher
        effective dimensionality and ({\em a priori}) human-meaningful coordinates.
    \end{itemize} 
    \begin{figure}[h]
        \input{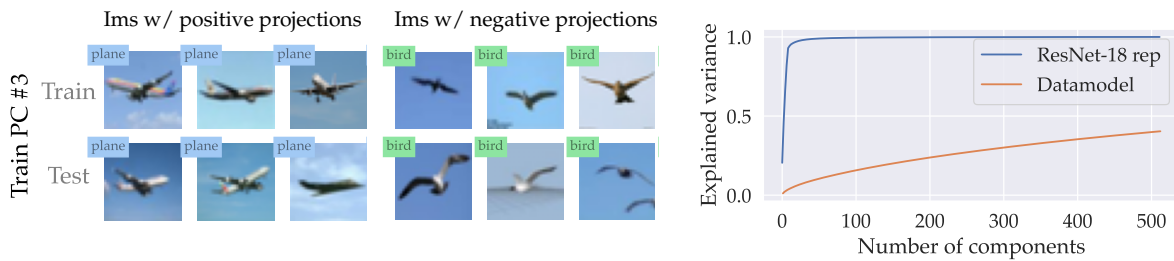}
        \caption{Datamodels yield a well-behaved (left), relatively dense
        (right) embedding of any given input into $\mathbb{R}^d$, where $d$ is
        the training set size. Applying natural manipulations to these
        embeddings enables a variety of applications which we discuss more
        thoroughly in \S\,\ref{sec:rep_view}.}
        \label{fig:headline_pca}
    \end{figure}
\end{itemize}

More broadly, datamodels turn out to be a versatile tool for understanding how
learning algorithms leverage their training data. In Section \ref{sec:related},
we contextualize datamodeling with respect to several ongoing lines of work in
machine learning and statistics. We conclude, in Section \ref{sec:future_work},
by outlining a variety of directions for future work on both improving and
applying datamodels.

\section{Constructing (linear) datamodels}
\label{sec:constructing}
\newcommand{\nummodels}{m}
\newcommand{\numtest}{n}
\newcommand{\numtrain}{d}

As described in Section \ref{sec:intro}, building datamodels comprises
the following steps:
\begin{itemize}
    \item[(a)] pick a parameterized class of functions $g_\theta$;
    \item[(b)] sample a collection of subsets $S_i \subset S$
    from a fixed training set according to a distribution $\mathcal{D}_S$;
    \item[(c)] for each subset $S_i$, train a model using algorithm $\mathcal{A}$,
    evaluate the model on target input $x$ using the relevant metric (e.g.,
    loss); collect the resulting pair $(\mask{S_i}, \modeleval{x}{S_i})$;
    \item[(d)] split the collected dataset of subset-output pairs into a
    datamodel training set of size $m$, a datamodel validation set of size
    $m_{val}$, and a datamodel test set of size $m_{test}$;
    \item[(e)] estimate parameters $\theta$ by fitting $g_\theta$ on subset-output pairs, i.e.,
    by minimizing
    \[
        \frac{1}{m} \sum_{i=1}^m \mathcal{L}\lr{
            \parmap{\mask{S_i}},\ \modeleval{x}{S_i}
        }
    \]
    over the collected datamodel training set, and use the validation set to perform
    model selection.
\end{itemize}

We now explicitly instantiate this framework, with the goal of understanding the
predictions of (deep) {\em classification} models.
To this end, we revisit steps
(a)-(e) above, and consider each relevant aspect---the sampling distribution
$\mathcal{D}_S$, the
output function $\modeleval{x}{S}$, the parameterized family $g_\theta$, and the
loss function $\mathcal{L}(\cdot, \cdot)$---separately:

\paragraph{(a) What surrogate function $g_\theta$ should we use?}
The first design choice to make is which family of parameterized surrogate
functions $g_\theta$ to optimize over. At first, one might be inclined
to use a complex family of functions in the hope of reducing potential misspecification
error. After all, $g_\theta$ is meant to be a surrogate for the end-to-end
training of a deep classifier. In this work, however, we will instantiate
datamodeling by taking $\parmap{\cdot}$ to be a simple {\em linear} mapping
\begin{align}
    \parmap{\mask{S_i}} \coloneqq \theta^\top \mask{S_i} + \theta_0,
\end{align}
where we recall that $\mask{S_i}$ is the size-$d$ {\em characteristic
vector} of $S_i$ within $S$ (cf. \eqref{eq:char_vec}).
\begin{remark}
    While we will allow $\parmap{\cdot}$ to fit a \underline{bias}
    term as above, for notational convenience we omit $\theta_0$ throughout this
    work and will simply write $\theta^\top \mask{S_i}$ to represent a datamodel
    prediction for the set $S_i$.
\end{remark}

\paragraph{(b) What distribution $\mathcal{D}_S$ over training subsets do we use?}
In step (a) of the estimation process above, we collect a ``datamodel training
set'' by sampling subsets $S_i \subset S$ from a distribution $\mathcal{D}_S$.
A simple first choice for $\mathcal{D}_S$---and indeed, the one we consider
for the remainder of this work---is the distribution of random $\alpha$-fraction
subsets of the training set. Formally, we set
\begin{align}
    \label{eq:distribution}
    \mathcal{D}_S = \text{Uniform}\lr{
        \{S' \subset S: |S'| = \alpha d\}.
    }
\end{align}
This design choice reduces the choice of $\mathcal{D}_S$ to a choice of
{\em subsampling fraction} $\alpha \in (0, 1)$, a decision whose
impact we explore in Section \ref{sec:alpha_choice}.
In practice, we estimate datamodels for {\em several} choices of $\alpha$,
as it turns out that the value of $\alpha$ corresponding to the most useful datamodels
can vary by setting.

\paragraph{(c) What outputs $\modeleval{x}{S'}$ should we track?}
Recall that for any subset $S' \subset S$ of the training set $S$,
$\modeleval{x}{S'}$ is intended to be a specific (potentially stochastic)
function representing the output of a model trained on $S'$ and evaluated on a
target example $x$.
There are, however, several candidates for $\modeleval{x}{S'}$ based on
which model output we opt to track.

In the context of understanding {\em classifiers}, perhaps the simplest such
candidate is the {\em correctness} function (i.e., a stochastic function that is
$1$ if the model trained on $S'$ is correct on $x$, and $0$ otherwise).
However, while the correctness function may be a natural choice for $\modeleval{x}{S'}$,
it turns out to be suboptimal in two ways. First, fitting to the
correctness function ignores potentially valuable information about the model's
confidence in a given decision. Second, recall that our
procedure fits model outputs using a least-squares linear model, which is not
designed to properly handle discrete (binary) dependent variables.

A natural way to improve over our initial candidate would thus be to use
continuous output function, such as cross-entropy loss or correct-label
confidence.
But which exact function should we choose?
In Appendix \ref{app:output_fn}, we describe a heuristic
that we use to guide our choice of the {\em correct-class margin}:
\begin{align}
    \label{eq:margin}
    \modeleval{x}{S'} \coloneqq \text{(logit for correct class)} - \text{(highest incorrect logit)}.
\end{align}

\paragraph{(e) What loss function $\mathcal{L}$ should we minimize?}
In step (d) above, we are free to pick any estimation algorithm for $\theta$.
This freedom of choice allows us to incorporate {\em priors} into the
datamodeling process.
In particular, one might expect that predictions on a given target example
will not depend on every training example. We can incorporate a corresponding
{\em sparsity prior} by adding $\ell_1$ regularization, i.e., setting
\begin{align}
    \label{eq:testset_datamodels}
    \theta = \min_{w \in \mathbb{R}^{\numtrain}}
    \frac{1}{m} \sum_{i=1}^m \lr{w^\top \mask{S_i} - \modeleval{x}{S_i}}^2 + \lambda \|w\|_1,
\end{align}
where we recall that $\numtrain$ is the size of the original training set $S$.
We can use cross-validation to select the regularization parameter $\lambda$ for
each specific target example $x$.

\section{Accurately predicting outputs with datamodels}
\label{sec:specializing}
We now demonstrate how datamodels can be applied in the context of deep neural
networks---specifically, we consider deep image classifiers trained
on two standard datasets: CIFAR-10 \citep{krizhevsky2009learning} and Functional
Map of the World (\fmow{}) \citep{koh2020wilds} (see Appendix \ref{app:datasets}
for more information on each dataset).

\paragraph{Goal.} As discussed in \Cref{sec:intro}, our goal is to construct a
{\em collection} of datamodels for each dataset, with each datamodel predicting
the model-training outcomes for a {\em specific} target example.
Thus, for both CIFAR and \fmow{}, we fix a deep learning algorithm
(architecture, random initialization, optimizer, etc.),
and aim to estimate a datamodel for each test set example
{\em and} training set example. As a result, we will obtain
$n = 10,000$ ``test set datamodels'' and $n = 50,000$ ``training set
datamodels'' for CIFAR (each being a linear model $g_\theta$ parameterized by a vector
$\theta \in \mathbb{R}^{d}$, for $d = 50,000$); as well as
$n = 3,138$ test set datamodels and $n = 21,404$ training set datamodels for
FMoW (again, parameterized by $\theta \in \mathbb{R}^{d}$ where $d = 21,404$).

\subsection{Implementation details}
\label{subsec:impl_details}
Before applying datamodels to our two tasks of interest, we address a few
remaining technical aspects of datamodel estimation:

\paragraph{Simultaneously estimating datamodels for a collection of target examples.}
Rather than repeat the entire datamodel estimation process for each target
example $x$ of interest separately, we can estimate datamodels for an entire
{\em set} of target examples simultaneously through model reuse. Specifically,
we train a large pool of models on subsets $S_i \subset S$ sampled from the
distribution $\mathcal{D}_S$, and use the {\em same} models to compute outputs
$\modeleval{x}{S_i}$ for each target example $x$.

\paragraph{Collecting a (sufficiently large) datamodel training set.}
The cost of obtaining a single subset-output pair can be non-trivial---in our case, it
involves training a ResNet from scratch on CIFAR-10.
It turns out, however, that recent advances in fast neural network training
\citep{page2018cifar,leclerc2022ffcv} allow
us to train a wealth of models on different $\alpha$-subsets of each
dataset {\em very} efficiently. (For example, for $\alpha = 50\%$ we can use \citep{leclerc2022ffcv} to train
40,000 models/day on an $8 \times \text{A}100$ GPU machine; see Appendix \ref{app:hyperparams} for
details.)
We train $m = 300,000$ CIFAR models and $m = 150,000$ \fmow{} models on
$\alpha = 50\%$ subsets of each dataset. We also train $m$ models for
each subsampling fraction $\alpha \in \{10\%, 20\%, 75\%\}$,
using $\alpha$ to scale $m$. See \Cref{fig:estimation} for a summary of the
models trained.
\begin{table}[h]
    \centering
    \begin{tabular}[h]{@{}lcc@{}}
        \toprule
        & \multicolumn{2}{c}{Models trained} \\
        Subset size ($\alpha$) & {CIFAR-10} & {\fmow{}} \\
        \cmidrule{2-3}
        0.1 & 1,500,000 & -- \\
        0.2 & 750,000 & 375,000 \\
        0.5 & 300,000 & 150,000 \\
        0.75 & 600,000 & 300,000 \\ \bottomrule
    \end{tabular}
    \caption{The number of models (ResNet-9 for CIFAR and ResNet-18 for
    \fmow{}) used to estimate datamodels for each dataset. All
    models are trained from scratch using optimized code \citep{leclerc2022ffcv}
    (e.g., each $\alpha=0.5$ model on CIFAR-10 takes 17s to train (on a single
    A100 GPU) to 90\% accuracy; see \Cref{app:hyperparams} for
    details).}
    \label{fig:estimation}
\end{table}

\paragraph{Computing datamodels when the target example is a training input.}
Recall that the target example $x$ for which we estimate a datamodel can be
arbitrary. In particular, $x$ could itself be a training example---indeed, as
we mention above, our goal is to estimate a datamodel for every image in the
\fmow{} and CIFAR-10 test {\em and} training sets.
When $x$ is in the training set, however, we slightly
alter the datamodel estimation objective \eqref{eq:testset_datamodels} to
exclude training sets $S_i$ containing the target example:
\begin{equation}
    \label{eq:trainset_datamodel}
    \theta = \min_{w \in \mathbb{R}^{\numtrain}}
    \frac{1}{m} \sum_{i=1}^m \mathbb{1}\{x \not\in S_i\} \cdot \lr{w^\top \mask{S_i} - \modeleval{x}{S_i}}^2 + \lambda \|w\|_1.
\end{equation}

\paragraph{Running LASSO regression at scale.}
After training the models, we record the correct-class margin for
all the (train and test) images as well as the training subsets.
Our task now is to estimate,
for each example in the train and test set, a datamodel $g_\theta$ mapping subsets
$\mask{S_i}$ to observed margins. Recall that we compute datamodels via
$\ell_1$-regularized least-squares regression (cf.
\eqref{eq:testset_datamodels}), where we set the regularization parameter
$\lambda$ for each test image independently via a fixed validation set of
trained models.

However, most readily available LASSO solvers require too much memory or are
prohibitively slow for our values of $n$ (the number of datamodels to estimate),
$m$ (the number of models trained and thus the size of the datamodel training
set of subset-output pairs),
and $d$ (the size of the original tasks training set and thus the input
dimensionality of the regression problem in \eqref{eq:testset_datamodels}).
We therefore built a custom solver leveraging the works of
\citep{wong2021leveraging} and
\citep{leclerc2022ffcv}---details of
our implementation are in Appendix \ref{app:ffcv_regression}.

\subsection{Results: linear datamodels can predict deep network training}
\label{sec:estimation_results}
For both datasets considered (CIFAR-10 and \fmow{}), we minimize objectives
\eqref{eq:testset_datamodels} (respectively, \eqref{eq:trainset_datamodel}) yielding a
datamodel $g_{\theta_i}$ for each example $x_i$ in the test set (respectively, training set).
We now assess the quality of these datamodels in terms of how well they
predict model outputs on
{\em unseen} subsets (i.e., fresh samples from $\mathcal{D}_S$).
We refer to this process as {\em on-distribution} evaluation because we are
interested in subsets $S_i$, sampled from the same distribution $\mathcal{D}_S$ as the
datamodel training set, but {\em not} the exact ones used for estimation.
(In fact, recall that we explicitly held out $m_{test}$ subset-output pairs for
evaluation in Section \ref{sec:constructing}.)

We focus here on the collection of datamodels corresponding to
the CIFAR-10 test set, i.e., a set of linear datamodel parameters
$\{\theta_1,\ldots,\theta_n\}$ corresponding to examples $\{x_1,\ldots,
x_n\}$ for $n = 10,000$ (analogous results for \fmow{} are in
\Cref{app:regression_extra}).
In Figure \ref{fig:ondistribution}, aggregating over both datamodels
$\smash{\{g_{\theta_j}\}_{j=1}^n}$ and heldout subsets $\smash{\{S_i\}_{i=1}^m}$, we compare datamodel predictions
$\smash{\theta_j^\top \mask{S_i}}$ to {\em expected} true model outputs
$\smash{\mathbb{E}[\modeleval{x_j}{S_i}]}$ (which we estimate by training 100 models on
the same subset $S_i$ and averaging their output on $x_j$).
The results show a near-perfect correspondence between datamodel
predictions and ground truth. Thus, for a given target example $x$, we can
accurately predict the outcome of ``training a neural network on a random
($\alpha$-)training subset and computing correct-class margin on $x$'' (a process that
involves hundreds of SGD steps on a non-convex objective)
as a simple {\em linear} function of the characteristic vector of the subset.

\begin{figure}[h]
    \begin{minipage}{0.65\textwidth}
    \centering
    \includegraphics[width=\linewidth]{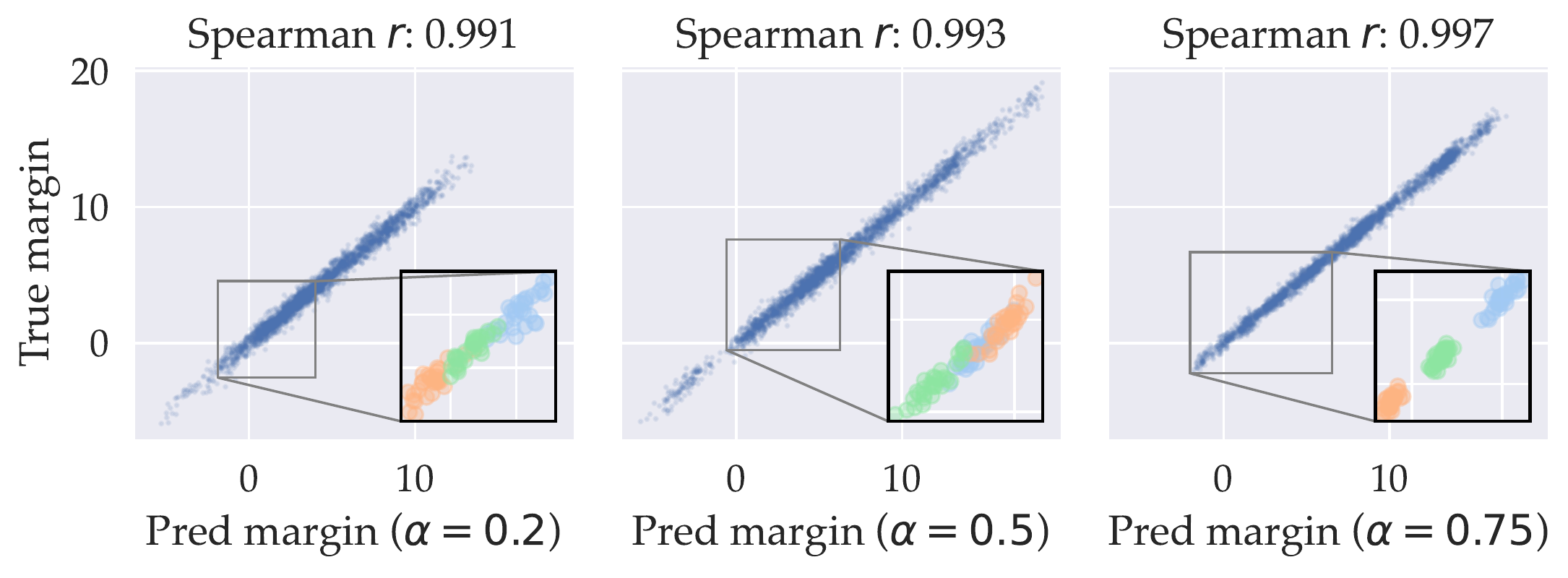}
    \caption{
        {\bf {\em Linear} datamodels accurately predict margins.}
        Each point in the graphs above corresponds to a specific target example
        $x_j$ {\em and} a specific held-out training set $S_i$ from CIFAR-10.
        The $y$-coordinate represents the ground-truth margin $\modeleval{x_j}{S_i}$, averaged across $T$=100 models trained on $S_i$.
        The $x$-coordinate represents the {\em
        datamodel-predicted} value of the same quantity.
        We observe a strong linear correlation (as seen in the main blue line)
        that persists even at the level
        of individual examples (the bottom-right panel shows data for
        three random target examples $x_j$ color-coded by example).
        Corresponding plots for $\alpha = 10\%$ and \fmow{} are in Figures
        \ref{fig:fmow_ondistribution} and \ref{fig:cifar_ondistribution_extra}.
    }
    \label{fig:ondistribution}
    \end{minipage}
    \hspace{1em}
    \begin{minipage}{0.32\textwidth}
        \centering
        \includegraphics[width=\textwidth,trim={0 0 16.85cm 0},clip]{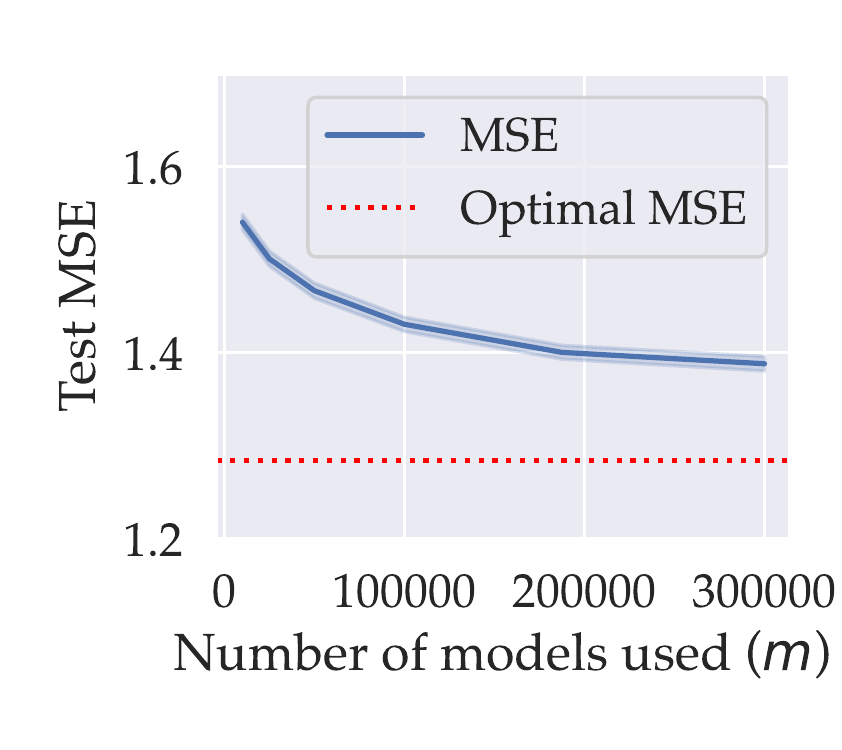}
        \caption{Average mean-squared error (Eqn. \eqref{eq:mse}) for CIFAR-10 test set
        datamodels ($\alpha = 0.5$) as a function of the size of the datamodel
        training set $m$. The
        red line denotes optimal error (Eqn. \eqref{eq:optimal_mse}) based on
        inherent noise in training.}
        \label{fig:sample_complexity}
    \end{minipage}
\end{figure}

\paragraph{Sample complexity.}
We next study the dependence of datamodel estimation on the size of the
datamodel training set $m$. Specifically, we can measure the {\em
on-distribution} average mean-squared error (MSE) as
\begin{align}
   \label{eq:mse}
    \text{MSE}(\{\theta_1,\ldots, \theta_n\}) = \frac{1}{2n} \sum_{j=1}^n \lr{
        \mathbb{E}_{S_i \sim \mathcal{D_S}}\left[
            \lr{\theta_j^\top \mask{S_i} - \modeleval{x_j}{S_i}}^2
        \right]
    }.
\end{align}
To evaluate \eqref{eq:mse}, we replace the inner expectation with an
empirical average, again
using a heldout set of
samples that was not used for estimation.

In Figure \ref{fig:sample_complexity}, we plot average MSE as a function of the
number of trained models $m$.
To put the results into context, we introduce
the {\em optimal mean-squared error loss} (OPT),
which is the MSE \eqref{eq:mse} with datamodel
predictors $\theta_j^\top \mask{S_i}$ replaced by the optimal deterministic
predictors $\mathbb{E}[\modeleval{x_j}{S_i}]$:
\begin{equation}
    \label{eq:optimal_mse}
    \text{OPT} = \frac{1}{2n} \sum_{j=1}^n \lr{
        \mathbb{E}\left[
            \lr{\mathbb{E}\left[\modeleval{x_j}{S_i}\right] - \modeleval{x_j}{S_i}}^2
        \right]
    }.
\end{equation}
Note that OPT is independent of the estimator $g_\theta$ and
measures only the inherent variance in the prediction problem, i.e.,
loss that will necessarily be incurred due only to inherent noise in deep
network training.

\paragraph{The role of regularization.} Finally, in Appendix
\ref{app:regression_extra}, we study the effect of the regularization parameter
$\lambda$ (cf. \eqref{eq:testset_datamodels} and \eqref{eq:trainset_datamodel})
on datamodel performance. In particular, in Figure
\ref{fig:cifar_lambda_effect} we plot the variation in average MSE, on both {\em
on-distribution} subsets (i.e., the exact subsets that we used to optimize
\eqref{eq:testset_datamodels}) and unseen subsets, as we vary the regularization
parameter $\lambda$ in \eqref{eq:testset_datamodels}. We find that---as
predicted by classical learning theory---setting $\lambda = 0$ leads to {\em
overfit} datamodels, i.e., estimators $g_\theta$ that perform well on the exact
subsets that were used to
estimate them, but are poor output predictors on {\em new} subsets $S_i$ sampled
from $\mathcal{D}_S$. (In fact, using $m=300,000$ trained models with $\lambda =
0$ results in higher MSE than using only $m=10,000$ with optimal
$\lambda$, i.e., the left-most datapoint in Figure \ref{fig:estimation}).

\section{Leveraging datamodels}
\label{sec:applications}
Now that we have introduced (Section \ref{sec:intro}), instantiated (Section
\ref{sec:constructing}), and implemented (Section \ref{sec:specializing}) the
datamodeling framework, we turn to some of its applications.
Specifically, we will now show how to apply datamodels within
three different contexts:

\paragraph{Counterfactual prediction.}
We originally constructed
datamodels to predict the outcome of training a model on {\em random} ($\alpha$-)subsets
of the training set. However, it turns out that we can also use datamodels to predict
model outputs on {\em arbitrary} subsets (i.e., subsets that are
``off-distribution'' from the perspective of the datamodel prediction task).
To illustrate the utility of this capability, we will use datamodels to (a)
identify predictions that are {\em brittle} to removal of relatively few
training points, and (b) estimate {\em data counterfactuals}, i.e., the
causal effects of removing groups of training examples.

\paragraph{Train-test similarity.}
We demonstrate that datamodels can identify, for any given target example $x$, a
set of visually similar examples in the training data.
Leveraging this ability, we will identify instances of {\em train-test
leakage}, i.e., when test examples are duplicated (or nearly duplicated) within
the training set.

\paragraph{Data embeddings.}
We show that for a given target example $x$ with corresponding
datamodel $g_\theta$, we can use the parameters $\theta$ of the datamodel as an
{\em embedding} of the target example $x$ into $\mathbb{R}^d$ (where $d$ is the
training set size).
By applying two standard data exploration techniques to such embeddings, we
demonstrate that they capture {\em latent structure} within the data, and enable us to
find model-relevant {\em data subpopulations}.

\subsection{Counterfactual prediction}
\label{sec:causal_view}
So far, we have computed and evaluated datamodels entirely within
a supervised learning framework. In particular, we constructed datamodels with
the goal of predicting the outcome of training on {\em random} subsets
of the training set (sampled from a distribution $\mathcal{D}_S$
\eqref{eq:distribution}) and evaluating on a fixed target example $x$.
Accordingly, for each target example $x$, we evaluated its datamodel $g_\theta$
by (a) sampling new random subsets $S_i$ (from the same distribution);
(b) training (a neural network) on each one of these subsets; (c) 
measuring correct-class margin on the target example
$x$; and (d) comparing the results to the datamodel's predictions (namely,
$g_\theta(S_i)$) in terms of {\em expected} mean-squared error (see
\eqref{eq:mse}) over the distribution of subsets.

We will now go beyond this framework, and use datamodels to predict
the outcome of training on {\em arbitrary} subsets of the training set.
In particular, consider a fixed target example $x$ with corresponding datamodel
$g_\theta$. For any subset $S'$ of the training set $S$, we will use the
{\em datamodel-predicted} outcome of training on $S'$ and evaluating on $x$,
i.e., \(
    g_\theta(\mask{S'}),
\)
in place of the {\em ground-truth} outcome
\(
    \modeleval{x}{S'}.
\)
Since $S'$ is an {\em arbitrary} subset of the training set, it is ``out-of-distribution''
with respect to the distribution of fixed-size subsets $\mathcal{D}_S$ that we
designed the datamodel to operate on. As such, using datamodel predictions
in place of end-to-end-model training in this manner is not a priori
guaranteed to work.
Nevertheless, we will demonstrate through two applications that datamodels
{\em can} in fact be effective proxies for end-to-end model training, even for
such out-of-distribution subsets.

\begin{appmode}[Proxy for end-to-end training]
    \vspace{-0.5em}
    We can use datamodel predictions as an efficient, closed-form
    proxy for end-to-end model training. That is, for a test example $x$ with
    datamodel $g_\theta$, and an \underline{arbitrary} subset $S'$ of the
    training set $S$, we can leverage the approximation
    \begin{align*}
        \modeleval{x}{S'} &\approx g_\theta(\mask{S'})
    \end{align*}
\end{appmode}

\subsubsection{Measuring brittleness of individual predictions to training data removal}
\label{subsec:brittleness}
We first illustrate the utility of datamodels as a proxy for model training by
using them to answer the question: {\em how brittle are model predictions to
removing training data?}
While all useful learning algorithms are data-dependent, cases
where model behavior is sensitive to just a few data points are often of
particular interest or concern
\citep{broderick2021automatic,dwork2006our}.
To quantify such sensitivity, we define the {\em data support} $\supp(x)$ of a
target example $x$ as
\begin{align}
    \label{eq:decision_support}
    \supp(x) = \parbox[t]{11cm}{\raggedright
    the smallest training
    subset $R \subset S$ such that classifiers trained on $S \setminus R$ misclassify
    $x$ on average.\footnotemark}
\end{align}
\footnotetext{We define misclassification here as having expected margin
(Eq. \eqref{eq:margin}) less than $0$.}

Intuitively, examples with a small data support are the examples for
which removing a small subset of the training data significantly changes model
behavior, i.e., they are ``brittle'' examples by our criterion of interest. By
computing $\supp(x)$ for every image in the test set, we can thus get an idea of
how brittle model predictions are to removing training data.

\paragraph{Computing data support.}
One way to compute $\supp(x)$ for a given target example $x$
would be to train several models on every possible subset of the training set
$S$, then report the largest subset for which the example was misclassified on
average---the complement of this set would be {\em exactly}
$\supp(x)$. However, exhaustively computing data support in this manner is
simply intractable.

Using datamodels as a proxy for end-to-end model
training provides an (efficient) alternative approach. Specifically, rather
than training models on every possible subset of the training set, we can use
datamodel-predicted outputs $g_\theta(S')$ to perform a {\em guided search},
and only train on subsets for which predicted margin on the target example is
small. This strategy (described in detail in Algorithm
\ref{alg:guided_search} and in Appendix \ref{app:brittleness})
allows us to compute estimates of the data
support while training only a handful of models per target example.

\paragraph{Results.} We apply our algorithm to estimate $\supp(x)$ for 300
random target examples in the CIFAR-10 test set. For over 90\% of these 300
examples, we are able to {\em certify} that our estimated data
support is {\em strictly larger than} the true data support $\supp(x)$
(i.e., that we are not over-estimating brittleness) by training several models
after excluding the estimated data support and checking that the target example is indeed
misclassified on average.

We plot the distribution of estimated data support sizes in Figure
\ref{fig:cifar_brittleness}. Around {\em half} of the CIFAR-10 test images have
a datamodel-estimated data support comprising 250 images or less, meaning that
removing a specific 0.4\% of the CIFAR-10 training set induces
misclassification. Similarly, 20\% of the images had an estimated data support
of less than {\em 40 training images} (which corresponds to {\em 0.08\% of the
training set}).

\begin{figure}[h]
    \centering
    \begin{minipage}{0.47\textwidth}
    \includegraphics[width=\linewidth,trim={0 0 0 0},clip]{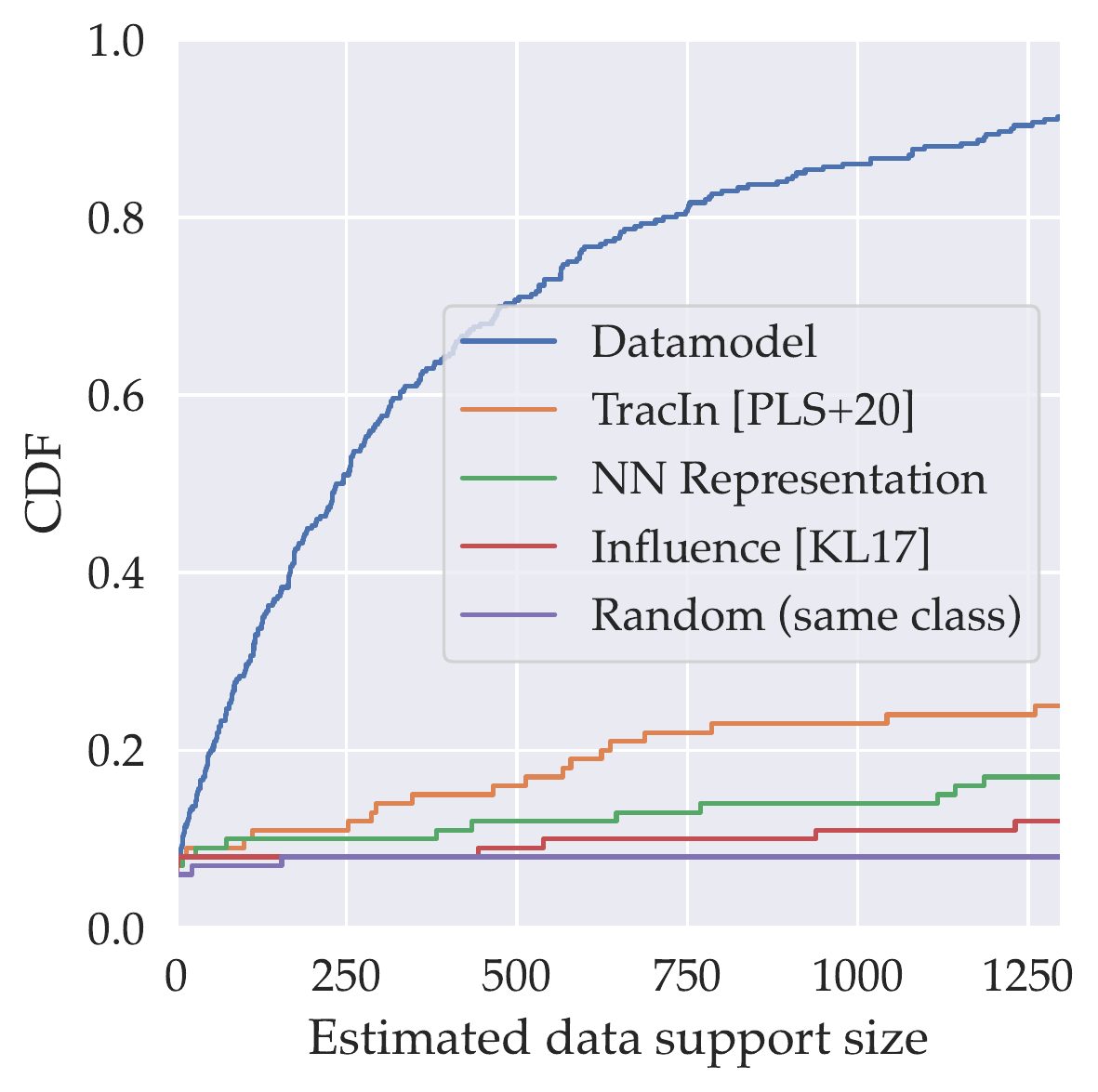}
    \caption{{\bf Characterizing brittleness.}
    We use datamodels to estimate {\em data support}
    (i.e., the minimal set of training examples whose removal causes misclassification)
    for 300 random CIFAR-10 test examples, and plot the cumulative
    distribution of estimated sizes.
    Over 25\% of examples can be
    misclassified by removing \emph{less than 100} (example-specific) training images.
    Also, datamodels yield substantially better upper bounds on support size than
    baselines.}
    \label{fig:cifar_brittleness}
    \end{minipage}
    \hspace{1em}
    \begin{minipage}{0.49\textwidth}
        \captionof{algorithm}{The (intractable) exhaustive search
        algorithm (top) and (efficient) datamodel-guided algorithm (bottom) for
        estimating data support.
        Note that in our linear datamodel setting,
        $G$ (the quantity on line 4 of the second algorithm) 
        actually has a closed-form solution: it is the subset of the
        training set corresponding to the largest $k$ indices of the datamodel
        parameter $\theta$.
        See \Cref{app:brittleness} for implementation and setup details.}
        \label{alg:guided_search}
        \input{paper_figs/brittleness_algorithm}
    \end{minipage}
\end{figure}

\paragraph{Baselines.}
To contextualize these findings, we compare our estimates of data
support to a few natural baselines.
We provide the exact comparison setup in Appendix \ref{app:brittleness_baselines}: in
summary, each baseline technique can be cast as a swap-in alternative to
datamodels for guiding the data support search described above.

It turns out that every baseline we tested provides much
looser estimates of data support (Figure~\ref{fig:cifar_brittleness}).
For example, even the best-performing baseline predicts that one would need to
remove over 600 training images per test image to force misclassification on
20\% of the test set\footnote{Moreover, the data support estimates derived from
the baselines are only ``certifiable'' in the above-described sense (see the
beginning of the ``Results'' paragraph) for 60\% of
the 300 test examples we study (as opposed to 90\% for datamodel-derived estimates).}.
In contrast, our datamodel-guided estimates
indicate that removing 40 train examples is sufficient for misclassifying 20\% of test examples.

\paragraph{Removing versus mislabeling.} Note that the
brittleness we consider in this section (i.e., brittleness to {\em removing}
training examples) is substantively different than brittleness to {\em
mislabeling} examples (as in {\em label-flipping attacks}
\citep{koh2017understanding,xiao2012adversarial,rosenfeld2020certified}).
In particular, brittleness to removal
indicates that there exists a small set of training images whose presence is
{\em necessary} for correct classification of the target example (thus motivating
the term ``data support''). Meanwhile, label-flipping attacks can succeed even
when the target example has a large data support,
as (consistently) mislabeling a set of training examples provide a much stronger signal than simply removing them.
Nevertheless, we can easily adapt the above experiment to test brittleness
to mislabeling---we do so in
Appendix \ref{app:brittleness_flipping}. As one might
expect, test predictions are even {\em more} brittle to
data mislabeling than removal---for 50\% of the CIFAR-10 test set, mislabeling
{\em 35 target-specific training examples} suffices to flip the corresponding
prediction (see Figure~\ref{fig:cifar_flippy} for a CDF).

\subsubsection{Predicting data counterfactuals}
\label{subsec:counterfactuals}
As we have already seen, a simple application of datamodels as a proxy for model
training (on {\em arbitrary} subsets of the training set)
enabled us to identify brittle predictions.
We now demonstrate another, more intricate application of
datamodels as a proxy for end-to-end training: predicting
{\em data counterfactuals}.

For a fixed target example $x$, and a specific subset of the training set
$R(x) \subset S$, a data counterfactual is the causal effect of
removing the set of examples $R(x)$
on model outputs for $x$. In terms of our notation, this effect is precisely
\[
    \mathbb{E}\left[
        \modeleval{x}{S} - \modeleval{x}{S\setminus R(x)}
    \right].
\]
Such data counterfactuals can be helpful tools for finding brittle
predictions (as in the previous subsection), estimating {\em group influence}
(as done by \citep{koh2019accuracy} for linear models), and more broadly for
understanding how training examples combine (through the lens of the
model class) to produce test-time predictions.

\paragraph{Estimating data counterfactuals.} Just as in the last section,
we again use datamodels beyond the supervised learning regime in which they
were developed. In particular, we predict the outcome of a data
counterfactual as
\[
    g_\theta(\mask{S}) - g_\theta(\mask{S \setminus R(x)}),
\]
where again $g_\theta$ is the datamodel for a given target example of interest.
Since $g_\theta$ is a linear function in our case, the above {\em predicted data
counterfactual} actually simplifies to
\[
    \theta^\top \mask{S} - \theta^\top \mask{S \setminus R(x)} = \theta^\top \mask{R(x)}.
\]
Our goal now is to demonstrate that datamodels are useful
predictors of data counterfactuals across a variety of removed sets $R(x)$.
To accomplish this, we use a large set of target examples. Specifically, for each such target
example, we consider different subset sizes $k$; for each such $k$, we
use a variety of heuristics to select a set $R(x)$ comprising $k$ ``examples of
interest.'' These heuristics are:
\begin{itemize}
    \item[(a)] setting $R(x)$ to be the nearest $k$ training examples to the target
    example $x$ in terms of {\em influence score} \citep{koh2017understanding},
    {\em TracIn score} \citep{pruthi2020estimating}, or {\em distance in
    pre-trained representation space}
    \citep{bengio2013representation}\footnote{Note that these methods are
    precisely the ones used as baselines in the previous section.};
    \item[(b)] setting $R(x)$ to be the {\em maximizer} of the
    datamodel-predicted counterfactual, i.e.,
    $$R(x) = \arg\max_{|R|=k} g_\theta(S) - g_\theta(S \setminus R) = \arg\max_{|R|=k}
    \theta^\top \mask{R}.$$
    (Note that since our datamodels are linear, this
    simplifies to excluding the training examples corresponding to the top
    $k$ coordinates of the datamodel parameter $\theta$.)
    \item[(c)] setting $R(x)$ to be the training images corresponding to the {\em
    bottom} (i.e., most negative) $k$ coordinates of the datamodel weight $\theta$.
\end{itemize}
We consider six values of $k$ (the size of the removed subset) ranging from $10$
to $1280$ examples (i.e., $0.02\%-2.6\%$ of the training set).
Thus, the outcome of our procedure is, for each target example,
both {\em true} and {\em datamodel-predicted} data counterfactuals for {\em 30
different training subsets $R(x)$} (six values of $k$ and five different
heuristics).

\paragraph{Results.} In Figure~\ref{fig:cifar_causal}, we plot
datamodel-predicted data counterfactuals against true data counterfactuals,
aggregating across all target examples $x$, values of $k$, and selection
heuristics for $R(x)$.
We find a strong correlation between these two quantities. In particular, across
all factors of variation, predicted and true data counterfactuals have Spearman
correlation $\rho=0.98$ and $\rho=0.94$ for CIFAR-10 and \fmow{} respectively.
In fact, the two quantities are correlated roughly {\em linearly}: we obtain
(Pearson) correlations of $r=0.96$ (CIFAR-10) and $r=0.90$
(\fmow{}) between counterfactuals and their estimates on aggregate.
Correlations are even more pronounced when restricting to any single class of
removed sets (i.e., any single hue in Figure \ref{fig:cifar_causal}).

\begin{figure}[!htbp]
    \centering
    \includegraphics[width=.8\textwidth]{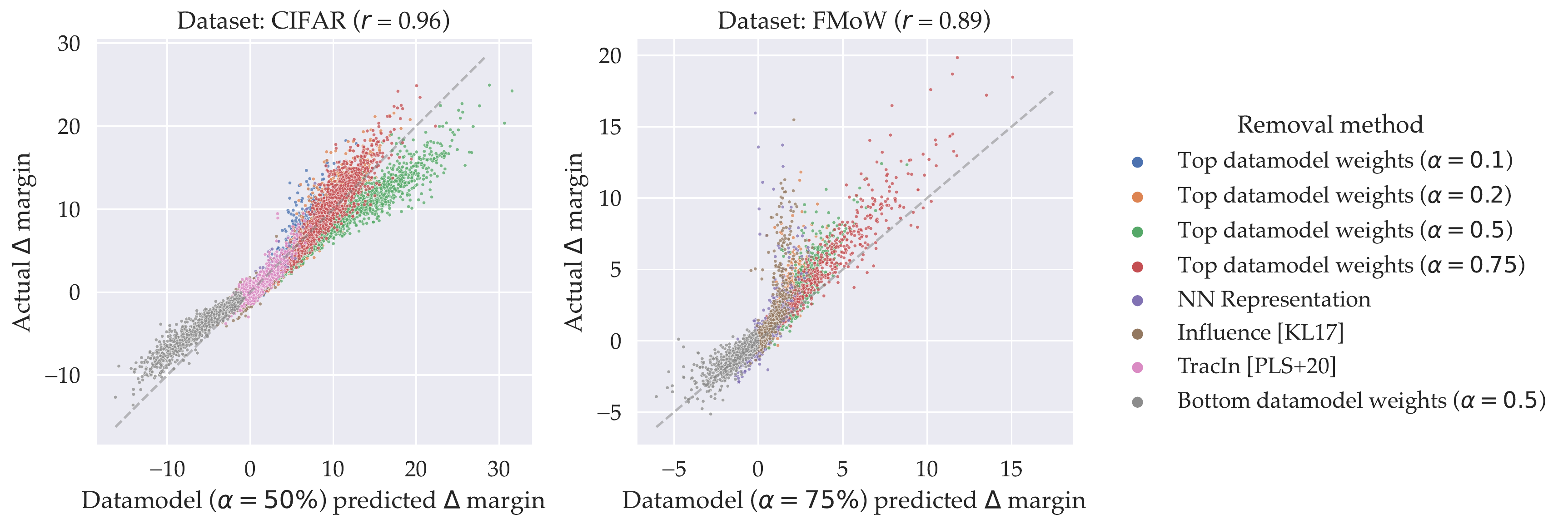}
   \caption{
    {\bf Datamodels predict data counterfactuals.}
    Each point in the graphs above corresponds to a test example and a
    subset $R(x)$ of the original training set $S$, identified by a
    (color-coded) heuristic.
    The $y$-coordinate of each point represents the {\em true}
    difference, in terms of model output on $x$, between training on $S$, and
    training on $S \setminus R(x)$. The $x$-coordinate of each point represents the {\em
    datamodel-predicted} value of this quantity. We plot results for
    \textbf{(right)} CIFAR-10 and
    \textbf{(left)} \fmow{}.
    Datamodel predictions are predictive of the underlying counterfactuals,
    with Pearson coefficients $r$ being 0.96/0.90 for CIFAR/\fmow{} respectively.
    Predictions are computed with datamodels estimated with $\alpha=0.5$  for CIFAR-10 and $\alpha=0.75$ for \fmow{} (cf. \Cref{app:causal_more_alpha} other values of $\alpha$).
    See Appendix~\ref{app:causal} for more experimental details and results.}
    \label{fig:cifar_causal}
\end{figure}

\paragraph{Limits of datamodel predictions.}
We have seen that datamodels accurately predict the
outcome of many natural data counterfactuals,
despite only being constructed to predict outcomes for random
subsets of a fixed size ($\alpha \cdot d$ for $\alpha \in (0, 1)$ and $d$
the training set size).
Of course, due to both estimation error (i.e., we might not have trained
enough models to identify {\em optimal} linear datamodels)
and misspecification error (i.e., the optimal datamodel might not be
linear), we don't expect a perfect correspondence between datamodel-predicted outputs
$g_\theta(\mask{S'})$ and true outputs $f(x;S)$ for {\em all} $2^d$ possible
subsets of the training set.
Indeed, this is part of the reason why we estimated datamodels for several
values of $\alpha$, only one of which is shown in
Figure~\ref{fig:cifar_causal}. As
shown in Appendix \ref{app:causal_more_alpha},
datamodels estimated for other values of $\alpha$ 
still display strong correlation between true and predicted model outputs, but
behave qualitatively differently than the ones shown above (i.e., each value of
$\alpha$ is better or worse at predicting the outcomes of certain types of
counterfactuals).

\subsection{Using datamodels to find similar training examples}
\label{sec:distance_metric}
We now turn to another application of datamodels: identifying training
examples that are similar to a given test example.
One can use this primitive to identify issues in
datasets such as duplicated training examples \citep{lee2021deduplicating}
or train-test leakage \citep{barz2020train} (test examples that have near-duplicates
in the training set).

Recall that in our instantiation of the framework,
datamodels predict model output (for a fixed target example) as a
{\em linear} function of the presence of each training 
example in the training set. That is, we predict
the output of training on a subset $S'$ of the training set $S$ as
\[
     g_\theta(\mask{S'}) = \theta^\top \mask{S'}.
\]
A benefit of parameterizing datamodels as simple linear functions is that we can
use the magnitude of the coordinates of $\theta$ to ascertain {\em feature
importance} \citep{guyon2003introduction}. In particular, since in our case each
feature coordinate (i.e., each coordinate of $\mask{S'}$)
actually represents the presence of a particular training example, we can
interpret the highest-magnitude coordinates of $\theta$ as the indices of the
training examples whose presence (or absence) is most predictive of model
behavior (again, on the fixed target example in context).

We now show that these high-magnitude training examples
(a) they visually resemble the target image, yielding a
method for finding similar training examples to a given target; and (b) as a
result, datamodels can automatically detect train-test leakage.
\begin{appmode}[Train-test similarity]
     \vspace*{-0.5em}
     For a test example $x$ with a linear datamodel $g_\theta$, we can interpret the training
     examples corresponding to the highest-magnitude coordinates of $\theta$ as
     the ``nearest neighbors'' of $x$.
\end{appmode}

\subsubsection{Finding similar training examples}
Motivated by the feature importance perspective described above,
we visualize (in Figures~\ref{fig:largest_mag} and \ref{appfig:posneg}) a random
set of target examples from the CIFAR-10 test set
together with the CIFAR-10 training images that correspond to the
highest-magnitude datamodel coordinates for each test image.

\vspace*{-0.5em}
\paragraph{Results.} We find that for a given target example, the highest-magnitude
datamodel coordinates---both positive and negative---consistently correspond to
visually similar training examples.

\begin{figure}[hb]
     \centering
     \includegraphics[width=\textwidth]{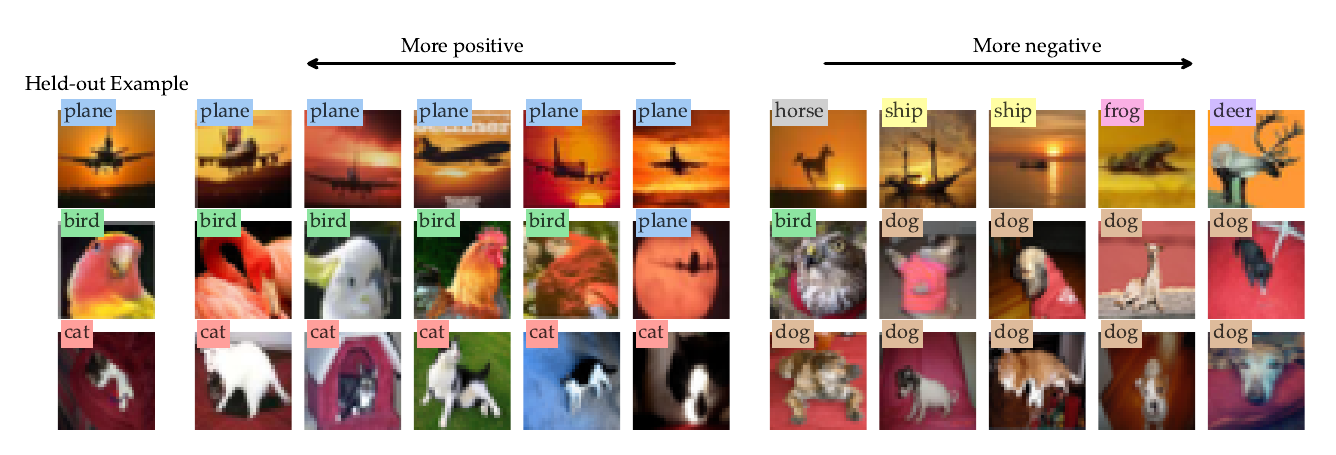}
     \caption{{\bf Large datamodel weights correspond to similar images.}
     Randomly choosing test examples and visualizing their most negative- and
     positive-weight examples for $\alpha=50\%$, we find that large magnitude
     train examples share similarities with their test examples. Top
     negative weights generally correspond to visually similar images from
     other classes. See Appendix~\ref{app:nns} for more examples.}
     \label{fig:largest_mag}
\end{figure}

Furthermore, the exact training images that are surfaced by looking at
high-magnitude weights differ depending on the subsampling parameter
$\alpha$ that we use while constructing the datamodels.
(Recall from Section \ref{sec:constructing} that $\alpha$ controls the
size of the random subsets used to collect the datamodel training set---a
datamodel estimated with parameter $\alpha$ is constructed to predict
outcomes of training on random training subsets of size $\alpha \cdot d$, where
$d$ is the training set size.)
In Figure~\ref{fig:diff_alphas} (and \ref{appfig:more_alphas}), we consider a
pair of target examples from the CIFAR-10 test set, and, for each target example,
compare the top training images from two different datamodels: one estimated
using $\alpha=10\%$, and the other using $\alpha = 50\%$. 
We find that in some cases (e.g., Figure \ref{fig:diff_alphas} left),
the $\alpha=10\%$ datamodel identifies training images that are highly similar
to the target example
but do not correspond to the highest-magnitude coordinates for the
$\alpha=50\%$ datamodel (in other cases, the reverse is true).
Our hypothesis here---which we expand upon in
Section~\ref{sec:alpha_choice}---is that datamodels estimated with lower
$\alpha$ (i.e., based on smaller random training subsets) find
train-test relationships driven by larger groups of examples (and vice-versa).

\begin{figure}[h]
     \centering
     \includegraphics[width=\textwidth]{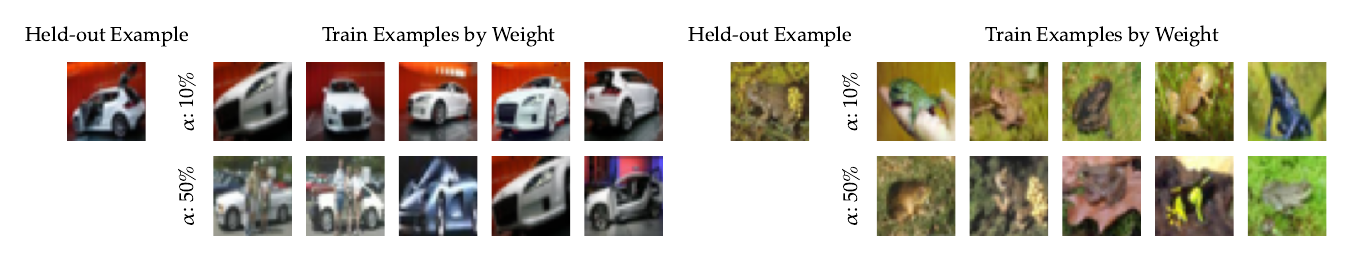}
     \caption{{\bf Datamodels corresponding to different $\alpha$ surface qualitatively
     different images}. For each target example (taken from the
     CIFAR-10 test set), we consider two different datamodels: one estimated
     with $\alpha = 10\%$ (i.e., constructed to predict model outputs on the
     target example after training on random $10\%$ subsets of the CIFAR-10
     training set), and the other estimated with $\alpha = 50\%$.
     For each datamodel, we visualize the training examples corresponding to the largest
     coordinates of the parameter vector $\theta$. On the left we see an example
     where the datamodel estimated with $\alpha=10\%$ (top row) detects a set of
     near-duplicates of the target example that the $\alpha=50\%$ datamodel
     (bottom row) does
     not identify. See Appendix~\ref{app:nns} for more examples.}
     \label{fig:diff_alphas}
\end{figure}

\paragraph{Influence functions.} Another method for finding similar training
images is {\em influence functions}, which aim to estimate the effect of
removing a single training image on the loss (or correctness) for a given test
image. A standard technique from robust statistics \citep{hampel2011robust} (applied
to deep networks by \citet{koh2017understanding}) uses first-order approximation to estimate
influence of each training example. We find (cf. Appendix Figure \ref{appfig:nn_baseline}),
that the high-influence and low-influence examples yielded by this approximation
(and similar methods) often fail to find similar training examples for a given
test example (also see \citep{basu2021influence,hanawa2021evaluation}).

Another approach based on {\em empirical} influence approximation was used by
\citet{feldman2020neural}, who (successfully) use their estimates to identify similar
train-test pairs in image datasets as we do above. We discuss empirical
influence approximation and its connection with datamodeling in Section
\ref{sec:infl_connection}.

\subsubsection{Identifying train-test leakage}
\label{sec:ttl}
We now leverage datamodels' ability to surface training examples similar to a
given target in order to identify {\em same-scene} train-test leakage:
cases where test examples are near-duplicates of,
or clearly come from the same scene as, training examples.
Below,
we use datamodels to uncover evidence of train-test leakage on both CIFAR
and \fmow{}, and show that datamodels
outperform a natural baseline for this task.

\paragraph{Train-test leakage in CIFAR.}
To find train-test leakage in CIFAR-10, we collect ten candidate training
examples for each image in the CIFAR-10 test set---the ones corresponding to the ten
largest coordinates of the test example's datamodel parameter.
We then show crowd annotators (using Amazon Mechanical Turk) tasks that consist
of a random CIFAR-10 test example accompanied by its candidate training
examples. We ask the annotators to label any of the candidate training images
that constitute instances of same-scene leakage (as defined above).
We show each task (i.e., each test example) to multiple annotators, and compute the
``annotation score'' for each of the test example's candidate training examples
as the fraction of annotators who marked it as an instance of leakage.
Finally, we compute the ``leakage score'' for each test
example as the highest annotation score (over all of its candidate train
images). We use the leakage score as a proxy for whether or not the given image
constitutes train-test leakage.

In Figure~\ref{fig:cifar_ttl}, we plot the distribution of leakage scores over
the CIFAR-10 test set, along with random train-test pairs stratified
by their annotation score. As the annotation score
increases, pairs (qualitatively) appear more likely to correspond to leakage
(see Appendix~\ref{app:ttl} for more pairs).
Furthermore,
{\em roughly 10\% of test set images were labeled as train-test leakage
by over half of the annotators that reviewed them}.

\begin{figure}[!h]
     \centering
     \includegraphics[width=\textwidth]{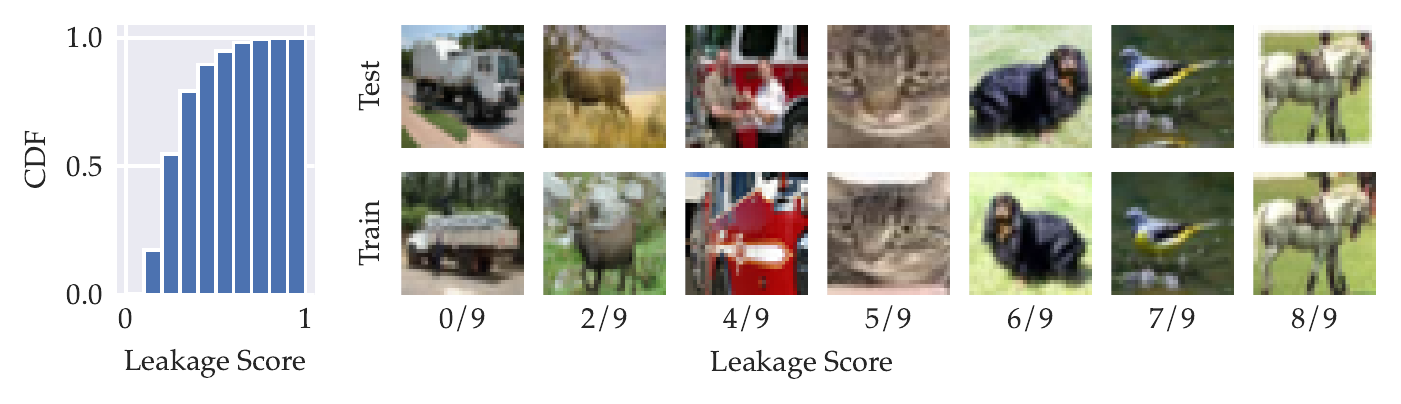}
     \caption{{\bf Finding CIFAR train-test leakage candidates with datamodels.}
     Nine MTurk annotators view each test image alongside the
     train images with largest datamodel weight. The annotators
     then select the train images judged as belonging to the same
     scene as the test image. We measure the {\em annotation score}
     for a given (train, test) pair as the frequency with which annotators
     selected the pair as clearly coming from the same scene. The {\em leakage score} for a test image is defined as the maximum annotation score over all of its candidate train images. (See Appendix
     Section~\ref{app:ttl} for a more detailed setup.)
     {\bf (Left)} Histogram of the leakage score for each image of
     the CIFAR test set.
     {\bf (Right)} Train-test pairs stratified by their leakage score.
     A majority of annotators (annotation score of more than $\frac{1}{2}$)
     consider 10\% of the test set as train-test leakage.
     Many of these pairs are near-duplicate; see more examples in
     Appendix~\ref{app:ttl}.}
     \label{fig:cifar_ttl}
\end{figure}

\begin{figure}[!h]
     \centering
     \includegraphics[width=.9\textwidth]{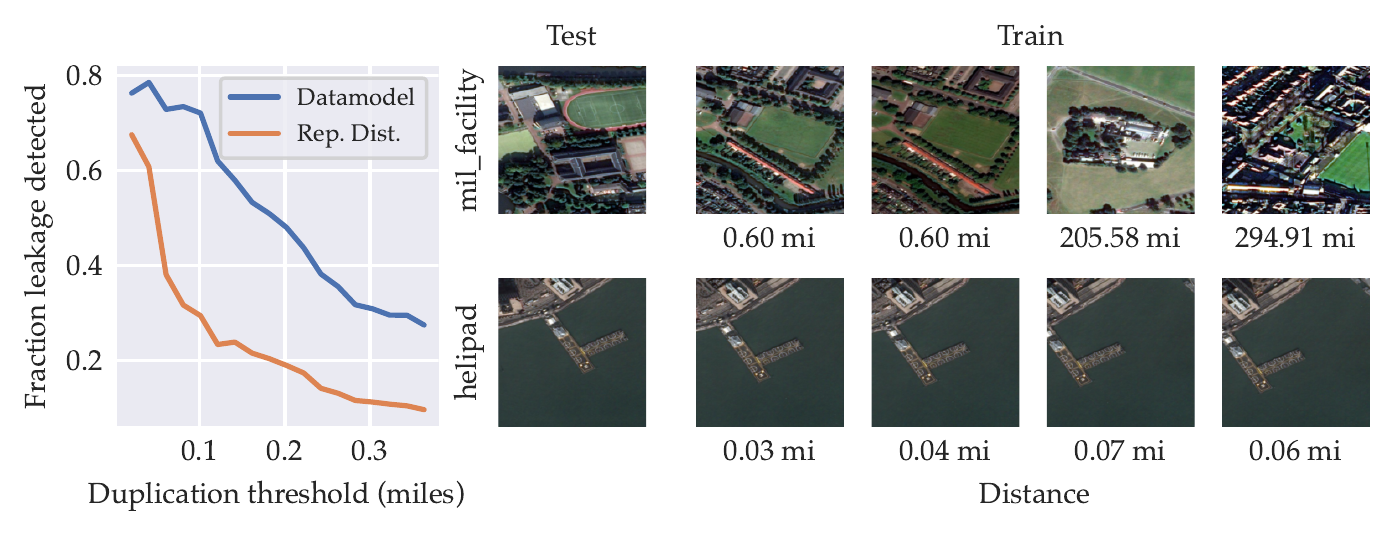}
     \caption{{\bf Datamodels detect same-scene train-test leakage on \fmow{}.}
     \fmow{} images are annotated with geographic
     coordinates. For any distance $d$, we call a test image $x$
     ``leaked'' if it is within $d$ miles of {\em any} training image $x_s$.
     A leaked test image $x$ is considered ``detected'' if the corresponding
     training image $x_s$ has one of the 10 largest datamodel weights
     for $x$.
     {\bf (Left)} With $d$ on the x-axis, we plot the fraction of
     leaked test images that are also detected. As a baseline, we replace
     datamodel weights with (negative) distances in neural network
     representation space.
     {\bf (Right)} for two test examples (top: random; bottom: selected),
     we show the most similar train examples (by datamodel weight),
     labeled by their distance to the test example.}
     \label{fig:fmow_ttl}
\end{figure}

\paragraph{Train-test leakage in \fmow{}.}
To identify train-test leakage on \fmow{}, we begin with the same
candidate-finding process that we used for CIFAR-10.
However, \fmow{} differs from CIFAR in that
the examples (satellite images labeled by category,
e.g., ``port'' or ``arena'') are annotated with {\em geographic coordinates}.
These coordinates allow us to avoid crowdsourcing---instead, we compute the
geodesic distance between the test image and each of the candidates, and use a
simple threshold $d$ (in miles) to decide whether a given test example
constitutes train-test leakage.

Furthermore, we can calculate a ``ground-truth''
number of train-test leakage instances by counting the test
examples whose {\em geodesic} nearest-neighbor in the training set is within the
specified threshold~$d$.\footnote{
It turns out that despite having already been de-duplicated,
about 20\% and 80\% of \fmow{} test images are within 0.25 and 2.6 miles
of a training image, respectively---see Appendix Figure~\ref{appfig:fmow_cdf}.
}
Comparing this ground truth to the number of instances of leakage found within
the candidate examples yields a qualitative measure of the efficacy of our
method (i.e., the quality of candidates we generate).

In Figure \ref{fig:fmow_ttl}, we plot this measure of efficacy (\# instances
found / \# ground truth) as a function of the threshold $d$, and also visualize
examples images from the \fmow{} test set together with their corresponding
datamodel-identified training set candidates. To put our quantitative results
into context, we compare the efficacy of candidates derived from top datamodel
coordinates (i.e., the ones we use here and for CIFAR-10) to that of candidates
derived from {\em nearest neighbors} in the representation space of a pretrained
neural network \citep{bengio2013representation,zhang2018unreasonable}
(examining such nearest neighbors is a standard way of finding
train-test leakage, e.g., used by \citep{barz2020train} to study CIFAR-10 and
CIFAR-100). Datamodels consistently outperform this baseline.

\subsection{Using datamodels as a feature embedding}
\label{sec:rep_view}
Sections \ref{sec:causal_view} and \ref{sec:distance_metric} illustrate the
utility of datamodels on a {\em per-example} level, i.e., for predicting
the outcome of training on arbitrary training subsets and evaluating on a
specific target example, or for finding similar training images (again, to a
specific target).
We'll conclude this section by demonstrating that datamodels can also help
uncover {\em global structure} in datasets of interest.

Key to this capability is
the following shift in perspective.
Consider a target example $x$ with corresponding {\em linear} datamodel
$g_\theta$, and recall that the datamodel is parameterized by a vector $\theta
\in \mathbb{R}^d$, where $d$ is the training set size.
We optimized $\theta$ to minimize the squared-error of $g_\theta$
\eqref{eq:testset_datamodels} when predicting the outcome of training on random
training subsets and evaluating on the target $x$.
Now, however, instead of viewing the vector $\theta$ as just a parameter of the
predictor $g_\theta$, we cast it as a {\em feature representation} for
the target example itself, i.e., a {\em datamodel embedding} of $x$ into
$\mathbb{R}^d$.
Since the datamodel $g_\theta$ is a linear function of the
presence of each point in the training set, each coordinate of this ``datamodel
embedding'' corresponds to a weight for a specific training example.
One can thus think of a datamodel embedding as a feature vector that
represents a target example $x$ in terms of how predictive each training
example is of model behavior on $x$.

Critically, the coordinates of a datamodel embedding have a {\em consistent}
interpretation across datamodel embeddings, even for different target examples.
That is, we expect similar target examples to be acted upon similarly by the
training set, and thus have similar datamodel embeddings. In the same way, if
model performance on two unrelated target examples is driven by two disjoint
sets of training examples, their datamodel embeddings will be orthogonal.
This intuition suggests that by embedding an entire dataset of examples
$\{x_i\}$ as a set of feature vectors $\{\theta_i \in \mathbb{R}^d\}$, we may be
able to uncover structure in the set of examples by looking for structure in
their datamodel embeddings, i.e., in the (Euclidean) space $\mathbb{R}^d$.

In this section we demonstrate, through two applications, the potential for
such datamodel embeddings to discover dataset structure in this way. In
Section \ref{subsec:pca}, we use datamodel embeddings to partition datasets into
disjoint clusters, and in Section \ref{sec:spectral} we use principal component
analysis to get more fine-grained insights into dataset structure. To emphasize
our shift in perspective (i.e., from $\theta$ being just a parameter of a
datamodel $g_\theta$, to $\theta$ being an embedding for the target example
$x$), we introduce an {\em embedding function} $\varphi(x) \mapsto \theta$ which
maps a particular target example to the weights of its corresponding datamodel.

\begin{appmode}[Datamodel embeddings]
    \vspace*{-0.5em}
    We can use datamodels as a way to embed any given target example into the
    same (Euclidean) space $\mathbb{R}^d$, where $d$ is the training set size.
    Specifically, we can define the {\em datamodel embedding} $\varphi(x)$ for a
    target example $x$ as the weight vector $\theta \in \mathbb{R}$ of the
    datamodel corresponding to $x$.
\end{appmode}

\subsubsection{Spectral clustering with datamodel embeddings}
\label{sec:spectral}
We begin with a simple application of datamodel embeddings, and show that they
enable high-quality clustering.
Specifically, given two examples $x_1$ and $x_2$, datamodel embeddings induce a
natural {\em similarity measure} between them:
\begin{equation}
    \label{eq:datamodel_sim}
    d(x_1, x_2) \coloneqq K(\varphi(x_1), \varphi(x_2)),
\end{equation}
where we recall that $\varphi(\cdot)$ is the {\em datamodel embedding function}
mapping target examples to the weights of their corresponding datamodels, and
$K(\cdot, \cdot)$ is any kernel function (below, we use the RBF kernel)\footnote{A
kernel function $K(\cdot, \cdot)$ is a similarity measure that
computes the inner product between its two arguments in a transformed inner product
space (see \citep{shawetaylor2004kernel} for an introduction). The
RBF kernel is $K(v_1, v_2) = \exp\{-\|v_1-v_2\|^2/2\sigma^2\}$}.
Taking this even further, for a set of $k$ target examples $\{x_1, \ldots, x_k\}$,
we can compute a full
{\em similarity matrix} $A \in \mathbb{R}^{k \times k}$, whose entries are
\begin{equation}
    A_{ij} = d(x_i, x_j).
\end{equation}
Finally, we can view this similarity matrix as an {\em adjacency matrix} for a
(dense) graph connecting all the examples $\{x_1,\ldots x_k\}$: the edge between
two examples will be $d(x_i, x_j)$, which is in turn the kernelized inner
product between their two datamodel weights. We expect similar examples to
have high-weight edges between them, and unrelated examples to have (nearly)
zero-weight edges between them.

Such a graph unlocks a myriad of graph-theoretic tools for exploring datasets
through the lens of datamodels (e.g., cliques in this graph should be
examples for which model behavior is driven by the same subset of training
examples).
However, a complete exploration of these tools is beyond the scope of our work:
instead, we focus on just one such tool: spectral clustering.

At a high level, spectral clustering is an algorithm that takes as input any
similarity graph $G$ as well as the number of clusters $C$, and outputs a
partitioning of the vertices of $G$ into $C$ disjoint subsets, in a way that
(roughly) minimizes the total weight of inter-cluster edges.
We run an off-the-shelf spectral clustering
algorithm on the graph induced by the similarity matrix $A$ above for the images
in the CIFAR-10 test set.
The result (\Cref{fig:clustering} and \Cref{app:spectral}) demonstrates a simple unsupervised method
for uncovering subpopulations in datasets.

\begin{figure}[!htbp]
    \centering
    \includegraphics[width=.9\linewidth,trim={0 2.5cm 0 0},clip]{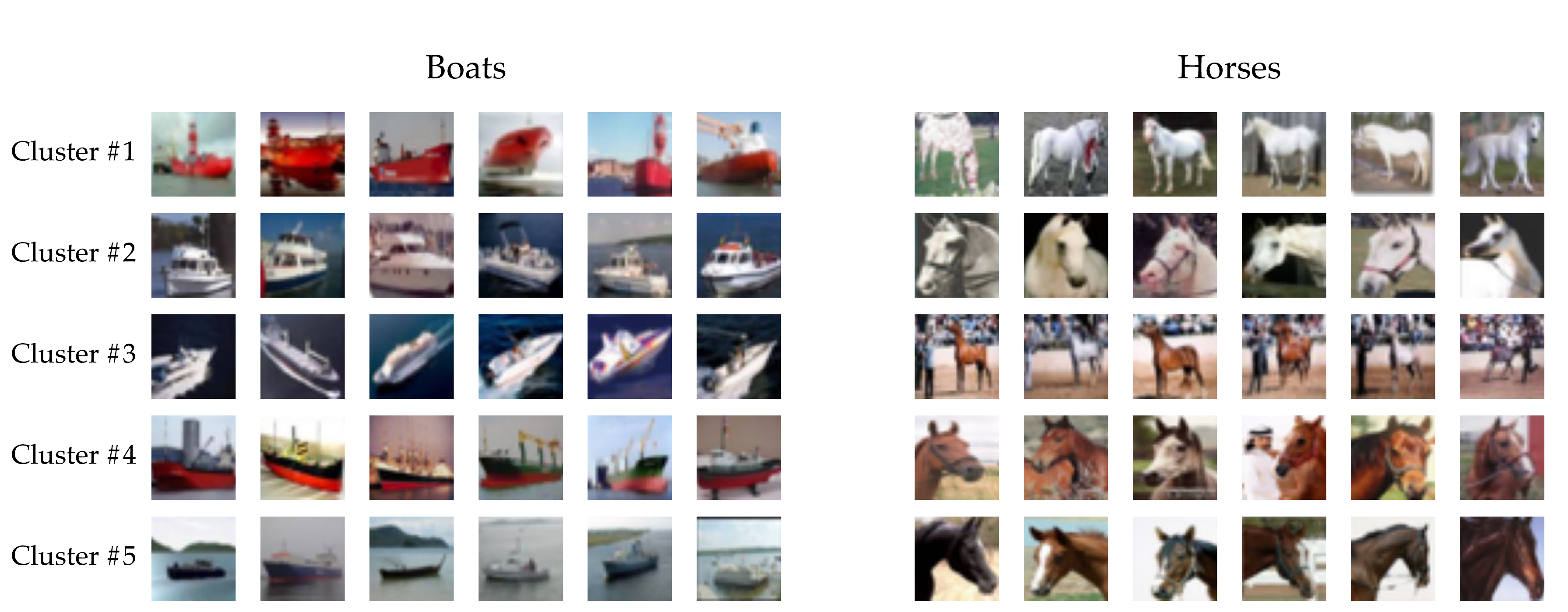}
    \caption{{\bf Spectral clustering on datamodel embeddings finds subpopulations.}
    For each CIFAR-10 class, we first compute a similarity score between all
    datamodel embeddings (we use $\alpha=20\%$ datamodels), 
    then run spectral clustering on the resulting matrix.
    We show the top clusters with the lowest average distance to the cluster center (in the embedding space); each row shows six random images from the given cluster. Each cluster seems to correspond to a specific subpopulation with shared, distinctive visual features.
    See~\Cref{fig:clustering_extra,fig:clustering_extra_compare} for more examples from other classes and comparison across $\alpha$.}
    \label{fig:clustering}
\end{figure}

\subsubsection{Analyzing datamodel embeddings with PCA}
\label{subsec:pca}
We observed above that datamodel embeddings encode enough
information about their corresponding examples to cluster them into (at least
qualitatively) coherent groups.
We now attempt to gain even further insight into the structure of these datamodel
embeddings, in the hopes of shedding light on the structure of the underlying
dataset itself.

Datamodel embeddings are both high-dimensional and sparse, making analyzing
them directly (e.g., by looking at the variation of each coordinate) a
daunting task.
Instead, we leverage a canonical tool for finding structure in
high-dimensional data: principal component analysis (PCA).

PCA is a dimensionality reduction technique which---given a set of
embeddings $\{\varphi(x_i) \in \mathbb{R}^d\}$ and any $k \ll d$---returns a
{\em transformation function} that maps any
embedding $\varphi(x) \in \mathbb{R}^d$ to a new embedding $\widetilde{\phi}(x)
\in \mathbb{R}^k$, such that:
\begin{itemize}
    \item[(a)] each of the $k$ coordinates of the transformed embeddings
    is a (fixed) linear combination of the coordinates of the initial datamodel
    embeddings, i.e., $\widetilde{\varphi}(x) = \bm{M}\cdot \varphi(x)$ for a fixed
    $k \times d$ matrix $\bm{M}$;
   \item[(b)] transformed embeddings preserve as much information as
    possible about the original ones.
    More formally, we find the matrix $\bm{M}$ that allows us to {\em
    reconstruct} the given set of embeddings $\{\varphi(x_i) \in \mathbb{R}^d\}$
    from their transformed counterparts with minimal error.
\end{itemize}
Note that in (a), the $i$-th coordinate of a transformed embedding is
always the {\em same} linear combination of the
corresponding original embedding (and thus, each coordinate of the transformed
embedding has a concrete interpretation as a weighted combination of datamodel
coefficients). The exact coefficients of this combination (i.e.,
the rows of the matrix $\bm{M}$ above) are called the first $k$ {\em principal
components} of the dataset.

We apply PCA to the collection of datamodel embeddings $\{\varphi(x_i) \in
\mathbb{R}^d\}_{i=1}^d$ for the CIFAR-10 training set, and use the result to
to compute new $k$-dimensional embeddings for each target example in both the
training set and the test set (i.e., by computing each target example's
datamodel embedding then transforming it to an embedding in $\mathbb{R}^k$).
We can then look at each coordinate in the new, much more manageable
($k$-dimensional) embeddings.\footnote{We first {\em
normalize} each datamodel embedding before transforming them (i.e., we transform
$\varphi(x)/\|\varphi(x)\|$).}

\paragraph{Coordinates identify subpopulations.} Our point of start in analyzing
these transformed embeddings is to examine each transformed coordinate separately.
In particular, in Figure \ref{fig:pca} we visualize, for a few sample coordinate
indices $i \in [k]$, the target examples whose transformed embeddings
have particularly high or low values of the given coordinate (equivalently,
these are the target examples whose datamodel embeddings have the highest or
lowest projections onto the $i$-th principal component). We find that:
\begin{itemize}
    \item[(a)] the examples whose transformed embeddings have a large $i$-th
    coordinate all (visually) share a common feature: e.g., the first-row images
    in Figure \ref{fig:pca} share similar pose and color composition;
    \item[(b)] this (visual) feature is consistent across both train and test set
    examples\footnote{Recall that we computed the PCA transformation to preserve the
    information in only the {\em training set} datamodel embeddings.
    Thus, this result suggests that the transformed embeddings computed by
    PCA are not ``overfit'' to the specific examples that we used to compute
    it.}; and
    \item[(c)] for a given coordinate, the most positive images and most
    negative images (i.e., the left and right side of each row of
    Figure~\ref{fig:pca}, respectively)
    either (a) have a differing label but share the
    same common feature or (b) have the same label but differ along the relevant
    feature.
\end{itemize}

\begin{figure}[!h]
    \centering
    \includegraphics[width=.9\linewidth,trim={0 0 0 1cm}, clip]{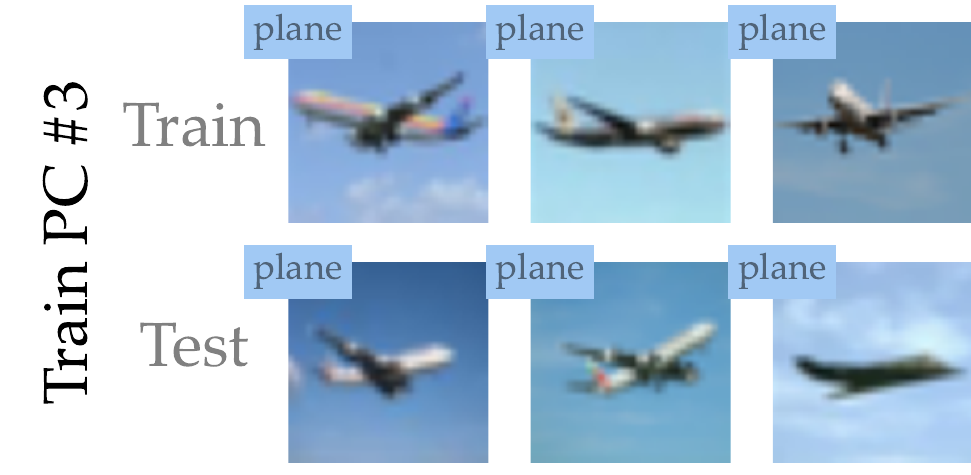}
    \caption{
        {\bf PCA on datamodel embeddings.}
        We visualize the top three principal components (PCs) and a randomly
        selected PC from the top 100.
        In the $i$-th row, the left-most (right-most) images
        are those whose datamodel embeddings have the highest (lowest)
        normalized projections onto the $i$-th principal component $v_i$.
        Highest magnitude images along each direction share qualitative
        features; moreover, images at opposite ends suggest a \emph{feature
        tradeoff}---a combination of images in the training set that helps
        accuracy on one subgroup but hurts accuracy on the other.
        See
        \Cref{fig:pca_extra} for more datamodel PCA components.}
    \label{fig:pca}
\end{figure}

\paragraph{Principal components are {\em model-faithful}.}
In Appendix~\ref{app:pca}, we verify that not only are the groups of images
found by PCA visually coherent, they are in fact rooted in how the model class
makes predictions. In particular, we show that one can find, for any coordinate
$i \in [k]$ of the transformed embedding, the training examples that are most
important to that coordinate. 
Furthermore, retraining without these examples significantly decreases
(increases) accuracy on the target examples with the most positive (negative)
coordinate $i$, suggesting that the identified principal components actually
reflect model class behavior.

\subsubsection{Advantages over penultimate-layer embeddings}
In the context of deep neural networks, the word ``embedding'' typically refers
to features extracted from the penultimate layer of a fixed pre-trained model
(see \citep{bengio2013representation} for an overview). These ``deep
representations'' can serve as an effective proxy for visual similarity
\citep{barz2020train, zhang2018unreasonable}, and also enable a suite of
applications such as clustering \citep{guerin2017cnn} and feature visualization
\citep{olah2017feature,engstrom2019adversarial,azizpour2015factors,ben2007analysis}.

Here, we briefly discuss a few advantages of datamodel-based embeddings over
their standard penultimate layer-based counterparts.
\begin{itemize}
    \item {\bf Axis-alignment}:
    datamodel embeddings are {\em axis-aligned}---each embedding component
    directly corresponds to index into the training
    set, as opposed to a more abstract or qualitative concept.
    As a corollary, aggregating or comparing different datamodel embeddings
    for a given dataset is straightforward, and does not require any alignment
    tools or additional heuristics. This is not the case for network-based
    representations, for which the right way to combine representations---even
    for two models of the same architecture---is still
    disagreed upon \citep{kornblith2019similarity,bansal2021revisiting}.
    In particular, we can straightforwardly compare datamodel embeddings across
    different target examples, model architectures,
    training paradigms, or even datamodel estimation techniques---as long as the
    set of training examples being stays the same, any resulting datamodel has a
    uniform interpretation.
    \item {\bf Richer representation}: the space of datamodel embeddings seems
    significantly richer than that of standard representation space. In
    particular, Appendix Figure \ref{fig:explained_variance} shows that for
    standard representation space, {\em ten linear directions} suffice to
    capture {\em 90\% of the variation} in training set representations.
    The ``effective dimension'' of datamodel representations is much higher,
    with the top {\em 500 principal components} explaining only {\em 50\% of the
    variation} in training set datamodel embeddings. This difference manifests
    qualitatively when we redo our PCA study on standard representations
    (Appendix Figure \ref{fig:pca_rep}): principal components beyond the 10th
    lack both the perceptual quality and train-test consistency exhibited
    by those of datamodel embeddings (e.g., for datamodels even the 76th
    principal component, shown in Figure~\ref{fig:pca}, exhibits these qualities).
    \item {\bf Ingrained causality}:
    datamodel embeddings inherently encode information about how the model
    class generalizes. Indeed, in Section \ref{subsec:pca} we verified via counterfactuals
    that insights extracted from the principal components of $\Theta$ actually
    reflect underlying model class behavior.
\end{itemize}

\section{Discussion: The role of the subsampling fraction \texorpdfstring{$\alpha$}{alpha}}
\label{sec:alpha_choice}
\begin{figure}[ht]
    \centering
    \includegraphics[width=.9\textwidth]{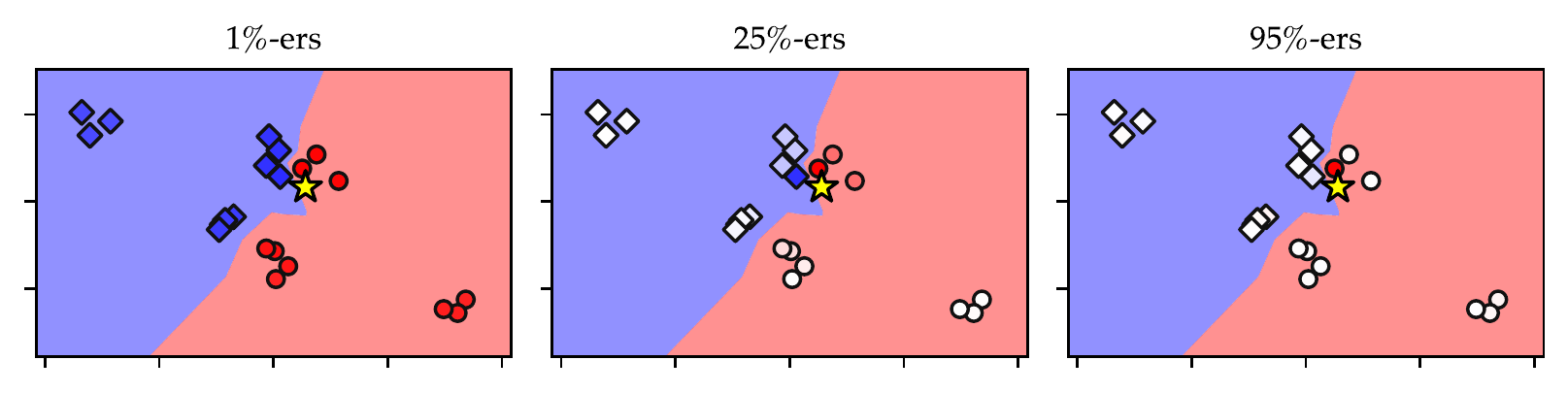}
    \caption{{\bf Datamodels capture data relationships at varying levels of granularity.}
    We illustrate the role of \emph{subsampling fraction} $\alpha$ of
    datamodels by considering a nearest-neighbor classifier in two dimensions.
    In the datamodel for the target example ($\star$, yellow), the red (blue) examples have  positive (negative) weights, with the shade indicating the magnitude.
    At large values of $\alpha$ (right),
    the model identifies only local relationships. Meanwhile, at small values of
    $\alpha$ (left), we can identify more global relationships, but at the cost of granularity. Intermediate values of $\alpha$ (middle) provide a
    smooth tradeoff between these two regimes.}
    \label{fig:2d-sim}
\end{figure}

We have used datamodels estimated using several choices of the subsampling fraction $\alpha$,
and saw that the value of $\alpha$ corresponding to the most useful datamodels
can vary by setting.
In particular, the visualizations in \Cref{fig:diff_alphas} suggest that
datamodels estimated with lower
$\alpha$ (i.e., based on smaller random training subsets) find
train-test relationships driven by larger groups of examples (and vice-versa).
Here, we explore this intuition further using thought
experiment, toy example, and numerical simulation. Our goal is to intuit how
different choices of $\alpha$ can lead to substantively different datamodels.

First, consider the task of estimating a datamodel for a prototypical
image $x$---for example, a plane on a blue sky background.
As $\alpha \to 1$, the sets $S_i$ sampled from $\mathcal{D}_S$ are relatively
large---if these sets have enough other
images of planes on blue skies, we will observe little to no
variation in $\modeleval{x}{S_i}$, since any predictor trained on $S_i$ will
perform very well on $x$. As a result, a datamodel for $x$ estimated with
$\alpha \to 1$ may assign very little weight to any particular image, even
if in reality their {\em total} effect is actually significant.

Decreasing $\alpha$, then, offers a solution to this problem. In particular, we allow
the datamodel to observe cases where entire {\em groups} of training examples are
not present, and {\em re-distribute} the corresponding effect back to the
constituents of the group (i.e., assigning them all a share of the weight).

Now, consider a highly atypical yet correctly classified example,
whose correctness relies on just the presence of just a few images from the
training set.
In this setting, datamodels estimated with a small value of $\alpha$ may be
unable to isolate these training points, since they will constantly distribute
variation in $\modeleval{x}{S_i}$ among a large group of non-present images.
Meanwhile, using a large value of $\alpha$ allows the estimated datamodel to
place weight on the correct training images (since $x$ will be
classified correctly until some of the important training images are not present
in $S_i$).

In line with this intuition, decreasing $\alpha$ in Figure \ref{fig:2d-sim}
(i.e., moving from right to left)
leads to datamodels that assign weight to increasingly large neighborhoods of
points around the target input.
This example and the above reasoning lead
us to hypothesize that larger (respectively, smaller) $\alpha$ are
better-suited to cases where model predictions are driven by smaller (respectively,
larger) groups of training examples.
In Appendix \ref{app:simulation}, we perform a more quantitative analysis of
the role of $\alpha$, this time by studying an underdetermined linear regression
model on data that is organized into overlapping {\em subpopulations}.
Our findings in this setting (see Figure \ref{fig:sim_correlations})
mirror our intuition thus far---in particular, smaller
values of $\alpha$ result in datamodels that were more predictive on
{\em larger} subpopulations in the training set, whereas higher values of
$\alpha$ tended to work better {\em smaller} subpopulations.

\section{Related work}
\label{sec:related}
Datamodels build on a rich and growing body of literature in machine learning,
statistics, and interpretability.
In this section, we illustrate some of the connections to these fields,
highlight a few of the most closely related works to ours.

\subsection{Connecting datamodeling to empirical influence estimation}
\label{sec:infl_connection}
We start by discussing the particularly important connection between datamodels and another
well-studied concept that has recently been applied to the machine learning
setting: influence estimators.
In particular, a recent line of work aims to compute the {\em empirical
influence} \citep{hampel2011robust} of training points $x_i$ on predictions $f(x_j)$,
i.e.,
\begin{align*}
    \text{Infl}[x_i \to x_j] \coloneqq
    \mathbb{P}\left(\text{model trained on $S$ is correct on $x_j$} \right) -
    \mathbb{P}\left(\text{model trained on $S \setminus \{x_i\}$ is correct on $x_j$} \right),
\end{align*}
where randomness is taken over the training algorithm.
Evaluating these influence functions naively requires training
$C\cdot \numtrain$ models where $\numtrain$ is again the size of
the train set and $C$ is the number of samples necessary for an accurate
empirical estimate of the probabilities above. To circumvent this prohibitive sample complexity, a recent line of
work has proposed approximation schemes for $\text{Infl}[x_i \to x_j]$. We
discuss these approximations (and their connection to our work) more generally in
~\Cref{sec:other_related}, but here we focus on a specific approximation
used by \citet{feldman2020neural} (and in a similar form, by
\citep{ghorbani2019data} and \citep{jia2019towards})\footnote{In fact,
\eqref{eq:sample_efficient_influences} is ubiquitous---e.g., in causal
inference, it is called the {\em average
treatment effect} of training on $x_i$ on the correctness of $x_j$.}:
\begin{align}
    \nonumber
    \widehat{\text{Infl}}[x_i \to x_j] =\
    &\mathbb{P}_{S \sim \mathcal{D}_S}\left(\text{model trained on $S$ is correct on $x_j$}\vert x_i \in S \right)  \\
    &- \mathbb{P}_{S \sim \mathcal{D}_S}\left(\text{model trained on $S$ is correct on $x_j$}\vert x_i \not\in S \right).
    \label{eq:sample_efficient_influences}
\end{align}
This estimator improves sample efficiency by reusing
the same set of models to compute influences between different input pairs.
More precisely, \citet{feldman2020neural} show that the size of the random
subsets trades off sample efficiency (model reuse is maximized when the subsets
are exactly half the size of the training set, since this maximizes the number
of samples available to estimate each term in \eqref{eq:sample_efficient_influences})
and accuracy with respect to the true empirical influence (which is maximized
as the subsets $S_i$ get larger).
Despite its different goal, formulation, and estimation procedure, it turns out
that we can cast the difference-of-probabilities estimator~\eqref{eq:sample_efficient_influences} above
as a rescaled datamodel (in the infinite-sample limit).
In particular, in Appendix \ref{app:ate_proof} we show:

\begin{restatable}{lemma}{atelinear}
    \label{lem:atelinear}
    Fix a training set $S$ of size $n$, and a test example $x$.
    For $i \in [m]$, let $S_i$ be a random variable denoting a random
    50\%-subset of the the training set $S$. Let $\bm{w}_{infl} \in
    \mathbb{R}^n$ be the estimated
    empirical influences \eqref{eq:sample_efficient_influences} onto $x$
    estimated using the sets $S_i$. Let $\bm{w}_{OLS}$ be the least-squares
    estimator of whether a particular model will get image $x$ correct, i.e.,
    \[
        \bm{w}_{OLS} \coloneqq \arg\min_{w} \frac{1}{m} \sum_{i=1}^m \lr{w^\top \bm{z_i} - \bm{1}\{\text{model trained on $S_i$ correct on $x$}\}}^2,
        \qquad
        \text{ where } \bm{z}_i = 2\cdot \mask{S_i} - \bm{1}_n.
    \]
    Then, as $m \to \infty$, $$\left\|(1 + 2/n)\bm{w}_{OLS} -
    \frac{1}{2}\bm{w}_{infl}\right\|_2 \to 0.$$
\end{restatable}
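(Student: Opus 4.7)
The plan is to pass to the infinite-sample limit, exhibit a direct algebraic relation between the population OLS solution and the population influence vector, and then upgrade to the finite-sample statement by standard concentration. By the strong law, $\tfrac{1}{m}\bm{Z}^\top \bm{Z} \to \Sigma \coloneqq \mathbb{E}[\bm{z}\bm{z}^\top]$ and $\tfrac{1}{m}\bm{Z}^\top \bm{y} \to \mu \coloneqq \mathbb{E}[\bm{z}\, y]$, while each coordinate of $\bm{w}_{\mathrm{infl}}$ converges to $p_j - q_j$ with $p_j \coloneqq \mathbb{P}(y=1 \mid j \in S)$ and $q_j \coloneqq \mathbb{P}(y=1 \mid j \notin S)$. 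Expanding $z_j = 2\,\bm{1}\{j \in S\} - 1$ and using $\mathbb{P}(j \in S) = \tfrac{1}{2}$ gives $\mu_j = \tfrac{1}{2}(p_j - q_j)$, so in the limit $\mu$ equals one-half of the population influence vector.

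Next I would compute $\Sigma$ explicitly. The diagonal is trivially $1$, and the off-diagonals come from the combinatorial identity $\mathbb{P}(j,k \in S) = \tfrac{n-2}{4(n-1)}$, giving $\mathbb{E}[z_j z_k] = -\tfrac{1}{n-1}$ for $j \neq k$. Hence $\Sigma = \tfrac{n}{n-1}\bigl(I - \tfrac{1}{n}\bm{1}_n\bm{1}_n^\top\bigr)$, a scaled projector onto $\bm{1}_n^\perp$. This singularity is forced: because $|S_i| = n/2$ exactly, $\bm{z}_i^\top \bm{1}_n \equiv 0$ for every $i$, so the design matrix has rank at most $n-1$ and the OLS objective has a one-parameter family of minimizers along $\bm{1}_n$. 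The natural convention is to interpret $\bm{w}_{\mathrm{OLS}}$ as the minimum-norm minimizer, equivalently the unique minimizer lying in $\bm{1}_n^\perp$.

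With that convention, $\bm{w}_{\mathrm{OLS}} \to \Sigma^\dagger \mu$, where $\Sigma^\dagger$ is the Moore--Penrose pseudoinverse. The key simplification is that $\mu$ (and hence the population influence) also lives in $\bm{1}_n^\perp$, for the same structural reason as above: $\mathbb{E}[(\bm{z}^\top \bm{1}_n)\, y] = 0$. On $\bm{1}_n^\perp$, $\Sigma$ acts as the scalar $\tfrac{n}{n-1}$, so $\Sigma^\dagger$ acts as $\tfrac{n-1}{n}$; chaining the two identities produces a rescaling relation between the limiting $\bm{w}_{\mathrm{OLS}}$ and $\bm{w}_{\mathrm{infl}}$ with an $n$-dependent constant which is read off from the eigenvalue of $\Sigma^\dagger$ on $\bm{1}_n^\perp$. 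Coordinatewise Hoeffding bounds on $\tfrac{1}{m}\bm{Z}^\top \bm{Z}$ and $\tfrac{1}{m}\bm{Z}^\top \bm{y}$ then upgrade this population identity to the $\ell_2$-norm convergence in the lemma.

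The main obstacle is the rank deficiency: pseudoinverses are not continuous in general, so one must check that $(\tfrac{1}{m}\bm{Z}^\top \bm{Z})^\dagger \to \Sigma^\dagger$. What saves the argument is that both matrices have the same null space $\mathrm{span}(\bm{1}_n)$---the sample matrix inherits this null space exactly from $\bm{z}_i^\top \bm{1}_n \equiv 0$, and has rank exactly $n-1$ once the sampled $\bm{z}_i$ span $\bm{1}_n^\perp$, which happens almost surely for $m$ sufficiently large. On this fixed-rank stratum the pseudoinverse is continuous, so the limit passes through and the remainder of the argument is elementary linear algebra.
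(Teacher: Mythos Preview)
Your approach is essentially the paper's: compute the limiting second-moment matrix $\Sigma=\mathbb{E}[\bm{z}\bm{z}^\top]$, exploit its ``identity plus rank-one'' structure together with the deterministic orthogonality $\bm{z}_i^\top\bm{1}_n\equiv 0$, and match against a direct rewriting of $\bm{w}_{\mathrm{infl}}$ in terms of $\bm{Z}^\top\bm{y}$ (equivalently, your $\mu_j=\tfrac12(p_j-q_j)$). The paper does exactly this, applying Sherman--Morrison to $\Sigma$ and then killing the rank-one term via $\bm{1}_{n\times n}\bm{Z}^\top=0$.

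The one substantive difference is that you are \emph{more} careful than the paper on the singularity. Your off-diagonal value $\mathbb{E}[z_jz_k]=-\tfrac{1}{n-1}$ is the correct hypergeometric computation, and it forces $\Sigma\bm{1}_n=0$, consistent with the structural constraint $\bm{z}_i^\top\bm{1}_n=0$. The paper's proof slips here (it uses $\tfrac{n/2-1}{n}$ instead of $\tfrac{n/2-1}{n-1}$), obtaining a nonsingular $\Sigma$ and proceeding with an ordinary inverse. Your pseudoinverse/minimum-norm convention, together with the observation that the finite-sample Gram matrix has the \emph{same} null space $\mathrm{span}(\bm{1}_n)$ exactly (so pseudoinverse continuity holds on the fixed-rank stratum), is the rigorous way to handle this. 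The upshot is that your argument yields the scaling $\tfrac{n}{n-1}\bm{w}_{\mathrm{OLS}}\to\tfrac12\bm{w}_{\mathrm{infl}}$, i.e.\ the factor $(1+\tfrac{1}{n-1})$ rather than the stated $(1+\tfrac{2}{n})$; the two agree to leading order in $n$, and yours is the exact one.
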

We illustrate this result quantitatively in Appendix \ref{app:ate_compare} and
perform an in-depth study of influence estimators as datamodels.
As one might expect given their different goal, influence estimates significantly underperform explicit
datamodels in terms of predicting model outputs with respect to every metric we
studied (\Cref{tab:ate_compare}, \Cref{fig:ate_compare_line_graphs}).
We then attempt to explain this performance gap and reconcile it with Lemma
\ref{lem:atelinear} in terms of the estimation algorithm (OLS vs. LASSO), scale (number
of models trained), and output function (0/1 loss vs. margins).

In addition to forging a connection between datamodels and
influence estimates, this result also provides an alternate perspective on the
parameter $\alpha$. %
Specifically, in light of our discussion in \Cref{sec:alpha_choice}, it suggests that $\alpha$ may control the {\em kinds} of correlations that are
surfaced by empirical influence estimates.

\subsection{Other connections}
\label{sec:other_related}
\paragraph{Influence functions and instance-based explanations.}
Above, we contrasted datamodels with
{\em empirical influence functions}, which measure the counterfactual effect of
removing individual training points on a given model output. Specifically, in
that section and the corresponding Appendix \ref{app:ate_compare}, we
discussed the subsampled influence estimator of \citet{feldman2020neural}, who
use influences to study the memorization behavior of standard vision models.
We now provide a brief overview of a variety of other methods for influence
estimation developed in prior works.

First-order influence functions are a canonical tool in robust
statistics that allows one to approximate the impact of removing a data point on
a given parameter without re-estimating the parameter itself
\citep{hampel2011robust}.
\citet{koh2017understanding} apply influence functions to both a variety of
classical machine learning models and to penultimate-layer embeddings from
neural network architectures,
to trace model's predictions back to individual training examples.
In classical settings (namely, for a logistic regression model),
\citet{koh2019accuracy} find that influence functions are also
useful for estimating the impact of \emph{groups} of examples.
On the other hand, \citet{basu2021influence} finds that approximate influence functions
scale poorly to deep neural network architectures; and \citet{feldman2020neural}
argue that understanding the dynamics of the penultimate layer is insufficient for
understanding deep models' decision mechanisms.
Other methods for influence approximation (or more generally, instance-level
attribution) include gradient-based methods \citep{pruthi2020estimating} and
metrics based on representation similarity
\citep{charpiat2019input,yeh2018representer}---see \citep{hanawa2021evaluation}
for a more detailed overview.
Finally, another related line of work \citep{ghorbani2019data,
jia2019towards,wang2021unified} uses {\em Shapley values}
\citep{shapley1951notes}
to assign a value to datapoints
based on their
contribution to some {\em aggregate} metric (e.g., test
accuracy).

As discussed in Section \ref{sec:infl_connection}, datamodels serve a different
purpose to influence functions---the former constructs an explicit
statistical model, whereas the latter
measures the counterfactual value of each training point. Nevertheless, we
find that wherever efficient influence approximations and datamodels are
quantitatively comparable (e.g., see Section \ref{sec:causal_view} or Appendix
\ref{app:ate_compare}) datamodels predict model behavior better.

\paragraph{Pixel-space surrogate models for interpretability.} Datamodels are
essentially surrogate models for the function mapping training data to
predictions.
Surrogate models from {\em pixel-space} to predictions are popular tools in
machine learning interpretability \citep{ribeiro2016why,lundberg2017unified,sokol2019blimey}.
For example, LIME \citep{ribeiro2016why} constructs a local linear model mapping
test images to model predictions.
Such surrogate models
try to understand, for a {\em fixed} model, how the features of a given test example
change the prediction. In contrast, datamodels hold the test
example fixed and instead study how the images present in the training set
change the prediction.

In addition to the advantages of our data-based view stated in Section
\ref{sec:intro}, datamodels have two further advantages over pixel-level
surrogate models: (a) a clear notion of {\em missingness} (i.e., it is easy to
remove a training example but usually hard to ``remove'' pixels 
\citep{sturmfels2020visualizing,jain2022missingness}); and
(b) {\em globality} of predictions---pixel-level surrogate models are typically
accurate within a small neighborhood of a given input in pixel space,
whereas datamodels model entire distribution over subsets of the training set,
and remain useful both on- and off-distribution.

In other contexts, surrogate models are also used to evaluate data points for active learning and
coreset selection \citep{lewis1994heterogeneous, coleman2020selection}.
\citet{coleman2020selection} find that shallow neural networks trained with
fewer epochs can be a good proxy for a larger model when evaluating data for
these applications.

\paragraph{Model understanding beyond fixed weights.} Recall (from Section \ref{sec:intro}) that datamodels are, in part,
inspired by the fact that re-training deep neural networks using the same data
and model class leads to models with similar accuracies but vastly different
individual predictions.
This phenomenon has been observed more broadly.
For example, \citet{sellam2021multiberts}
make this point explicitly in the context of BERT \citep{devlin2019bert} pre-trained
language models.
Similarly, \citet{nakkiran2020distributional} make note of this non-determinism for networks trained on
the same training {\em distribution} (but not the same data), while
\citet{jiang2021assessing} find that the same is true for networks trained on
the same exact data. \citet{damour2020underspecification} find that on
out-of-distribution data even overall accuracy is highly random.
More closely to the spirit to our work, \citet{zhong2021larger}
find that non-determinism of individual predictions poses a
challenge for comparing different model architectures. (They also propose a set of
statistical techniques for overcoming this challenge.)
More traditionally, the non-determinism is leveraged by Bayesian \citep{neal1996bayesian} and ensemble methods \citep{lakshminarayanan2017simple}, which use a distribution over model weights to improve aspects of inference such as calibration of uncertainty.

\paragraph{Learning and memorization.} Recent work (see
\citep{feldman2019does,chatterjee2018learning,zhang2016understanding,bresler2020corrective}
and references therein) brings to light the interplay between learning and
memorization, particularly in the context of deep neural networks.
While memorization and generalization may seem to be at odds, the picture is more subtle.
Indeed, \citet{chatterjee2018learning} builds a network of small lookup tables on small vision
datasets to show that purely memorization-based systems can still generalize-well.
\citet{feldman2019does} suggests that memorization of atypical examples
may be {\em necessary} to generalize well due to a long tail of subpopulations
that arises in standard datasets. \citet{feldman2020neural} find some empirical
support for this hypothesis by identifying memorized images on CIFAR-100 and ImageNet and showing that removing them
hurts overall generalization. Relatedly, \citet{brown2021when} proves that for certain natural distributions, memorization of a
large fraction of data, even data irrelevant to the task at hand, is necessary
for close to optimal generalization.
For state of the art models, recent works (e.g.,
\citep{carlini2019secret, carlini2021extracting}) show that one can indeed extract sensitive
training data, indicating models' tendency to memorize.

Conversely, it has been observed that differentially private (DP)
machine learning models---whose aim is precisely to avoid memorizing the training
data---tend to exhibit poorer generalization than their memorizing counterparts
\citep{abadi2016deep}. Moreover, the impact on generalization from DP is
disparate across subgroups \citep{bagdasaryan2019differential}. A similar
effect has been noted in the context of neural network pruning \citep{hooker2019selective}.
Datamodeling may be a useful tool for studying these phenomena and, more broadly,
the mechanisms mapping data to predictions for modern learning algorithms.

\paragraph{Brittleness of conclusions.} A long line of work in statistics
focuses on testing the {\em robustness} of statistical conclusions to the
omission of datapoints.
\citet{broderick2021automatic} study the robustness of econometric analyses to removing a
(small) fraction of data. Their method uses a Taylor-approximation based metric
to estimate the most influential subset of examples on some target quantity,
similar in spirit to our use of datamodels to estimate data support for a
target example (as in Figure \ref{fig:cifar_brittleness}). Datamodels may be a
useful tool for extending such robustness analyses to the context of
state-of-the-art machine learning models.

\section{Future work}
\label{sec:future_work}
Our instantiation of the datamodeling framework yields
both good predictors of model behavior and a variety of direct applications.
However, this instantiation is fairly basic and thus leaves significant room for improvement along
several axes. More broadly, datamodeling provides a lens under which we can
study a variety of questions not addressed in this work.
In this section, we identify (a subset of) these questions and provide
connections to existing lines of work on them across machine learning and statistics.

\subsection{Improving datamodel estimation}
In Section \ref{sec:constructing}, we outlined our basic procedure for fitting
datamodels: we first sample subsets uniformly at random, then fit a sparse
linear model from (the characteristic vectors of) training subsets to model outputs (margins) via $\ell_1$
regularization. We first discuss various ways in which this paradigm might be
improved to yield even better predictions.

\begin{itemize}
\item \textbf{Correlation-aware estimation.} One key feature of our estimation
methodology is that the same set of models is used to estimate datamodel
parameters for an entire test set of images at once.
This significantly reduces the sample complexity of estimating datamodels but
also introduces a correlation between the errors in the estimated parameters.
This correlation is driven by the fact that model outputs are not i.i.d. across
inputs---for example, if on a picture of a dog $x$ a given model has very large
output (compared to the ``average'' model, i.e., if
$\modeleval{x}{S_i} - \mathbb{E}[\modeleval{x}{S_i}]$ is large), the model
is also more likely to have large output on another picture of a dog (as opposed to,
 e.g., a picture of a cat).

Parameter estimation in the presence of such correlated
outputs
is an active area of
research in statistics (see \citep{daskalakis2019regression, li2019prediction}
and references therein).
Applying the corresponding techniques (or modifications thereof) to
datamodels may help calibrate predictions and improve sample-efficiency.

\item \textbf{Confidence intervals for datamodels.} In this work we have
focused on attaining point estimates for datamodel parameters via simple linear
regression. A natural extension to these results would be to obtain {\em
confidence intervals} around the datamodel weights. These could, for example,
(a) provide interval estimates for model outputs rather than simple point
estimates; and (b) decide if a training input is indeed a ``significant''
predictor for a given test input.

\item \textbf{Post-selection inference.} Relatedly, the high
input-dimensionality of our estimation problem and the sparse nature of the
solutions suggests that a {\em two-stage} procedure might
improve sample efficiency. In such procedures, one first selects (often
automatically, e.g., via LASSO) a subset of the coefficients deemed to be
``significant'' for a given test example, then re-fits a linear model for {\em
only} these coefficients. This two-stage approach is particularly attractive in
settings where the number of subset-output pairs $(S_i, \modeleval{x}{S_i})$ is
less than the size of the training set $|S|$ being subsampled.

Unfortunately, using the data itself to perform model selection in this
manner---a paradigm known as {\em post-selection inference}---violates the
assumptions of classical statistical inference (in particular, that the model
class is chosen independently of the data) and can result in significantly
miscalibrated confidence intervals.
Applying {\em valid} two-stage estimation to
datamodeling would be an area for further improvement upon the protocol presented
in our work.

\item \textbf{Improving subset sampling.} Recall (cf. \Cref{sec:constructing}) that our framework
uses a distribution over subsets $\mathcal{D}_S$ to generate the ``datamodel
training set.'' In this paper, we fixed $\mathcal{D}_S$ to be random
$\alpha$-subsets of the training set, and used a nearest-neighbors example (see \Cref{fig:2d-sim}) to
provide intuition around the role of $\alpha$.
While this design choice did yield useful datamodels, it is unclear whether this
class of distributions is optimal.
In particular, a long line of literature in causal inference focuses on
{\em intervention design} \citep{eberhardt2007interventions}; drawing upon this line of work may lead
to a better choice of subsampling distribution.
Furthermore, one might even go beyond a fixed distribution $\mathcal{D}_S$ and
instead choose subsets $S_i$ {\em adaptively} (i.e., based on the datamodels estimated with the previously sampled subsets) in order to reduce sample complexity.

\item \textbf{Devising better priors.} Finally, in this paper we employed simple
least-squares regression with $\ell_1$ regularization (tuned through a held-out
validation set).
While the advantage of this rather simple prior---namely, that datamodels are {\em sparse}---is that the resulting estimation
methodology is largely data-driven, one may consider incorporating
domain-specific knowledge to design better priors.
For instance, one can use structured-sparsity \citep{huang2011learning} to take advantage of any additional structure.
\end{itemize}

\subsection{Studying generalization}
Datamodels also present an opportunity to study generalization more
broadly:

\begin{itemize}
\item \textbf{Understanding linearity.} The key simplifying assumption behind
our instantiation of the datamodeling framework is that we can approximate the
final output of training a model on a subset of the trainset as a {\em linear}
function of the presence of each training point.
While this assumption certainly leads to a simple estimation procedure, we have
very little justification for why such a linear model should
be able to capture the complexities of end-to-end model training on data
subsets.
However, we find that datamodels {\em can} accurately predict
ground-truth model outputs (cf. Sections \ref{sec:constructing}). In fact,
we find a tight {\em linear} correlation between datamodel predictions and
model outputs even on out-of-distribution (i.e., not in the support of
$\mathcal{D}_S$) counterfactual datasets. Understanding {\em
why} a simple linearity assumption leads to effective datamodels for deep neural
networks is an interesting open question. Tackling this question may necessitate a better understanding of the
training dynamics and implicit biases behind overparameterized training \citep{bartlett2021deep,sagawa2020investigation}.

\item \textbf{Using sparsity to study generalization.} A recent line of work in
machine learning studies the interplay between learning, overparameterization,
and memorization \citep{feldman2019does,chatterjee2018learning,zhang2016understanding,bresler2020corrective, zhang2020identity}.
Datamodeling may be a helpful tool in this pursuit, as it connects
predictions of machine learning models directly to the data used to train
them.
For example, the {\em data support} introduced in Section \ref{subsec:brittleness}
provides a quantitative measure of ``how memorized'' a given test input is.

\item \textbf{Theoretical characterization of the role of $\alpha$.}
In line with our intuitions in \Cref{sec:alpha_choice}, we have observed both
qualitatively (e.g., Figure \ref{fig:diff_alphas}) and quantitatively (e.g.,
Appendix \ref{app:simulation})
that estimating datamodels using different values of $\alpha$ identifies
correlations at varying granularities.
However, despite empirical results around the clear role of
$\alpha$---Appendix \ref{app:simulation} even isolates its effect on
datamodels for simple underdetermined linear regression---we lack a crisp {\em
theoretical} understanding of how $\alpha$ affects our estimated datamodels.
A better theoretical understanding of the role of $\alpha$, even for simple
models trained on structured distributions, can provide us with more
rigorous intuition for the phenomena observed here, and can in turn guide the
development of better choices of sampling distribution for datamodeling.
\end{itemize}

\subsection{Applying datamodels}
Finally, each of the presented perspectives in Section \ref{sec:applications}
can be taken further to enable even better data and model understanding. For example:

\begin{itemize}
    \item \textbf{Interpreting predictions.} For a given test example, the
    training images corresponding to the largest-magnitude datamodel weights
    both (a) share features in common with the test example; and (b) seem to be
    causally linked to the test example (in the sense that removing the training
    images flips the test prediction). This immediately suggests the potential
    utility of datamodels as a tool for {\em interpreting} test-time predictions
    in a counterfactual-centric manner. Establishing them as such requires
    further evaluation through, for example, human-in-the-loop studies.
    \item \textbf{Building data exploration tools.} In a similar vein, another
    opportunity for future work is in building user-friendly {\em data
    exploration} tools that leverage datamodel embeddings. In this paper we
    present the simplest such example in the form of PCA, but leave the vast
    field of data bias and feature discovery methods (cf.
    \cite{carter2019activation} and \citet{leclerc20213db} for a survey)
    unexplored.

\end{itemize}

\section{Conclusion}
We present datamodeling, a framework for viewing the output of
model training as a simple function of the presence of each training data point.
We show that a simple linear instantiation of datamodeling enables us to predict
model outputs accurately, and facilitates a variety of applications.

\section*{Acknowledgements}
We thank Chiyuan Zhang and Vitaly Feldman for providing a set of 5,000 models
with which we began our investigation. We also thank Hadi Salman for valuable
discussions.

Work supported in part by the NSF grants CCF-1553428 and CNS-1815221, and Open
Philanthropy. This material is based upon work supported by the Defense Advanced
Research Projects Agency (DARPA) under Contract No. HR001120C0015.

\clearpage
\begin{refcontext}[sorting=nyt]
\printbibliography
\end{refcontext}

\clearpage
\appendix
\addcontentsline{toc}{section}{Appendix} %
\renewcommand\ptctitle{Appendices}
\part{}
\parttoc
\clearpage

\counterwithin{figure}{section}
\counterwithin{table}{section}
\counterwithin{algorithm}{section}

\section{Pseudocode for Estimating Datamodels}
\label{app:pseudocode}

\begin{algorithm}
    \caption{An outline of the datamodeling framework: we use a simple
    parametric model as a proxy for the entire end-to-end training process.}
    \label{alg:datamodel_pseudo}
    \begin{algorithmic}[1]
        \Procedure{EstimateDatamodel}{target example $x$, trainset $S$ of size $d$, subsampling frac. $\alpha \in (0,1)$}
        \State $T \gets []$ \Comment{Initialize {\em datamodel training set}}
        \For{$i \in \{1,\ldots,m\}$}
        \State Sample a subset $S_i \subset S$ from $\mathcal{D}_S$ where $|S_i| = \alpha \cdot d$
        \State $y_i \gets \modeleval{x}{S_i}$
        \Comment{Train a model on $S_i$ using $\mathcal{A}$, evaluate on $x$}
        \State Define $\mask{S_i} \in \{0, 1\}^d$ as
        $(\mask{S_i})_j = 1$ if $x_j \in S_i$ else 0
        \State $T \gets T + [(\mask{S_i}, y_i)]$ \Comment{Update datamodel
        training set}
        \EndFor
        \State $\theta \gets$ \Call{RunRegression}{T} \Comment{Predict the
        $y_i$ from the $\mask{S_i}$ vectors}
        \State\Return $\theta$ \Comment{Result: a weight vector $\theta \in
        \mathbb{R}^d$}
        \EndProcedure
    \end{algorithmic}
\end{algorithm}
\begin{algorithm}
    \caption{Assessing datamodels' ability to predict dataset counterfactuals.}
    \label{alg:causal_pseudo}
    \begin{algorithmic}[1]
        \Procedure{CounterfactualEval}{target example $x$, datamodel
        $\theta$, trainset of size $d$}
        \State $M_0 \gets$ \textsc{Average}[$\modeleval{x}{[d]}$ \textbf{ for }
        $i \gets 1 \ldots 100$]\Comment{Average
        margin on $x$ with full training set}
        \For{$k \in \{20, 40, 80, 160, 320\}$}
        \State $G \gets$ \Call{Top-K}{$\theta, k$}
        \Comment{Get the top-$k$ indices of $\theta$}
        \State $M \gets$ \textsc{Average}[$\modeleval{x}{[d] \setminus G}$ \textbf{ for } $i \gets 1 \ldots 20$]
        \Comment{Average margin without $G$ over 20 trials}
        \State $\Delta_{avg} \gets M_0 - M$
        \Comment{Actual counterfactual}
        \State $\widehat{\Delta} \gets \theta^\top \mask{G} \lr{= \sum_{i \in G} \theta_i}$ \Comment{Datamodel-predicted counterfactual}
        \EndFor
        \State Compare all values of $\widehat{\Delta}$ and $\Delta_{avg}$
        \EndProcedure
    \end{algorithmic}
\end{algorithm}

\clearpage
\section{Understanding the Role of \texorpdfstring{$\alpha$}{} through Simulation}
\label{app:simulation}

At a high level, our intuition for the subsampling fraction\footnote{See \Cref{sec:constructing} for definition.} $\alpha$ is that
datamodels estimated with higher $\alpha$ tend to detect more {\em local} effects (i.e., those driven by smaller groups of examples, such as near-duplicates or small subpopulations),
while those estimated with lower $\alpha$ detect more {\em global} effects (i.e., those driven by larger groups of images, such as large subpopulations or subclass biases).
To solidify and corroborate this intuition about $\alpha$, we analyze a basic simulated setting.%

\paragraph{Setup.}
We consider an underdetermined linear regression model operating on
$n$ data points with $d$ binary features, i.e., $x_i \in \{0, 1\}^d, y_i \in \mathbb{R}$.
Let $X \in \mathbb{R}^{n \times d}$ and $y \in \mathbb{R}^n$ denote their matrix and vector counterparts.
$S$ is training set consisting of these $n$ samples, and we use an equally sized held-out set $S_V$ for evaluation.

The feature coordinates are distributed as Bernoulli variables of varying
frequency:
\begin{align}
    \label{eq:sim_data_model}
    x_{ik} &\sim \text{Bernoulli}(p_k)  \text{ for } 1 \leq i \leq n, \\
    \nonumber
    \text{where } p_k &\in \left\{\frac{1}{10}, \frac{2}{10}, \frac{3}{10}, \frac{4}{10},
        \frac{5}{10}\right\} \text{ for } 1 \leq k \leq d.
\end{align}
Each feature $k \in [d]$ naturally defines a {\em subpopulation} $S_k$, the group
of training examples with feature $k$ active, i.e., $S_k \coloneqq \{x_i \in S: x_{ik} = 1\}$. Features with {\em lower} (resp. {\em higher}) frequency $p_k$ are intended to capture more {\em local} (resp. more {\em global}) effects.

    The observed labels are generated according to a linear model
$y \coloneqq X\bm{w} + \mathcal{N}(0, \epsilon),$
where $\bm{w}$ is the true parameter vector and $\epsilon > 0$ is a
constant. We generate samples with $d = 150, n = 125$ and use linear regression\footnote{As the system is underdetermined, we use the pseudoinverse of $X$ to find the solution with the smallest norm.} to estimate $\bm{w}$.

Now, to use datamodels to analyze the above ``training process'' of fitting a linear regression model, we will model the output function $\modeleval{}{}$ given by the prediction of the linear model at point $x_j$ when $w$ is estimated with samples $S \subset S$, e.g.
\begin{align}
    \modeleval{x_j}{S} = (X_{S}^\top(X_{S}X_{S}^\top)^{-1} y_{S}) \cdot x_j
    \label{eq:linear_output}
\end{align}
We generate  $m = 1,000,000$ subsampled training subsets\footnote{Large sample size make sampling error negligible.} along with their evaluations, and use ordinary least squares (OLS) to fit the datamodels. (Note that the use of OLS here is separate from the use of linear regression above as the original model class.)

\paragraph{Analysis.}
Our hypothesis is that datamodels estimated with lower (resp. higher) $\alpha$ are better at detecting the effect of features of higher (resp. lower) frequency.
To test this,
we estimate datamodels for the entire test set (stacking them into a matrix $\bm{\Theta} \in \mathbb{R}^{n\times n}$, where $\bm{\Theta}_{\cdot,j}$ is the datamodel for $x_j$)
. We do this for varying values of $\alpha \in (0, 1)$, and evaluate how well each set of datamodels predicts the effects of features across different frequencies.
First, to evaluate a datamodel on some feature $k$, we can compare the following two quantities for different test examples $x_j \in S_V$:
\begin{itemize}
    \item[(a)] The {\em actual} effect of removing the subpopulation $S_k$ on $x_j$, i.e., $\modeleval{x_j}{S} - \modeleval{x_j}{S \setminus S_k}$,
    \item[(b)] The {\em datamodel-predicted} effect of removing $S_k$, i.e.,
    $\sum_{x_i \in S} \bm{\Theta}_{ij} \cdot \ind{x_i \in S_k}$.
\end{itemize}
To quantify the predictiveness of the datamodel at frequency $p$,  we compute the {\em Pearson correlation} between the above two quantities over all features $k$ with frequency $p$ and all test examples; see \Cref{alg:simulation_pseudo} for a pseudocode.
We repeat this evaluation varying $p$ and the datamodel (varying $\alpha$).
According to our intuition, for features $k$ with lower (resp. higher) frequency $p_k$, this correlation should be maximized at higher (resp. lower) values of $\alpha$, where the datamodels capture more local (resp. global) effects.
Figure \ref{fig:sim_correlations} accurately reflects this intuition: more local (i.e., less frequent) features are best detected at higher $\alpha$.

\begin{algorithm}
    \caption{Evaluating datamodel's counterfactual predictiveness for features at a particular frequency.}
    \label{alg:simulation_pseudo}
    \begin{algorithmic}[1]
        \Function{FeatureCorrelation}{datamodel matrix $\bm{\Theta}$, feature frequency $p$}
        \For{$j \gets 1 \ldots d$ if $p_j = p$} \Comment{For all features with frequency $p$}
        \For{$x_j \in S_V$} \Comment{For all test examples}
        \State $\Delta \gets \modeleval{x_j}{S} - \modeleval{x_j}{S \setminus S_k}$
        \Comment{Actual counterfactual where $\modeleval{}{}$ is given by \eqref{eq:linear_output}}
        \State $\widehat{\Delta} \gets \sum_{x_i \in S} \bm{\Theta}_{ij} \cdot \ind{x_i \in S_k}$ \Comment{Datamodel-predicted counterfactual}
        \EndFor
        \EndFor \\
        \Return Pearson correlation between $\widehat{\Delta}$ and $\Delta_{avg}$
        \Comment{Across all features and test examples}
        \EndFunction
    \end{algorithmic}
\end{algorithm}

\begin{figure}[h]
    \centering
\includegraphics[width=0.7\textwidth]{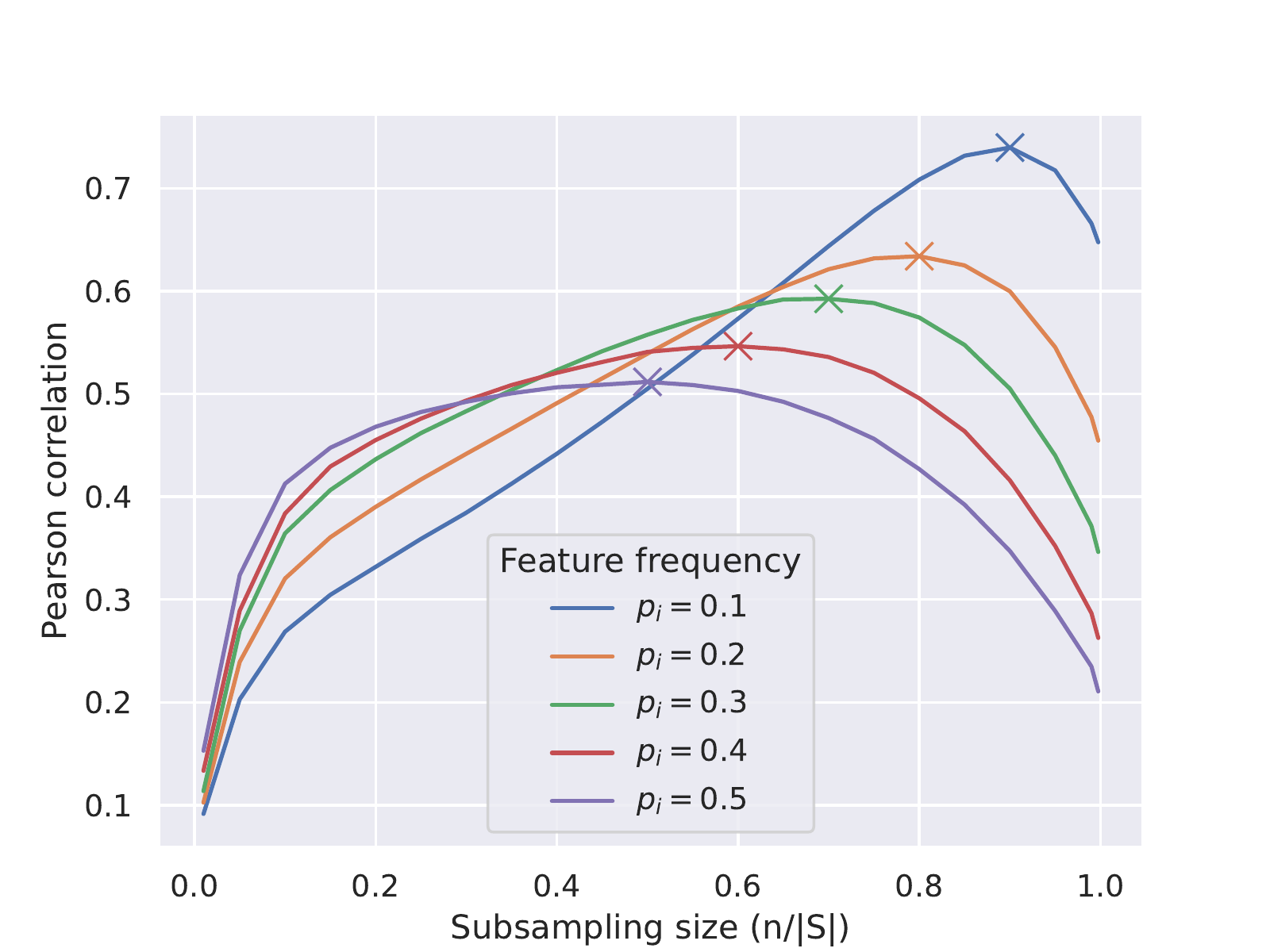}
\caption{{\bf The role of the subsampling fraction $\alpha$ in a simulated linear model.} The data consists of $d$-dimensional binary vectors $x_i$, which comprise (overlapping) {\em subpopulations} $S_k$ defined by a shared feature $k$, and their corresponding labels are generated according to a linear model, i.e. $y \coloneqq X\bm{w} + \mathcal{N}(0, \epsilon)$.
We estimate datamodels using various $\alpha$, and measure their ability to detect features at different frequencies $p$.
To quantify latter, we compute the {\em Pearson correlation} between i) the {\em actual effect} of removing the subpopulation $S_k$ on a test example and ii) the {\em datamodel-predicted} effect, across all features with frequency $p$.
Each line in the above plot corresponds to features of a particular frequency $p$, and shows the correlation ($y$-axis) while varying the datamodel ($\alpha$, $x$-axis).
Consistent with our intuition, we observe that higher (resp. lower) values of $\alpha$ are better at detecting less (resp. more) frequent features, i.e. more local (resp. global) effects.}
\label{fig:sim_correlations}
\end{figure}
\clearpage
\section{Selecting Output Function to Model}
\label{app:output_fn}
In this section, we outline a heuristic method for selecting the output
function $\modeleval{x}{S}$ to model. The heuristic is neither sufficient {\em nor}
necessary for least-squares regression to work, but may provide some signal as to
which output may yield better datamodels.

The first problem we would like to avoid is ``output saturation,'' i.e.,
being unable to learn a good datamodel due to insufficient variation in the
output. This effect is most pronounced when we measure model correctness:
indeed, over 30\% of the CIFAR-10 test set is either always correct or always
incorrect over all models trained, making datamodel estimation impossible.
However, this issue is not unique to correctness. We propose a very simple test
inspired by the idealized ordinary least squares model to measure how normally
distributed a given type of model output is.

\paragraph{Normally distributed residuals.} In the idealized ordinary least squares model,
the observed outputs $\modeleval{x}{S}$ would follow a normal
distribution with fixed mean $(\theta^{\star})^\top \mask{S}$ and unknown variance, where $\theta^{\star}$ is the true parameter vector.
Although we cannot guarantee this condition, we can measure the
``normality'' of the outputs (again, for a single fixed subset), with the
intuition that the more normal the observed outputs are, the
better a least-squares regression will work.
Hence, compare different output functions by estimating the noise distribution of datamodels
given each choice of output function.
We leverage our ability---in contrast to typical settings for regression---
to sample multiple response variables $\modeleval{x}{S}$ for a fixed
$S$ (by retraining several models on the same data and recording the output on
a fixed test example).

In Figure~\ref{fig:heuristic_output_fn}, we show the results of normality test for residuals arising from different choices of $\modeleval{\cdot}{S}$: correctness
function, confidence on the correct class, cross-entropy loss, and finally
correct-class margin\footnote{Correct-class margin is the difference between the
correct-class logit and the highest incorrect-class logit; it is unbounded by definition, and its sign indicates the correctness of the classification.}.
Correct-class margins is the only choice of $\modeleval{\cdot}{S}$ where
the $p$-values are distributed nearly uniformly, which is consistent with the
outputs being normally distributed.
Hence, we choose to use the correct-class margins
as the dependent variable for fitting our datamodels.

\begin{figure}[h]
    \includegraphics[width=\textwidth]{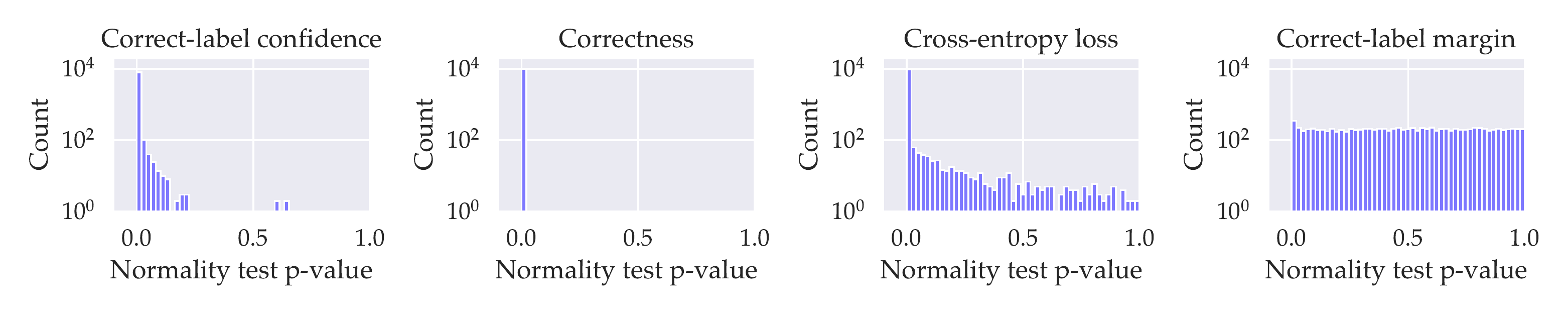}
    \caption{{\bf Correct-class margins are close to normally distributed.} For each choice of output function $f_{\mathcal{A}}$: we (a)
    fix a random subset $S' \sim  \mathcal{D}_{S}$, where $S$
    is the CIFAR-10 train set, (b) train 200 models on $S'$ and evaluate them on the
entire CIFAR-10 test set, and (c) for each image $x_i$ in the test set, calculate a $p$-value for rejecting the normality of $\modeleval{x_i}{S'}$. We plot a histogram of these
    $p$-values above. For every output function other than correct-class
    margin, almost every test is rejected, whereas for margins the
    distribution of $p$-values is uniform across $[0, 1]$, which is
    consistent with the null hypothesis (normality).}
    \label{fig:heuristic_output_fn}
\end{figure}

\clearpage
\section{Experimental Setup}
\subsection{Datasets}
\label{app:datasets}
\paragraph{CIFAR-10.} We use the standard CIFAR-10 dataset \citep{krizhevsky2009learning}.

\paragraph{\fmow.}
\fmow{} \citep{christie2018functional} is a land use classification dataset based on satellite imagery.
WILDS \citep{koh2020wilds} uses a subset of \fmow{} and repurposes it as a benchmark for out-of-distribution (OOD) generalization; we use same the variant (presized to 224x224, single RGB image per example rather than a time sequence). We perform our analysis only on the in-distribution train/test splits (e.g. overlapping years) as our focus is not on OOD settings.
Also, we limit our data to the year 2012.
(These restrictions are only for convenience, and our framework can easily extend and scale to more general settings.)

Properties of both datasets are summarized in \Cref{table:dataset}.

\begin{table}[h]
\centering
\caption{Properties of datasets used.}
\label{table:dataset}
\begin{tabular}{lrrrrrr}
\toprule
\textbf{Dataset} & Classes & Size (Train/Test) & Input Dimensions \\
\midrule
CIFAR-10 &  10 & 50,000/10,000 & $3 \times 32\times 32$ \\
\fmow & 62 & 21,404/3,138 & $3 \times 224\times 224$ \\
\bottomrule
\end{tabular}
\end{table}

\subsection{Models and hyperparameters}
\label{app:hyperparams}
\paragraph{CIFAR-10.}
We use a ResNet-9 variant from Kakao Brain\footnote{\url{https://github.com/wbaek/torchskeleton/blob/master/bin/dawnbench/cifar10.py}} optimized for fast training.
The hyperparameters (\Cref{table:hyperparams}) were chosen using a grid search.
We use the standard batch SGD.
For data augmentation, we use random 4px random crop with reflection padding, random horizontal flip, and $8\times8$ CutOut \citep{devries2017improved}.

For counterfactual experiments with ResNet-18 (\Cref{fig:rn18_causality}), we use the standard variant \citep{he2016deep}.

\paragraph{\fmow.}
We use the standard ResNet-18 architecture \citep{he2016deep}.
The hyperparameters (\Cref{table:hyperparams}) were chosen using a grid search, including over different optimizers (SGD, Adam) and learning rate schedules (step decay,
cyclic, reduce on plateau).
As in \citet{koh2020wilds}, we  do not use any data augmentation.
Unlike prior work, we do not initialize from a pre-trained ImageNet model; while this results in lower accuracy, this allows us to focus on the role of the \fmow{} dataset in isolation.

\begin{table}[h]
\centering
\caption{Hyperparameters for used model class.}
\label{table:hyperparams}
\begin{tabular}{lcccccc}
\toprule
\textbf{Dataset} & Initial LR & Batch Size & Epochs & Cyclic LR Peak Epoch & Momentum & Weight Decay \\
\midrule
CIFAR-10 &  0.5 & 512 & 24 & 5 & 0.9 & 5e-4 \\
\fmow & 0.4 & 512 & 15 & 6 & 0.9 & 1e-3 \\
\bottomrule
\end{tabular}
\end{table}

\paragraph{Performance.}
In~\Cref{table:acc}, we show for each dataset the accuracies of the chosen model class (with its specific hyperparameters), across different values of $\alpha$.

\begin{table}
    \centering
    \caption{Accuracies for our chosen model classes on CIFAR-10 and \fmow{} across varying $\alpha$.}
    \label{table:acc}
    \begin{tabular}[h]{@{}ccc@{}}
        \toprule
        & \multicolumn{2}{c}{Accuracy (\%)} \\
        Subset size ($\alpha$) & {CIFAR-10} & {\fmow} \\
        \cmidrule{1-3}
        1.0 & 93.00  & 33.76 \\
        0.75 & 91.77 & 31.16 \\
        0.5 & 89.61 & 25.97 \\
        0.2 & 81.62 & 14.70 \\
        0.1 & 71.60 & N/A \\
        \bottomrule
    \end{tabular}
\end{table}

\subsection{Training infrastructure}
\paragraph{Computing resources.}
We train our models on a cluster of machines, each with 9 NVIDIA A100 GPUs and 96 CPU cores.
We also use half-precision to increase training speed.

\paragraph{Data loading.}
We use FFCV \citep{leclerc2022ffcv}, which removes the data loading bottleneck for smaller models and allows us achieve a throughput of over 5,000 CIFAR-10 models a day \emph{per} GPU.

\paragraph{Data processing.}
Our datamodel estimation uses (the characteristic vectors) of training subsets and model outputs (margins) on train and test sets.
Hence, we do not need to store any model checkpoints, as it suffices to store the training subset and the model outputs after evaluating at the end of training.
In particular, training subsets and model outputs can be stored as  $\nummodels \times \numtest$ or $\nummodels \times \numtrain$ matrices,
with one row for each model instance and one column for each train or test example.
All subsequent computations only require the above matrices.

\clearpage
\section{Regression}

\subsection{Solver details}
\label{app:ffcv_regression}
As mentioned in Section \ref{sec:constructing}, we construct datamodels by
running $\ell_1$-regularized linear regression, predicting correct-class margins
from characteristic vectors, or {\em masks}, $\mask{S_i}$. The resulting optimization problem is rather
large: for example, estimating datamodels for $\alpha = 50\%$ requires running
LASSO with a covariate matrix $X$ of size $50,000 \times 300,000$, which
corresponds to about 60GB of data; for $\alpha = 10\%$,
datamodels this increases five folds as there are 1.5 million models.
Moreover, we need to solve up to $60,000$ regression problems (one datamodel each train / test example).
The large-scale nature of our estimation problem rules out
off-the-shelf solutions such as scikit-learn \citep{pedregosa2011scikit}, GLMNet
\citep{friedman2010regularization}, or Celer \citep{massias2018celer}, all of which either runs out of memory or
does not terminate within reasonable time.

Note that solving large linear systems efficiently is an area of active research
(\citep{martinsson2020randomized}), and as a result we anticipate that datamodel estimation
could be significantly improved by applying techniques from numerical
optimization.
In this paper, however, we take a rather simple approach based on the SAGA
algorithm of \citep{gazagnadou2019optimal}. Our starting point is the GPU-enabled implementation
of \citet{wong2021leveraging}---while this implementation terminated (unlike the
CPU-based off-the-shelf solutions), the regressions are still prohibitively
slow (i.e., on the order of several GPU-hours per single datamodel estimation).
To address this, we make the following changes:

\paragraph{Fast dataloading.} The first performance bottleneck turns out to be
in dataloading. More specifically, SAGA is a minibatch-based algorithm: at each
iteration, we have to read $B$ masks (50,000-dimensional binary vectors) and $B$
outputs (scalars) and move them onto the GPU for processing. If the masks are
read from disk, I/O speed becomes a major bottleneck---on the other hand, if we
pre-load the entire set of masks into memory, then we are not able to run
multiple regressions on the same machine, since each regression will use
essentially the entire RAM disk. To resolve this issue, we use the FFCV library
\citep{leclerc2022ffcv} for dataloading---FFCV is based on memory mapping, and thus allows
for multiple processes to read from the same memory (combining the benefits of
the two aforementioned approaches). FFCV also supports batch pre-loading and
parallelization of the data processing pipeline out-of-the-box---adapting the
SAGA solver to use FFCV cut the runtime significantly.

\paragraph{Simultaneous outputs.} Next, we leverage the fact that the SAGA
algorithm is trivially parallelizable across different instances (sharing the same input matrix), allowing us to estimate
{\em multiple} datamodels at the same time. In particular, we estimate
datamodels for the entire test set in one pass, effectively cutting the
runtime of the algorithm by the test set size (e.g., 10,000 for CIFAR-10).

\paragraph{Optimizations.} In order to parallelize across test examples, we
need to significantly reduce the GPU memory footprint of the SAGA solver. We
accomplish this through a combination of simple code optimization (e.g., using
in-place operations rather than copies) as well as writing a few custom CUDA
kernels to speed up and reduce the memory consumption of algorithms such as soft
thresholding or gradient updating.

\paragraph{Experimental details.}
For each dataset considered, we chose a maximum $\lambda$: $0.01$ for CIFAR-10 test, $0.1$ for CIFAR-10 trainset, and
$0.05$ for \fmow{} datamodels. Next, we chose
$k = 100$ logarithmically spaced intermediate values between $(\lambda/100,
\lambda)$ as the regularization path.
We ran one regression per intermediate $\lambda$, using $m - 50,000$ samples
(where $m$ is as in the table in Figure \ref{fig:estimation} (right)) to
estimate the parameters
of the model and the remaining $50,000$ samples as a validation set. For each
image in the test set, we select the $\lambda$ corresponding to the
best-performing predictor (on the heldout set) along the regularization path. We
then {\em re-run} the regression once more using these optimal $\lambda$ values
and the full set of $m$ samples.

\clearpage
\subsection{Omitted results}
\label{app:regression_extra}

\begin{figure}[ht]
    \begin{minipage}{0.55\textwidth}
        \centering
        \includegraphics[width=\textwidth]{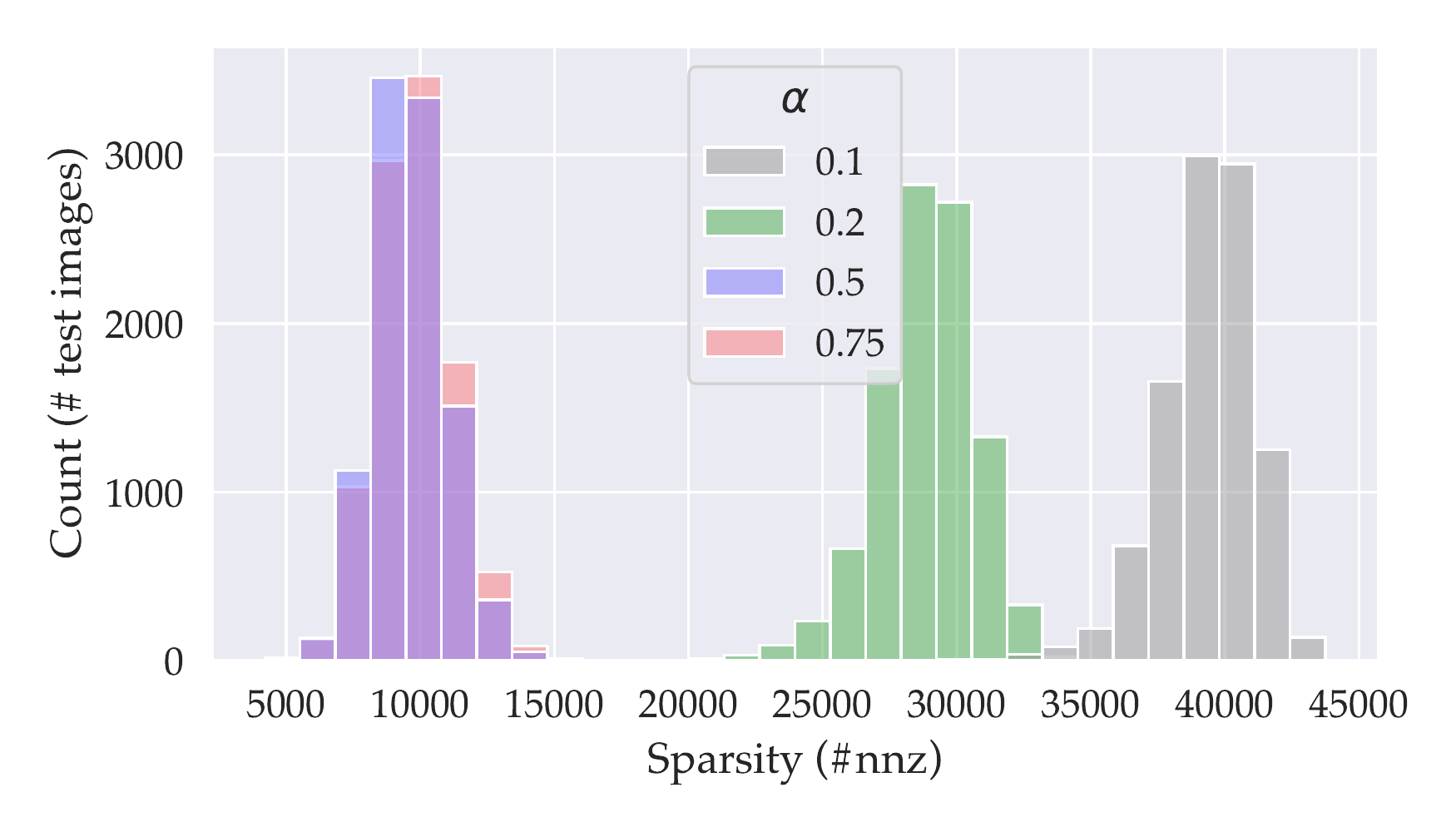}
        \caption{{\bf Sparsity distribution of different datamodels.}
        Above shows the distribution of datamodel sparsity over test examples on CIFAR-10, compared across different $\alpha$. Sparsity decreases with higher $\alpha$, which is consistent with our intuitions (\Cref{sec:alpha_choice}) that higher $\alpha$ captures relationships driven by smaller groups of images.}
        \label{fig:cifar_sparsity}
    \end{minipage}
    \hspace{1em}
    \begin{minipage}{0.35\textwidth}
        \centering
        \includegraphics[width=\textwidth,trim={16.85cm 0 0 0},clip]{paper_figs/estimation_fig.pdf}
        \caption{{\bf The role of regularization.} Average in-sample and
        out-of-sample MSE (i.e., \eqref{eq:mse}) for datamodels on CIFAR-10 estimated by
        optimizing the regularized least-squares objective
        \eqref{eq:testset_datamodels} for varying $\lambda$.}
        \label{fig:cifar_lambda_effect}
    \end{minipage}
\end{figure}

\begin{figure}[h]
    \centering
    \includegraphics[width=.8\linewidth]{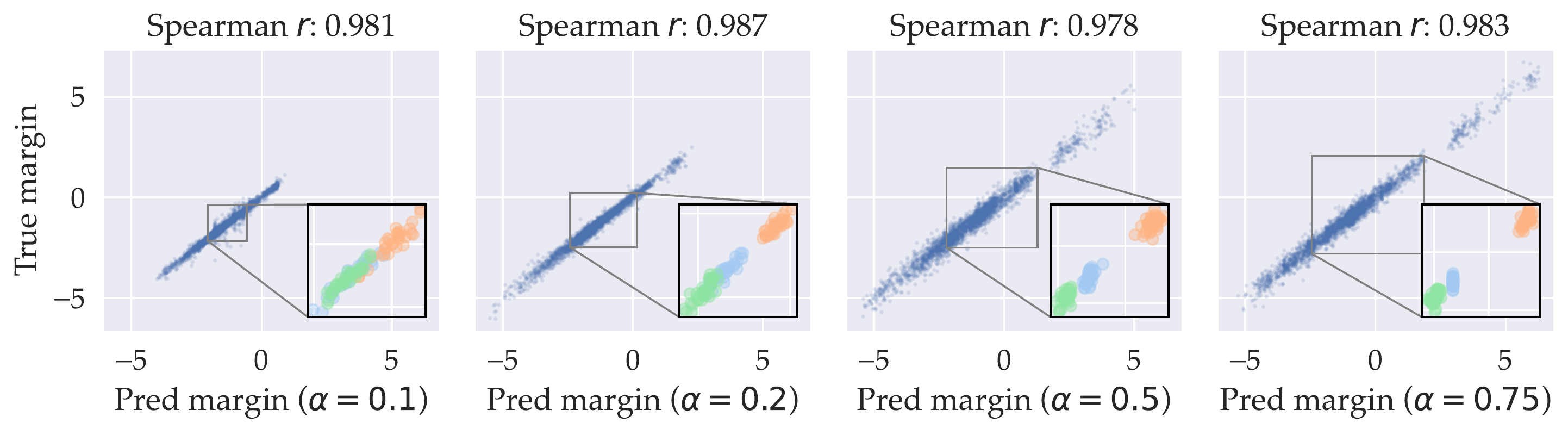}
    \caption{Identical results to Figure \ref{fig:ondistribution} for \fmow{}.}
    \label{fig:fmow_ondistribution}
\end{figure}

\begin{figure}[!h]
    \centering
    \includegraphics[width=.8\linewidth]{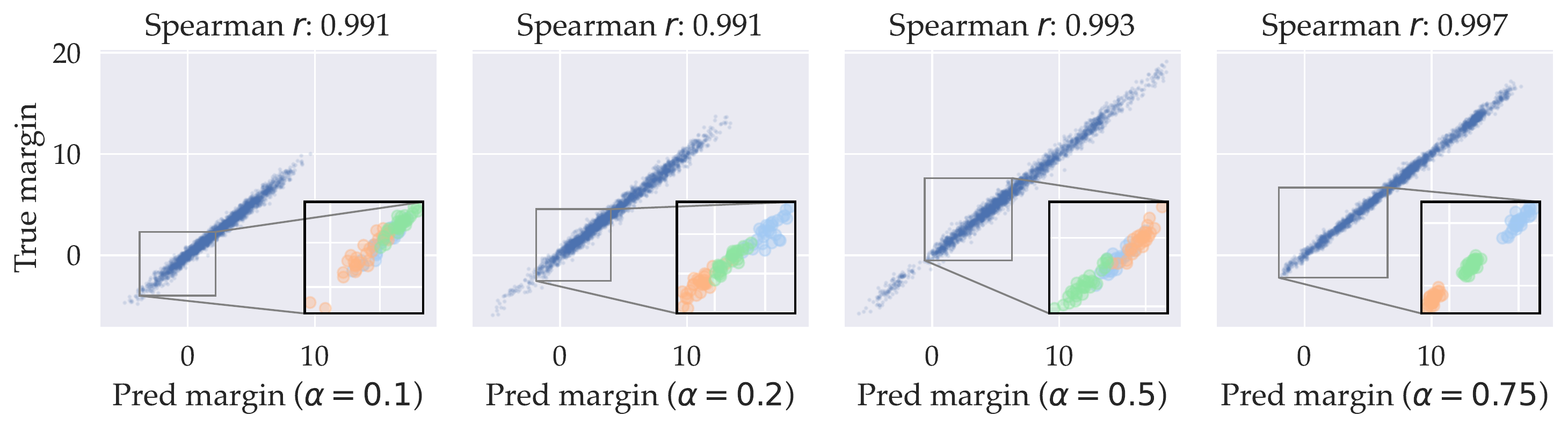}
    \caption{Identical results to Figure \ref{fig:ondistribution} for CIFAR-10 for
    all values of $\alpha$.}
    \label{fig:cifar_ondistribution_extra}
\end{figure}

\clearpage
\section{Counterfactual Prediction with Datamodels}
\label{app:causal}

\subsection{General setup}
\label{app:causal_details}

\paragraph{Sample selection.}
For all of our counterfactual experiments, we use a random sample of the respective test datasets.
We select at random 300 test images for CIFAR-10 (class-balanced; 30 per class) and 100 test images for \fmow{}. For the CIFAR-10 baselines, we consider counterfactuals for a 100 image subset of the 300.

\paragraph{Size of counterfactuals.}
For CIFAR-10, we remove top $k=\{10, 20, 40, 80, 160, 320, 640, 1280\}$ images
and bottom $k=\{20, 40, 80, 160, 320\}$ where applicable.
For \fmow{}, we remove top and bottom $k=\{10,$ $20,$ $40,$ $80,$ $160, 320, 640\}$.

\paragraph{Reducing noise by averaging.}
Each counterfactual (i.e., training models on a given training set $S'$) is evaluated over $T$ trials to reduce the variance that arises purely from non-determinism in model training.
We use $T=20$ for CIFAR-10 and \fmow{}, and $T=10$ for CIFAR-10 baselines. In \Cref{app:varying_T}, we show that using sufficiently high $T$ is important for reducing noise.

\paragraph{Control values.}
To calculate the actual effects in all of our counterfactual evaluation, we need control values $\mathbb{E}[\modeleval{x}{S}]$ for the ``null,'' i.e, margins
when trained on 100\% of the data. We estimate this by averaging over models on trained on the full training set (10,000 for CIFAR-10 and 500 for \fmow{}).

\subsection{Baselines}
\label{app:baselines}

We describe the baseline methods used to generate data support estimates and counterfactuals. Each of the methods gives a way to select training examples that are most similar or influential to a target example.
As in prior work \citep{hanawa2021evaluation,pezeshkpour2021empirical}, we consider a representative set of baselines spanning both methods based on representation similarity and gradient-based methods, such as influence functions.

\paragraph{Representation distance.}
We use $\ell_2$ distances in the penultimate layer's representation to rank the training examples in order of similarity to the target test example.
We also evaluated dot product, cosine, and mahalanobis distances, but they did not show much variation in their counterfactual effects.\footnote{With the exception of dot product, which performs poorly due to lack of normalization; this is consistent with the findings in \citet{hanawa2021evaluation}.}

In order to more fairly compare with datamodels---so that we can disentangle the variance reduction from using many models and the additional signal captured by datamodels---we also  averaged up to 1000 models' representation distances\footnote{We simply average the ranks from each model, but there are potentially better ways to aggregate them.}, but this had no discernible difference on the size of the counterfactual effects.

\paragraph{Influence functions.}
We apply the influence function approximation introduced in \cite{koh2017understanding}. In particular, we use the following first-order approximation for the influence of $z$ on the loss $L$ evaluated at $z_{\text{test}}$:
\begin{align*}
    \mathcal{I}_{\text{up,loss}}(z, z_{\text{test}})
    &= - \nabla_{\theta} \ell (z_{\text{test}}, \hat{\theta})^\top H^{-1}_{\hat{\theta}}
    \nabla_{\theta} \ell(z,\hat{\theta})
\end{align*}
where $\hat{\theta}$ is the empirical risk minimizer on the training set and $H$ is the Hessian of the loss.
The influence here is just the dot product of gradients, weighted by the Hessian.
We approximate these influence values by using the methods in \cite{koh2017understanding} and as implemented (independently) in \texttt{pytorch-influence-functions}.\footnote{\url{https://github.com/nimarb/pytorch_influence_functions}}
As in \cite{koh2017understanding}, we take a pretrained representation (of a ResNet-9 model, same as that modeled by our datamodels), and compute approximate influence functions with respect to only the parameters in the last linear layer.

\paragraph{TracIn.}
\citet{pruthi2020estimating} define an alternative notion of influence: the influence of a training example $z$ on a test example $z'$ is the total change in loss on $z'$ contributed by updates from mini-batches containing $z$---intuitively, this measures whether gradient updates from $z$ are helpful to learning example $z'$.
They approximate this in practice with \texttt{TracInCP}, which considers checkpoints $\theta_{t_1},...,\theta_{t_k}$ across training, and sums the dot product of the gradients at $z$ and $z'$ at each checkpoint:
\[
    \texttt{TracInCP}(z, z') = \sum\limits_{i=1}^k \eta_i \nabla_\theta\ell(z, \theta_{t_i}) \cdot
    \nabla_\theta \ell(z', \theta_{t_i})
\]
One can view \texttt{TracInCP} as a variant of the gradient dot product, but averaged over models at different epochs) and weighted by the learning rate $\eta_i$.

\paragraph{Random baseline.}
We also consider a random baseline of removing examples from the same class.

\subsection{Data support estimation}
\label{app:brittleness}

\paragraph{Setup.}
We use datamodels together with counterfactual evaluations in a guided search to efficiently estimate upper bounds on the size of data supports.
    For a given target example $x$ with corresponding datamodel $g_\theta$,
    we want to find candidate training subsets of small size $k$ whose removal most reduces the classification margin on $x$:
    \begin{equation}
        \label{eq:brittleness_min}
        G_k \coloneqq \arg\min_{|G| = k} g_\theta(S \setminus G_k).
    \end{equation}
    Because $g_\theta$ is a linear model in our case, the solution to the above
    minimization problem is simply the set corresponding to the largest $k$ coordinates of the datamodel parameter $\theta$:
    \begin{equation}
        G_k = \arg\max_{G \subset S; |G|=k} \theta^\top\mask{G}
        = \text{top-$k$ indices of $\theta$.}
        \label{eq:brittleness_min_linear}
    \end{equation}
    Our goal is to the find the smallest of these subsets $\{G_k\}_k$ so that $\modeleval{x}{S \setminus G_k} < 0$, i.e., the example is misclassified on average as per our definition.\footnote{Note that $\mathbb{E}\modeleval{x}{\cdot} < 0$ does not imply that the probability of misclassification is greater than 50\%. Nonetheless, it is a natural threshold.}
    Thus, for each target $x$, we try several values of $k \in \{10$, $20$, $40$, $80$, $160$, $320$, $640$, $1280\}$, training models on the set $S \setminus G_k$ and evaluating the resulting models on $x$.\footnote{While a binary search over $k$ for each $x$ would be more sample efficient, we collect the entire grid of samples for simplicity.}
    We train $T=20$ models on each counterfactual $S \setminus G_k$ to reduce variance.

    (Given that we are using datamodels as surrogates after all, one might wonder if the above counterfactual evaluations are actually necessary---one could instead consider estimating the optimal $k$ directly from $\theta$. We revisit a heuristic estimation procedure based on this idea at the end of this subsection.)

\paragraph{Estimation methodology.}
We assume that the expected margin $h(k) \coloneqq \modeleval{x}{S \setminus G_k}$ after removing $k$ examples decreases {\em monotonically} in $k$; this is expected from the linearity of our datamodels and is further supported empirically (see \Cref{fig:causal_per_ex}).
Then, our goal is to estimate the unique zero\footnote{More precisely, the upper ceiling as data support is defined as an integer quantity.} $\hat{k}$ of the above function $h(k)$ based on (noisy) samples of $h(k)$ at our chosen values of $k$.
Note that by definition, $\hat{k}$ is an upper bound on $\supp(x)$.
Now, because of our monotonicity assumption, we can cast estimating $\hat{k}$ as instance of an isotonic regression problem \citep{robertson1988order}); this effectively performs piecewise linear interpolation, while ensuring that monotonicity constraint is not violated.
We use \texttt{sklearn}'s \texttt{IsotonicRegression} to fit an estimate $h(k)$,
and use this to estimate $\hat{k}$.

\begin{figure}[!htb]
    \centering
    \includegraphics[width=1.\linewidth]{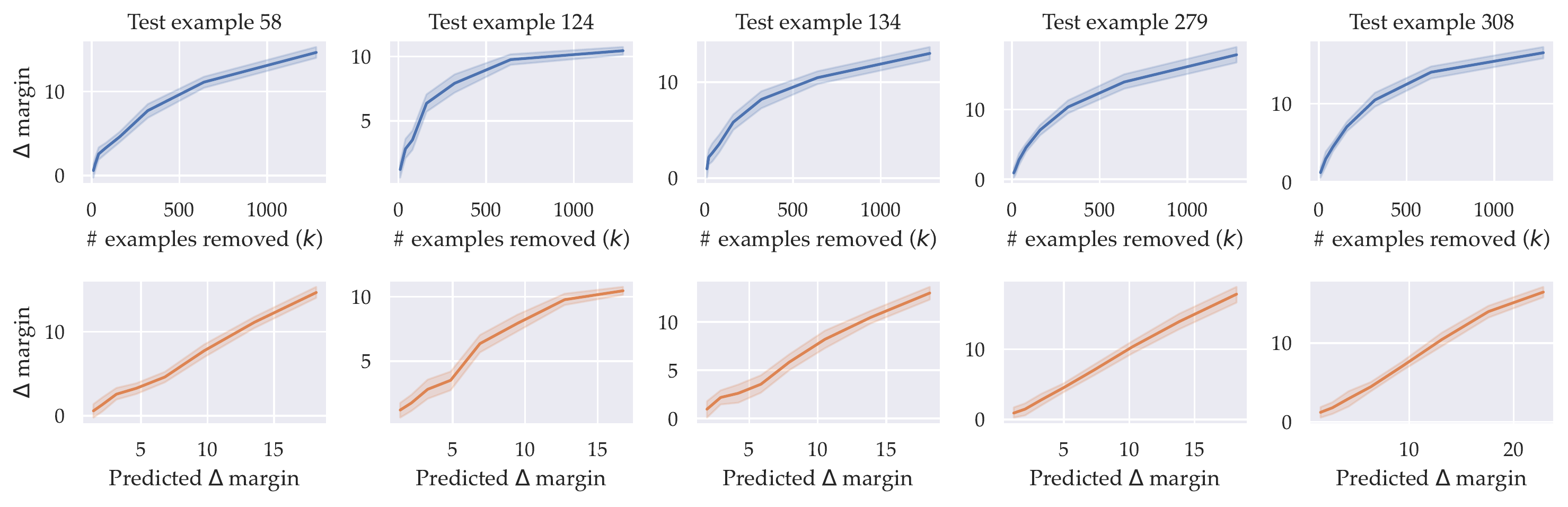}
    \caption{{\bf Counterfactuals for individual examples.}
    We plot the results of counterfactual evaluations (using $\alpha=0.5$ datamodels) for five individual examples, shown in separate columns.
    {\bf (Top)} The actual $\Delta$ margin changes monotonically with number of examples removed ($k$), corroborating the monotonicity assumption used in estimating data supports. {\bf (Bottom)} On $x$, we instead plot the {\em predicted} $\Delta$ margin using datamodels. This shows that the linearity seen in~\Cref{fig:cifar_causal} manifests even at a local level.}
    \label{fig:causal_per_ex}
\end{figure}

\paragraph{Verifying support estimates.}
Due to stochasticity in evaluating counterfactuals, the estimate $\hat{k}$ is noisy. Thus, it is possible that $\hat{k}$ is not a valid upper bound on $\supp(x)$, e.g. removing top $\hat{k}$ examples do not misclassify $x$.
In fact, removing $G_{\hat{k}}$ and re-training shows that only 67\% of the images are actually misclassified.
To establish an upper bound on $\supp(x)$ that has sufficient coverage, we
evaluate the counterfactuals after removing an additional 20\% of highest datamodel weights, e.g. removing top $\hat{k} \times 1.2$ examples
for each test example.
When an additional 20\% of training examples are removed,
92\% of test examples are misclassified.
Hence, we use $\hat{k} \times 1.2$ for our final estimates of $\supp(x)$.

\subsubsection{Estimation using baselines}
\label{app:brittleness_baselines}
As baselines, we use the same guided search algorithm described above, but instead
of using datamodel-predicted values to guide the search, we select the candidate subset using each of the baselines methods described in \Cref{app:baselines}.
In particular, we choose the candidate subset $R_k$ for a given $k$ as follows:
\begin{enumerate}
    \item {\em Representation distance}: top $k$ closest training examples to
    $x$ as measured by $\ell_2$ distance in the representation space of a
    pre-trained ResNet-9.
    \item {\em Influence estimates (influence functions and TracIn)}: top $k$ training examples with highest (most positive) estimated influence on the target example $x$:
    \item {\em Random}: first $k$ examples from a random ordering\footnote{The random ordering is fixed across different choices of $k$, but not across different targets.} of training examples from the same class as $x$.
\end{enumerate}

\subsubsection{Heuristic estimates for data support}
While we constructively estimate the data supports by training models on counterfactuals and using the above estimation procedure, we can also consider a simpler and cheaper heuristic to estimate $\supp(x)$ assuming the fidelity of the linear datamodels:
compute the smallest $k$ s.t. the sum of the $k$ highest datamodel weights for $x$ exceeds the average margin of $x$.
In~\Cref{fig:cifar_brittleness_heuristic}, we compare the predicted data supports based on this heuristic to the estimated ones from earlier, and find that they are highly correlated.
In practice, this can be a more efficient alternative to quantify brittleness without additional model training (beyond the initial ones to estimate the datamodels).

\begin{figure}[!htb]
    \begin{minipage}{0.5\linewidth}
        \centering
        \includegraphics[width=\linewidth]{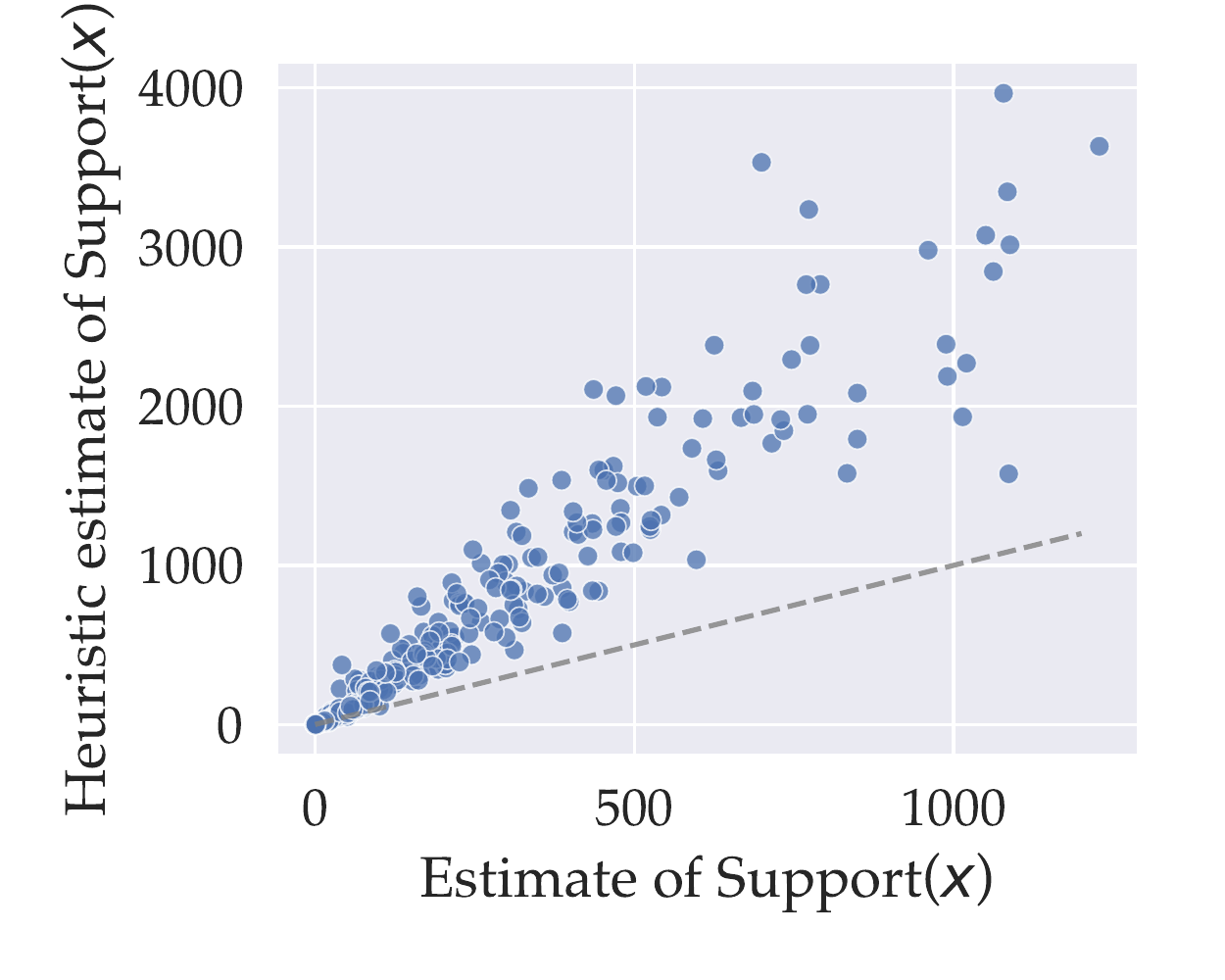}
        \caption{{\bf Heuristic predictions for data supports.}
        For each of the 300 test examples shown, the $x$-coordinate represents the previous estimates based on counterfactuals, and the $y$-coordinate represents the heuristic estimate.}
        \label{fig:cifar_brittleness_heuristic}
    \end{minipage}
    \hspace{1em}
    \begin{minipage}{0.5\linewidth}
        \centering
        \includegraphics[width=\linewidth]{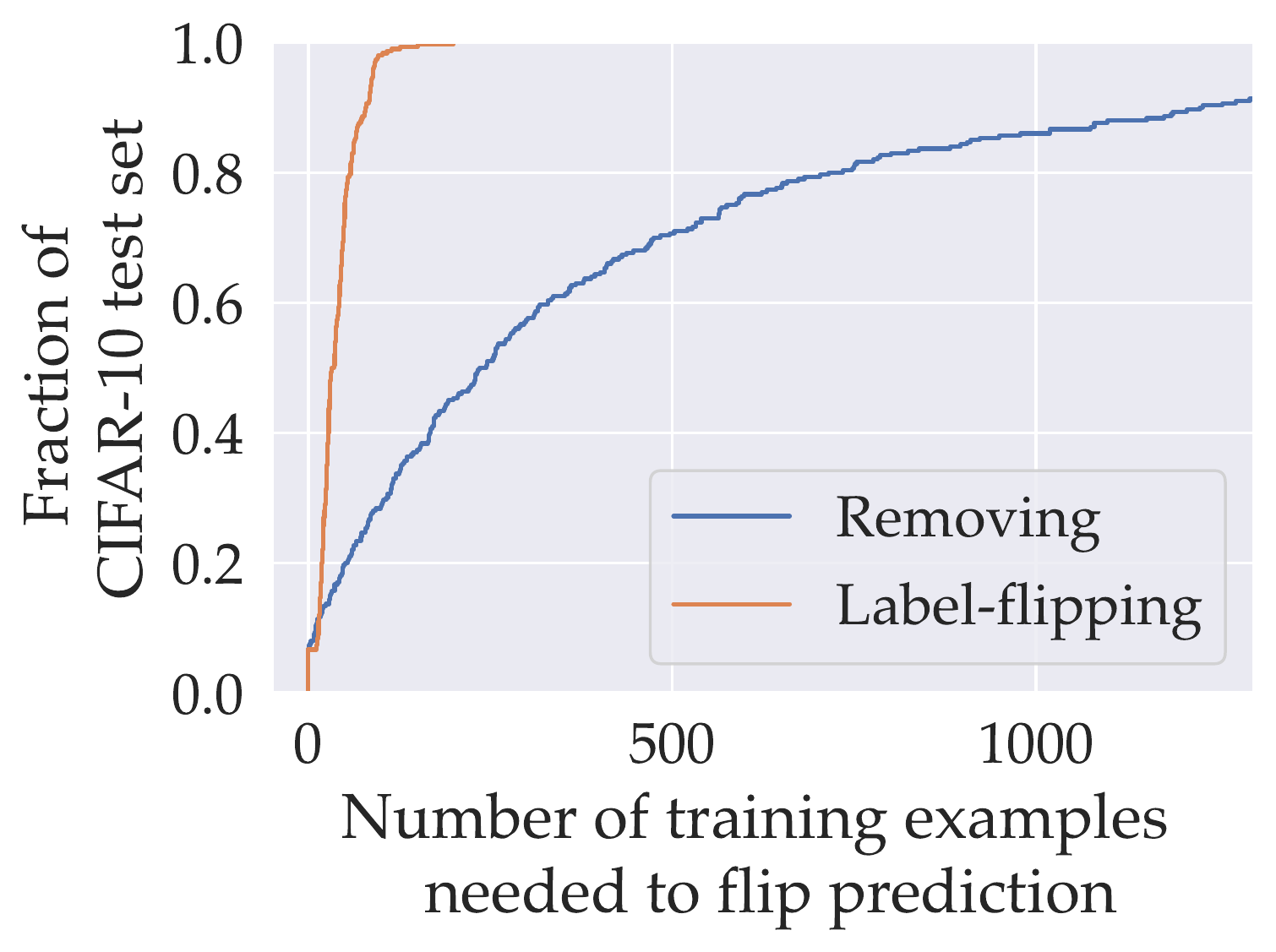}
        \caption{{\bf Brittleness to mislabeling.} We estimate an upper bound on the the smallest number of training images that can be mislabeled to flip a given target image. Much fewer images are required compared simply removing them (blue), as mislabeling provides additional signal to the model.}
        \label{fig:cifar_flippy}
    \end{minipage}
\end{figure}

\subsection{Brittleness to mislabeling}
\label{app:brittleness_flipping}

To study the brittleness of model predictions to mislabeling training images, we take the same 300 random CIFAR-10 test examples and analyze them as follows:
First, we find for each example the {\em incorrect} class with the highest average logit (across $\sim$10,000 models trained on the full training set).
Then, we construct counterfactual datasets similarly as in \Cref{app:brittleness} where we take the top $k=\{2,4,...,256\}$ training examples with the highest datamodel weights, but this time mislabel them with the incorrect class identified earlier.
After training $T=20$ models on each counterfactual, for each target example we estimate the number of mislabeled examples at which the expected margin becomes zero, using the same estimation procedure described in
\Cref{app:brittleness}.
The resulting mislabeling brittleness estimates are shown in \Cref{fig:cifar_flippy}.

\subsection{Comparing raw effect sizes}
Instead of comparing the data support estimates (which are derived quantities), here we directly compare the average counterfactual effect (i.e. delta margins) of groups selected using different methods.
\Cref{fig:cifar_baselines} shows again that datamodels identify much larger effects. Among baselines, we see that TracIn performs best, followed by representation distance. We also see that the representation baseline does not gain any additional signal from simple averaging over models.

\begin{figure}[!htb]
    \centering
    \includegraphics[width=1\linewidth]{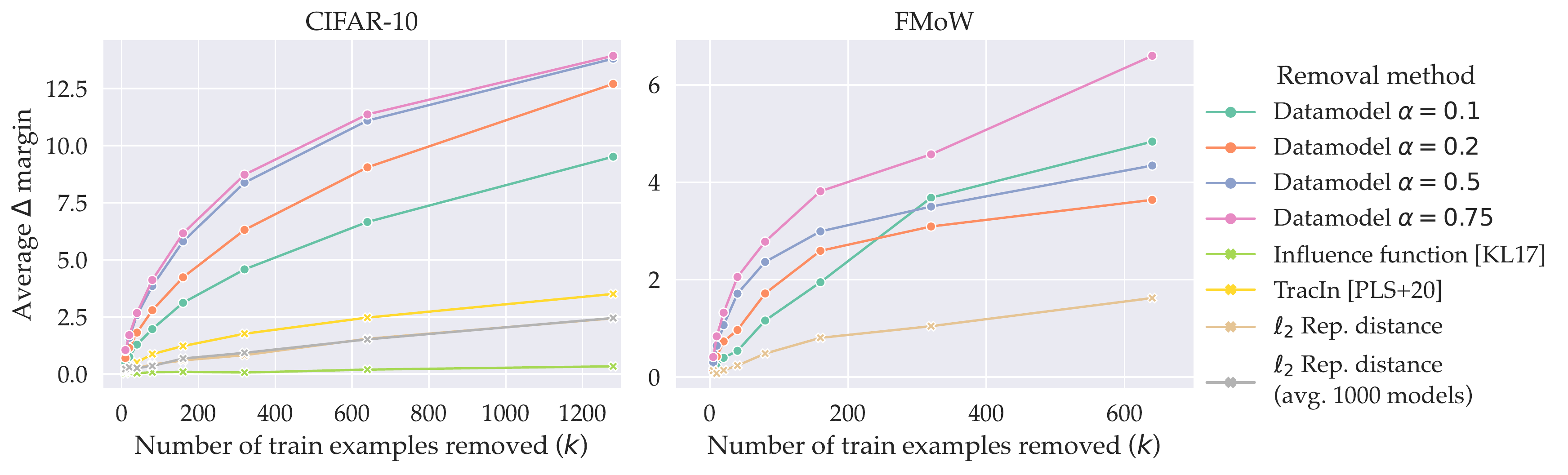}
    \caption{{\bf Comparing effect sizes with baselines}. This shows the raw evaluations of counterfactuals generated using different methods, which were also used for estimating data supports. The $y$-axis shows the effect on margin averaged across all target examples when top $k$ examples are removed for each target using each of the methods. Datamodels identify much larger effects compared to baselines. Among baselines, TracIn\citep{pruthi2020estimating} performs the best. For representation distance, there is no noticeable gain from reducing stochasticity by averaging over more models (1000 vs 1).}
    \label{fig:cifar_baselines}
\end{figure}

\subsection{Effect of training stochasticity}
\label{app:varying_T}
As described in \Cref{app:causal_details}, we re-train up to $T=20$ models for {\em each} counterfactual to reduce noise that arises solely from stochasticity of model training.
These additional samples significantly reduces unexplained variance: \Cref{fig:varying_T} shows the reduction in variance (``thickness'' in the $y$-direction) and the resulting increase in correlation as the number of re-training runs is increased from $T=1$ to $T=20$.

\begin{figure}[!htb]
    \begin{minipage}{0.60\textwidth}
        \centering
        \includegraphics[width=\textwidth]{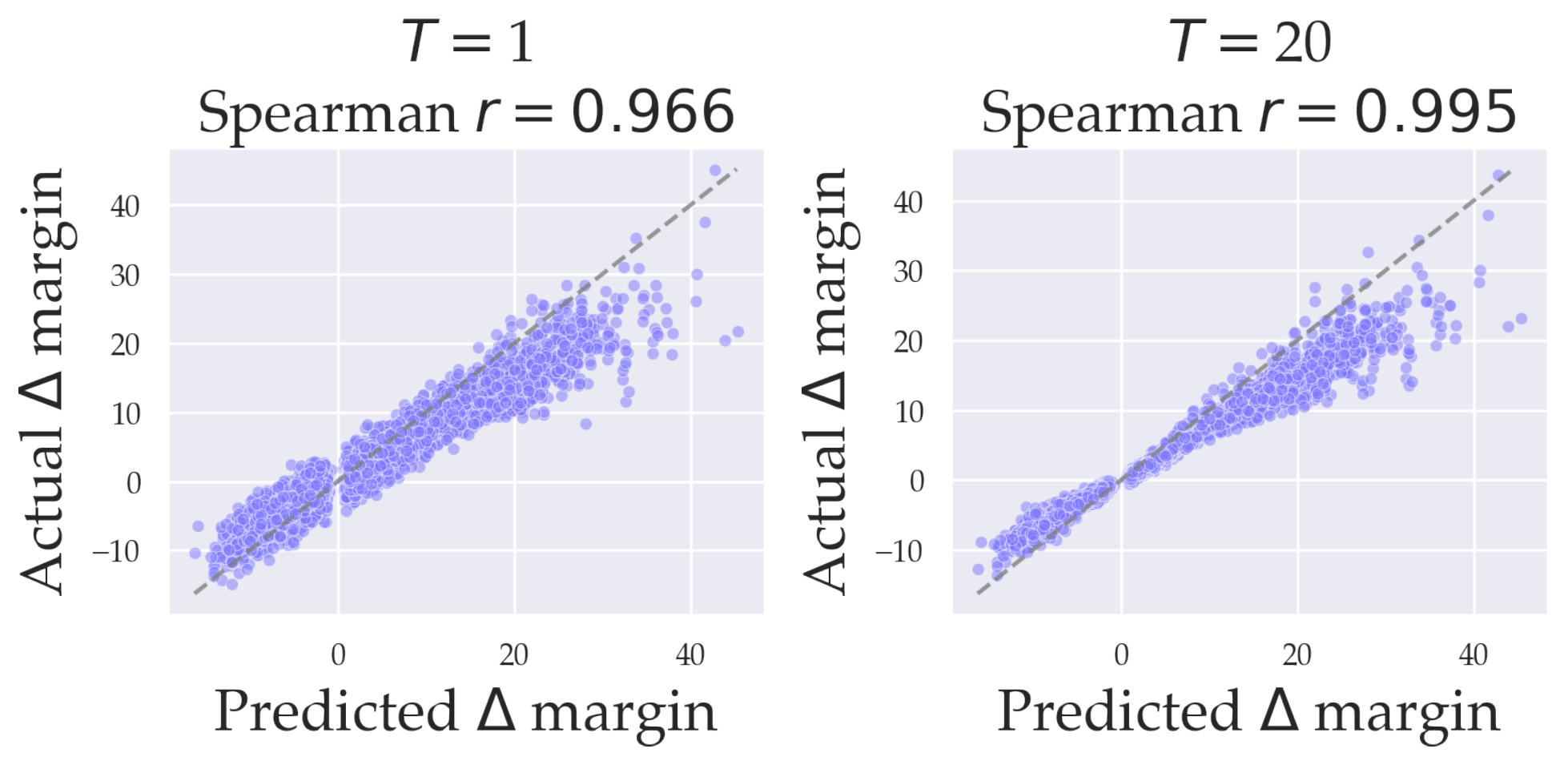}
        \caption{{\bf Effect of model averaging.} $T$ is the number of models trained per counterfactual.}
        \label{fig:varying_T}
    \end{minipage}
    \hspace{1em}
    \begin{minipage}{0.35\textwidth}
        \centering
        \includegraphics[width=\textwidth,trim={0 0 2.5cm 0},clip]{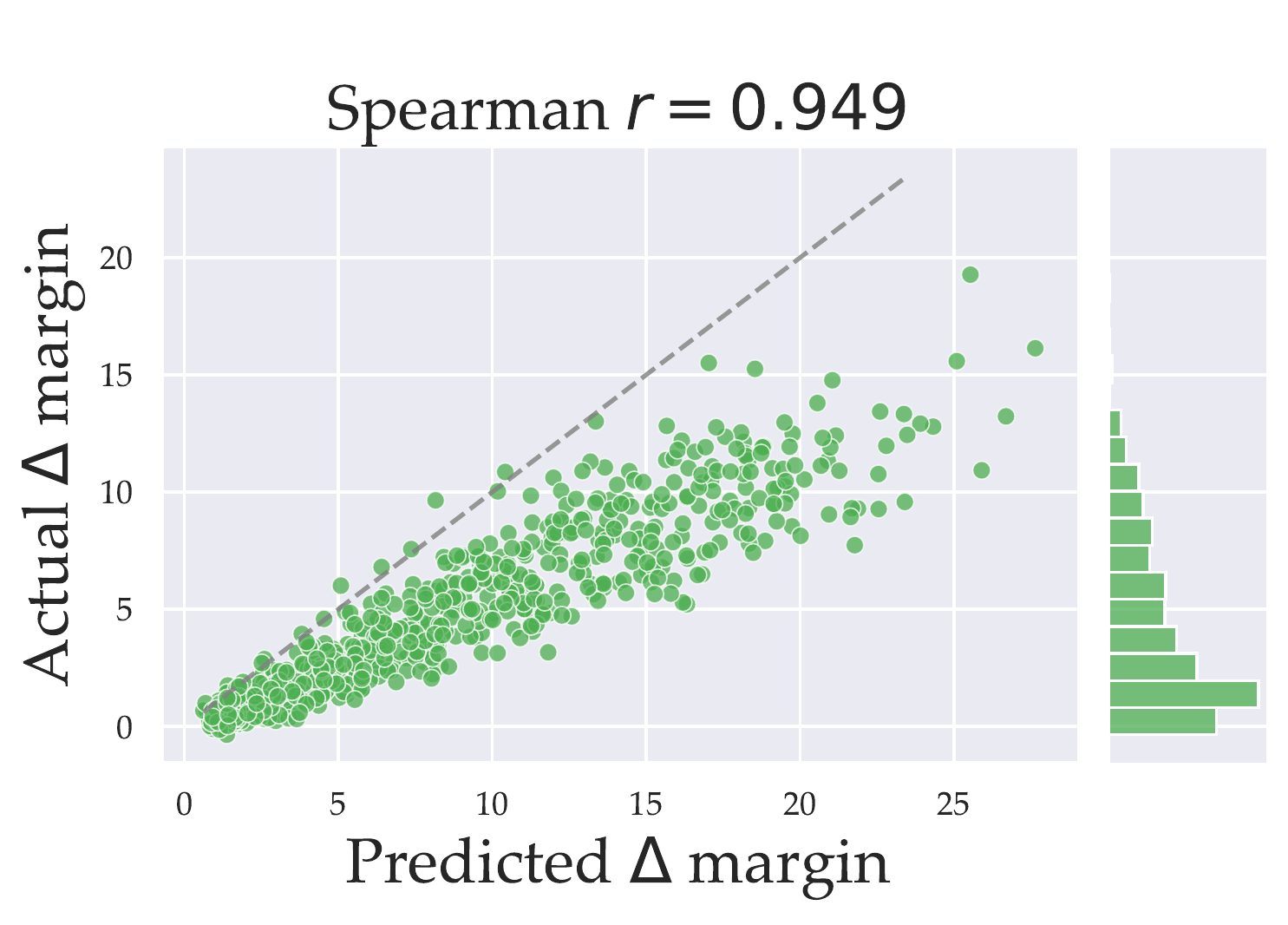}
        \caption{{\bf Transfer across model classes.}}
        \label{fig:rn18_causality}
    \end{minipage}
\end{figure}

\subsection{Transfer to different architecture}
While the main premise of datamodeling is understanding how data is used by a given {\em fixed} learning algorithm, it is natural to ask how well datamodels can predict across different learning algorithms.
We expect some degradation in predictiveness, as datamodels are fit to a particular learning algorithm; at the same time, we also expect some transfer of predictive power as modern deep neural networks are known to make similar predictions and errors \citep{mania2019model,toneva2019empirical}.

Here, we study one of the factors in a learning algorithm, the choice of architecture. We take the same counterfactuals and evaluate them on ResNet-18 models, using the same training hyperparameters.
As expected, the original datamodels continue to predict accurate counterfactuals for the new model class but with some degradation (\Cref{fig:rn18_causality}).

\subsection{Stress testing}
\label{app:causal_stress_test}

\Cref{sec:causal_view} showed that datamodels excel at predicting counterfactuals across a variety of removal mechanisms.
In an effort to find cases where datamodel predictions are not predictive of data counterfactuals, we evaluate the following additional counterfactuals:
\begin{itemize}
\item {\bf Larger groups
of examples (up to 20\% of the dataset)}: we remove $k=$ 2560, 5120, 10240 top weights using different datamodels $\alpha=0.1,0.2,0.5,0.75$. The changes in margin have more unexplained variance when larger number of images are removed; nonetheless, the overall correlation remains high
((\Cref{fig:cifar_causal_stress_k})).

\item {\bf Groups of training examples whose predicted effects are {\em zero}}: we remove $k=$ 20, 40, 80, 160, 320, 640, 1280, 2560 examples with zero weight ($\alpha=0.5$), chosen randomly among all such examples.
All of tested counterfactuals had negligible impact on the actual margin, consistent with the prediction of datamodels (\Cref{fig:cifar_causal_stress_zeros}).

\item {\bf Groups of examples whose predicted effect is {\em negative} according to baselines}: we test TracIn and influence functions.
(We do not consider the representation distance baseline here is there is no obvious way of extracting this information from it.)
Correlation degrades but remains high (\Cref{fig:cifar_causal_negative_baselines}). Note that the relative scale of the effects is much smaller compared to counterfactuals generated using datamodels (\Cref{fig:cifar_causal_stress_zeros}).

\end{itemize}

In general, although there is some reduction in datamodels'
predictiveness, we nevertheless find that datamodels continue to be accurate predictors of data counterfactuals.

\begin{figure}[!htb]
    \centering
    \includegraphics[width=\textwidth]{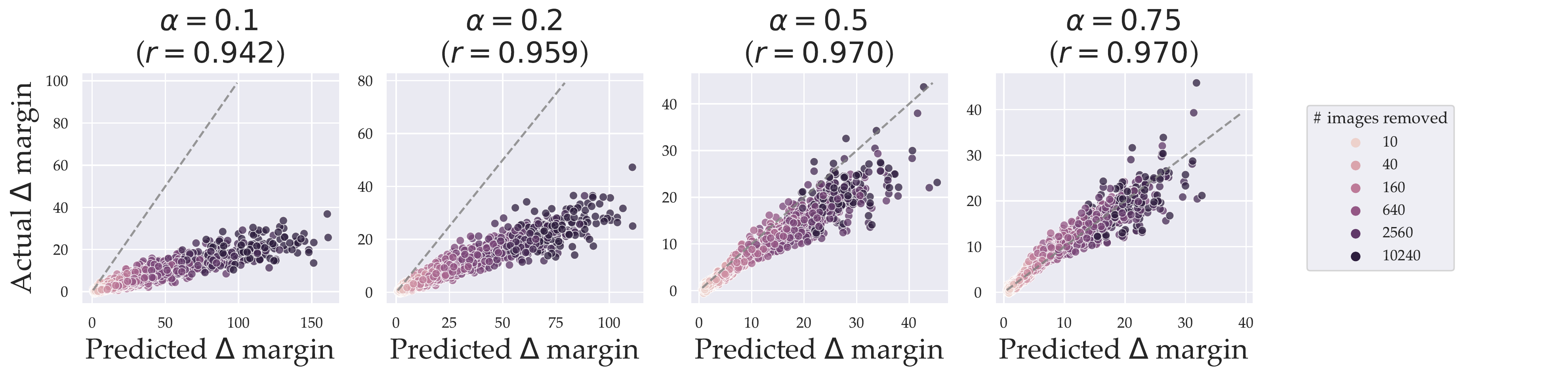}
    \caption{
        {\bf Stress testing datamodel counterfactuals by removing a large number of images.}
        Plot shows  datamodel counterfactuals from before ($k=10,...,1280$) along with additional samples $k=2560,5120, 110240$
        (shown with darker hue).
    }
    \label{fig:cifar_causal_stress_k}
\end{figure}

\begin{figure}[!htb]
    \centering
    \begin{subfigure}[]{0.48\linewidth}
        \centering
        \includegraphics[width=0.95\linewidth]{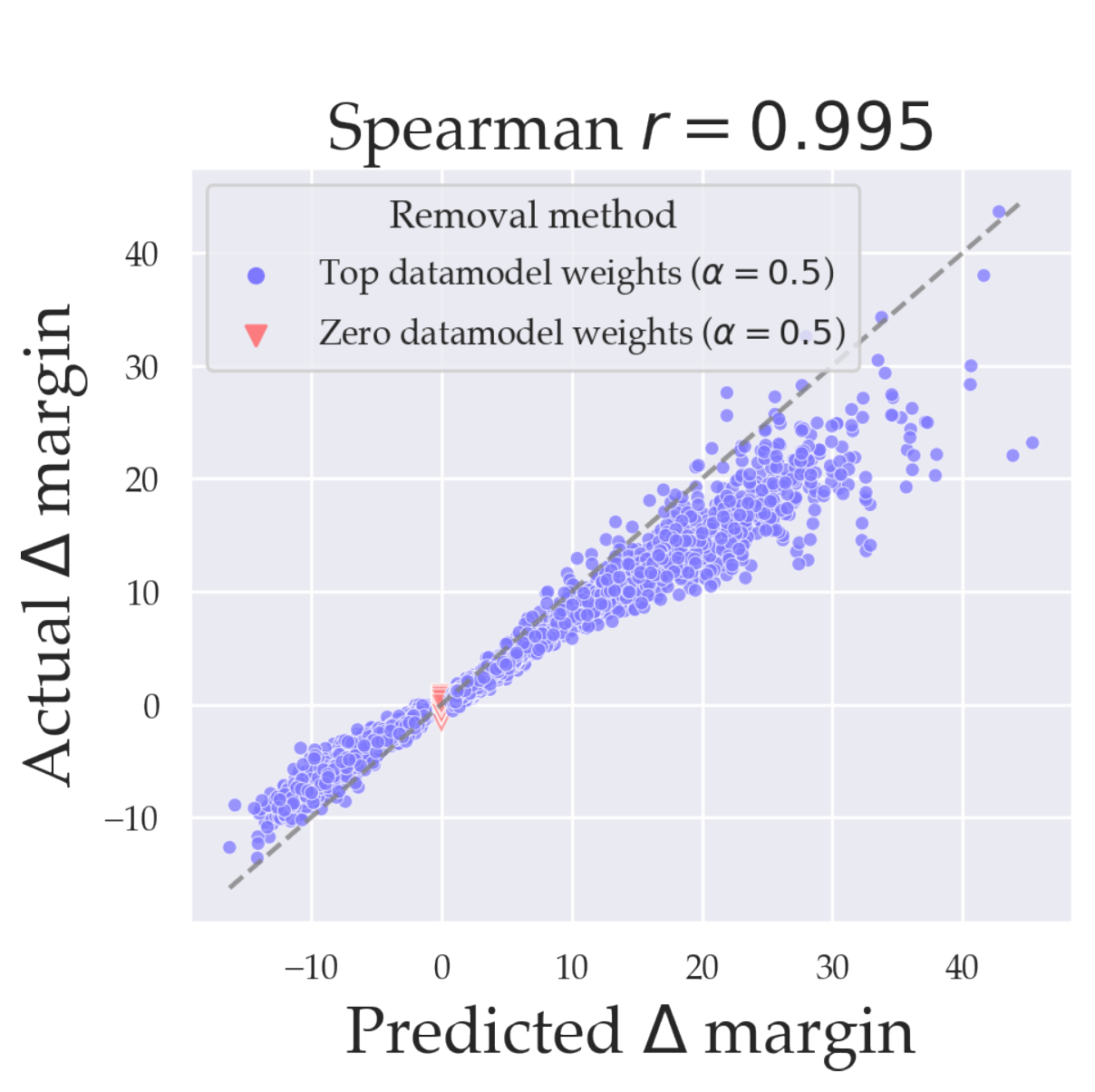}
        \caption{Comparing counterfactuals with highest vs {\em zero} predicted effect using datamodels ($\alpha = 0.5$)}
        \label{fig:cifar_causal_stress_zeros}
    \end{subfigure}
    \begin{subfigure}[]{0.48\linewidth}
        \centering
        \includegraphics[width=0.95\linewidth]{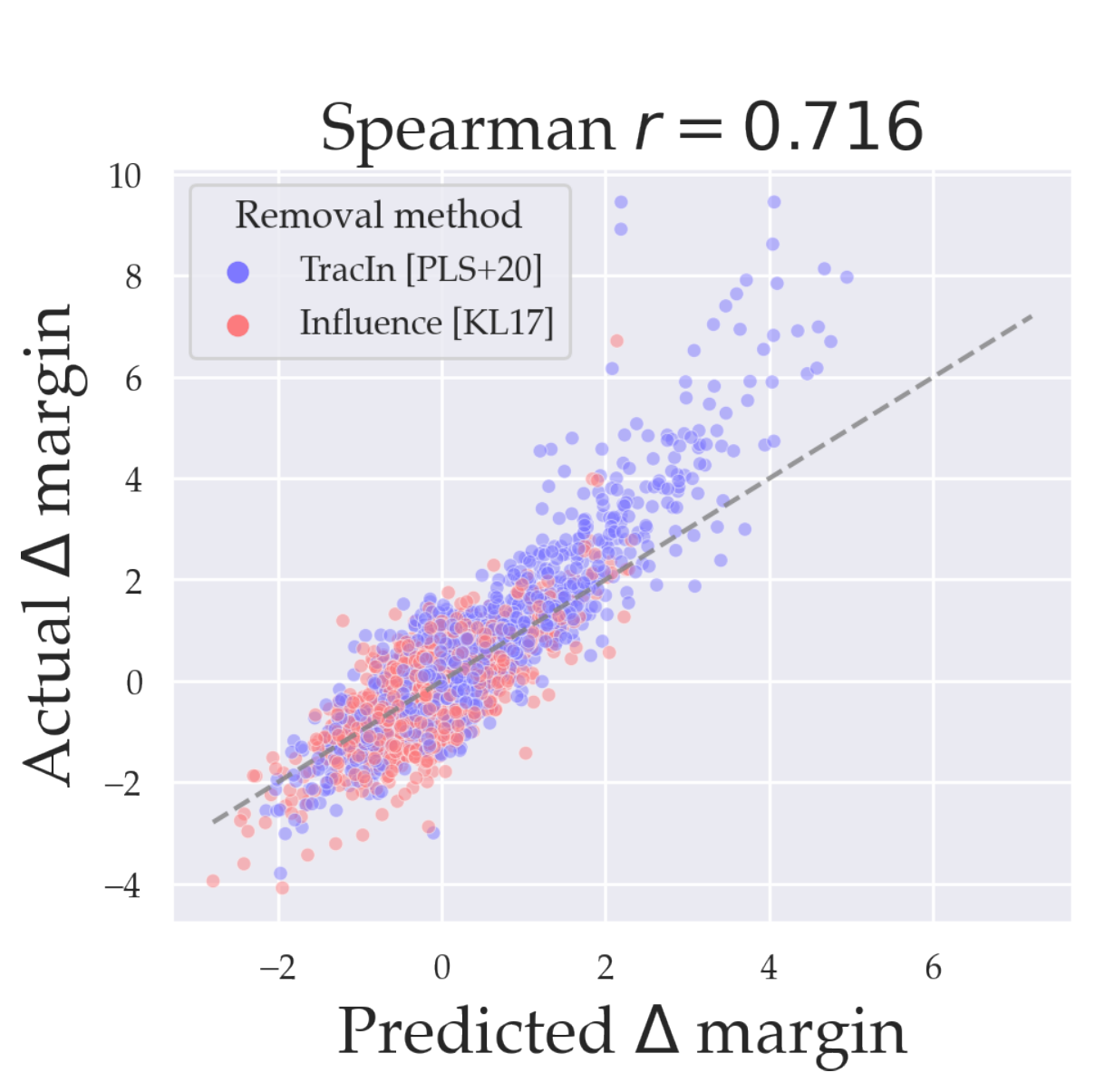}
        \caption{Removing top $k$ positive and negative influence training examples according to baseline methods.
        }
        \label{fig:cifar_causal_negative_baselines}
    \end{subfigure}
    \caption{
        {\bf Stress testing counterfactual prediction.}
    }
    \label{fig:cifar_causal_stress}
\end{figure}

\paragraph{Counterfactuals relative to a random control.}
All of the counterfactuals studied so far are relative a fixed control (the entire training set).
Here, %
we consider counterfactuals relative to a \emph{random} control $S_0 \sim \mathcal{D}_S$  at $\alpha=0.5$ (i.e. $|S_0| = \alpha |S|$).
The motivation for considering the shifted control is two folds:
first, the counterfactuals generated relative to such $S'$ are closer in distribution to the original distribution to which datamodels were fit to, so it is natural to study datamodels in this regime;
second, this tests whether the counterfactual predictability is robust to the exact choice of the trainset.
Latter is desirable, as ultimately we would like to understand how models behave on training sets similar in distribution to $S$, not the exact train set.

To implement above, after removing a target group $G$ from the full train set $S$, we subsample the remainder $S / G$ with probability $\alpha$.
We adjust the control values accordingly to
$\mathbb{E}_{S_0 \sim \mathcal{S}}[\modeleval{x}{S_0}]$,
where $\mathcal{D}_S$ is the $\alpha=0.5$ subsampling distribution.
The results show that one can indeed also predict counterfactuals relative a random control (\Cref{fig:cifar_causal_50}).

\begin{figure}[!htbp]
    \centering
    \begin{subfigure}[]{0.48\linewidth}
        \centering
        \includegraphics[width=0.95\linewidth,trim={0 0 2.5cm 0},clip]{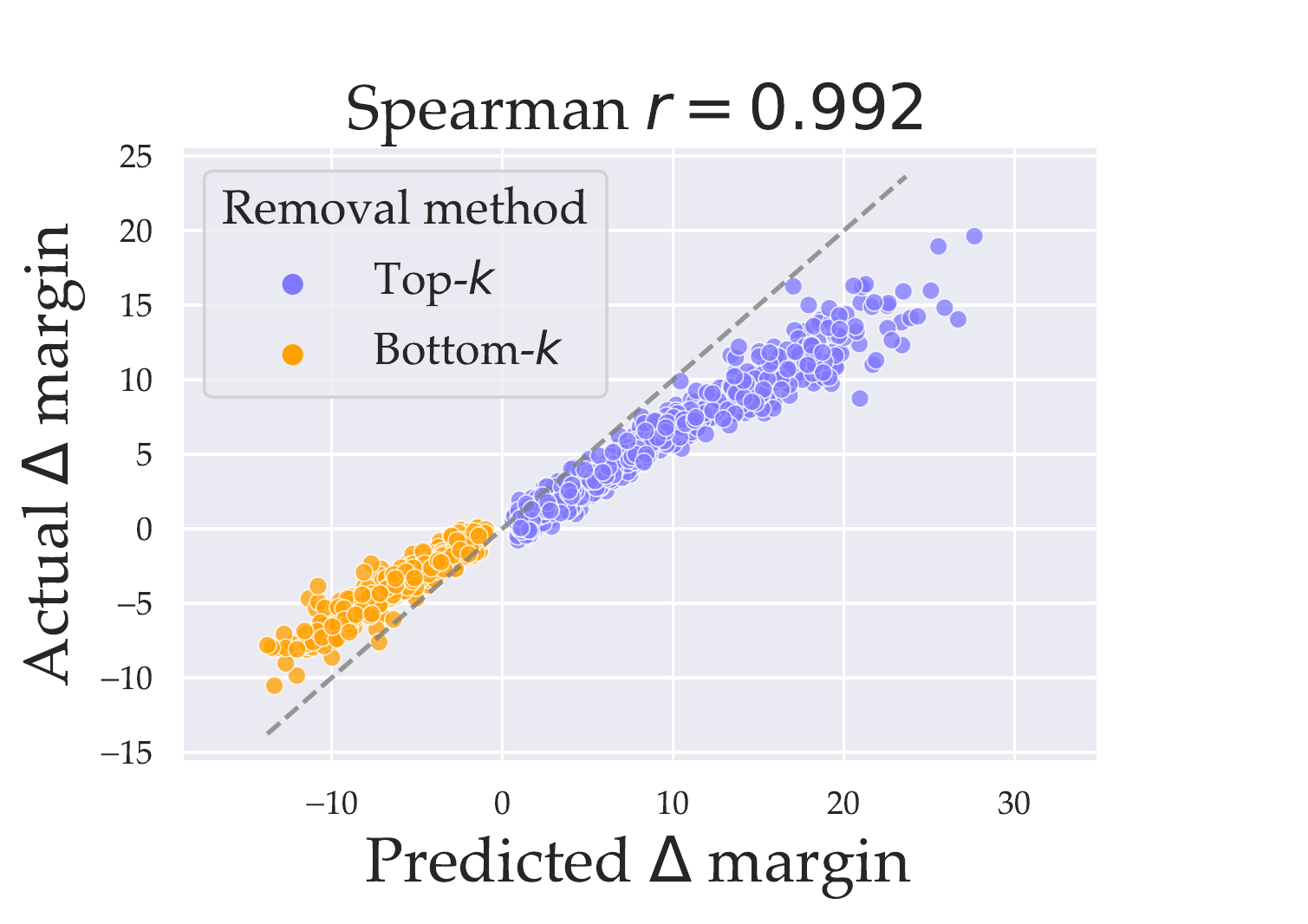}
        \caption{Random control ($\alpha = 0.5$ regime)}
        \label{fig:pct_50_orig}
    \end{subfigure}
    \begin{subfigure}[]{0.48\linewidth}
        \centering
        \includegraphics[width=0.95\linewidth,trim={0 0 2.5cm 0},clip]{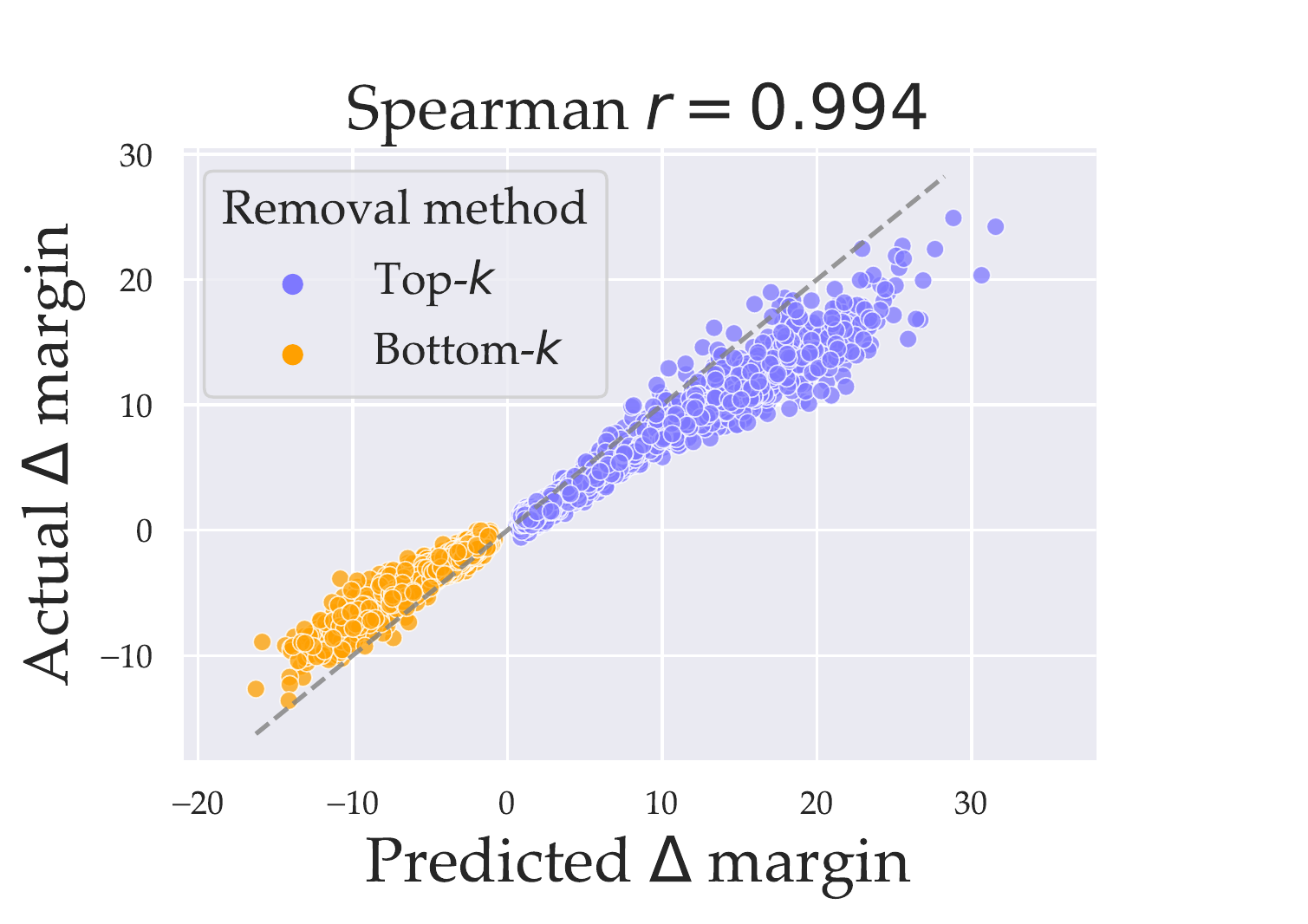}
        \caption{Fixed control ($\alpha = 1.0$)}
        \label{fig:pct_50_full}
    \end{subfigure}
    \caption{{\bf Datamodels can predict counterfactuals relative to random controls.}
    As in~\Cref{fig:cifar_causal}, each point in the graphs above corresponds to a test example and a counterfactual trainset $S'$ (a subset of the full training set, $S$). The counterfactual is relative a \emph{random} control $S_0 \sim \mathcal{D}_S$ with $\alpha=0.5$, e.g. a set randomly subsampled at $50\%$.
    The $y$-coordinate of each point represents the expected {\em ground-truth}
    difference, in terms of model output on $x$, between training on a random $S_0$, and training on $S'$.
    The $x$-coordinate of each point represents the {\em
    datamodel-predicted} value of this quantity.
    \textbf{(a)} We use the $\alpha = 0.5$ datamodels to predict counterfactuals
    generated by removing, for each test example, the training inputs
    corresponding to the top-$k$ and bottom-$k$ (for several $k$) datamodel weights.
    \textbf{(b)} Same, but relative to a fixed control $S_0 = S$, e.g. the full train set.
    }
    \label{fig:cifar_causal_50}
\end{figure}

\subsection{Additional plots for different \texorpdfstring{$\alpha$}{} values}
\label{app:causal_more_alpha}

\begin{figure}[!htb]
    \centering
    \includegraphics[width=\textwidth]{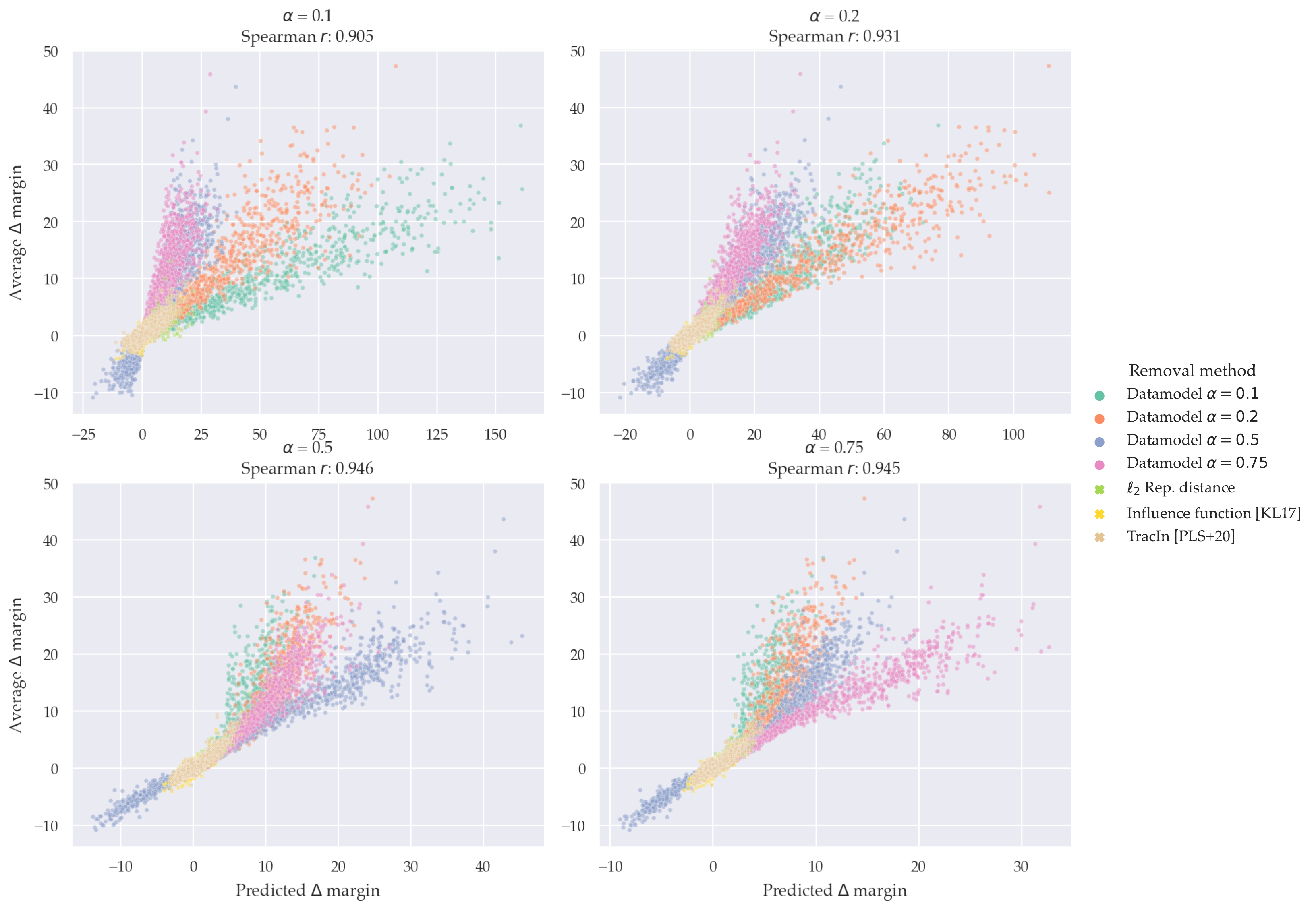}
    \caption{
        {\bf Varying $\alpha$ for counterfactual prediction (CIFAR-10).}
        Same plot as in~\Cref{fig:cifar_causal}, except varying the datamodels used for prediction; each plot uses datamodels with the given $\alpha$.
        As before, each point in the graphs above corresponds to a test example and a
        subset $R(x)$ of the original training set $S$, identified by a
        (color-coded) heuristic.
        The $y$-coordinate of each point represents the {\em ground-truth}
        difference, in terms of model output on $x$, between training on $S$, and
        training on $S \setminus R(x)$. The $x$-coordinate of each point represents the {\em
        datamodel-predicted} value of this quantity.
    }
    \label{fig:cifar_scatter_vary_alpha}
\end{figure}

\begin{figure}[!htb]
    \centering
    \includegraphics[width=\textwidth]{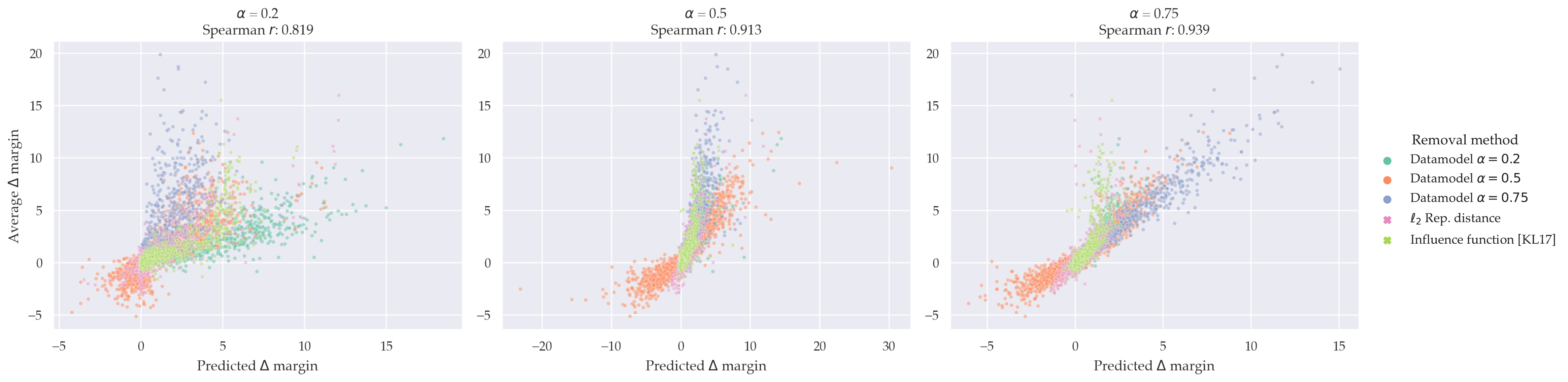}
    \caption{
        {\bf Varying $\alpha$ for counterfactual prediction (\fmow).}
        Same plot as in~\Cref{fig:cifar_causal}, except varying the datamodels used for prediction; each plot uses datamodels with the given $\alpha$.
        As before, each point in the graphs above corresponds to a test example and a
        subset $R(x)$ of the original training set $S$, identified by a
        (color-coded) heuristic.
        The $y$-coordinate of each point represents the {\em ground-truth}
        difference, in terms of model output on $x$, between training on $S$, and
        training on $S \setminus R(x)$. The $x$-coordinate of each point represents the {\em
        datamodel-predicted} value of this quantity.
    }
    \label{fig:fmow_scatter_vary_alpha}
\end{figure}

\clearpage
\section{Nearest Neighbors}
\label{app:nns}
In this section we show additional examples of held-out images and their corresponding
train image, datamodel weight pairs.

\subsection{CIFAR}
In Figure~\ref{appfig:posneg} we show more {\em randomly} selected test images along with their
positive and negative weight training examples.
In Figure~\ref{appfig:more_alphas} we show
more examples of test images and their corresponding top train images as
we vary $\alpha$.
In Figure~\ref{appfig:nn_baseline} we compare most similar images to given test images identified using various baselines (see \Cref{app:baselines} for their description).

\begin{figure}[!h]
\centering
\includegraphics[width=\textwidth]{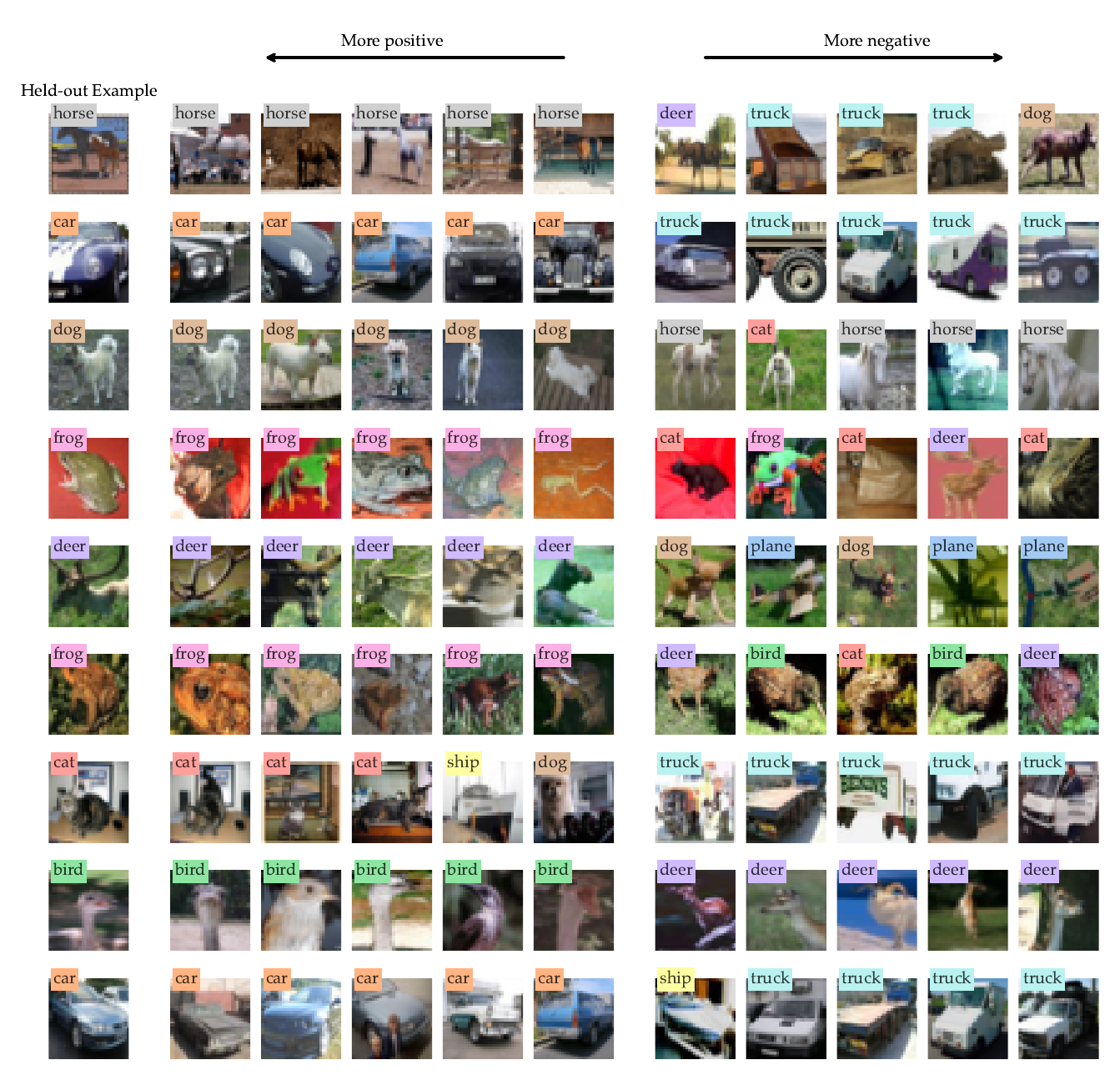}
\caption{Additional examples of held-out images and their corresponding
highest and lowest datamodel weight training images.}
\label{appfig:posneg}
\end{figure}

\begin{figure}[!hp]
\centering
\includegraphics[width=\textwidth]{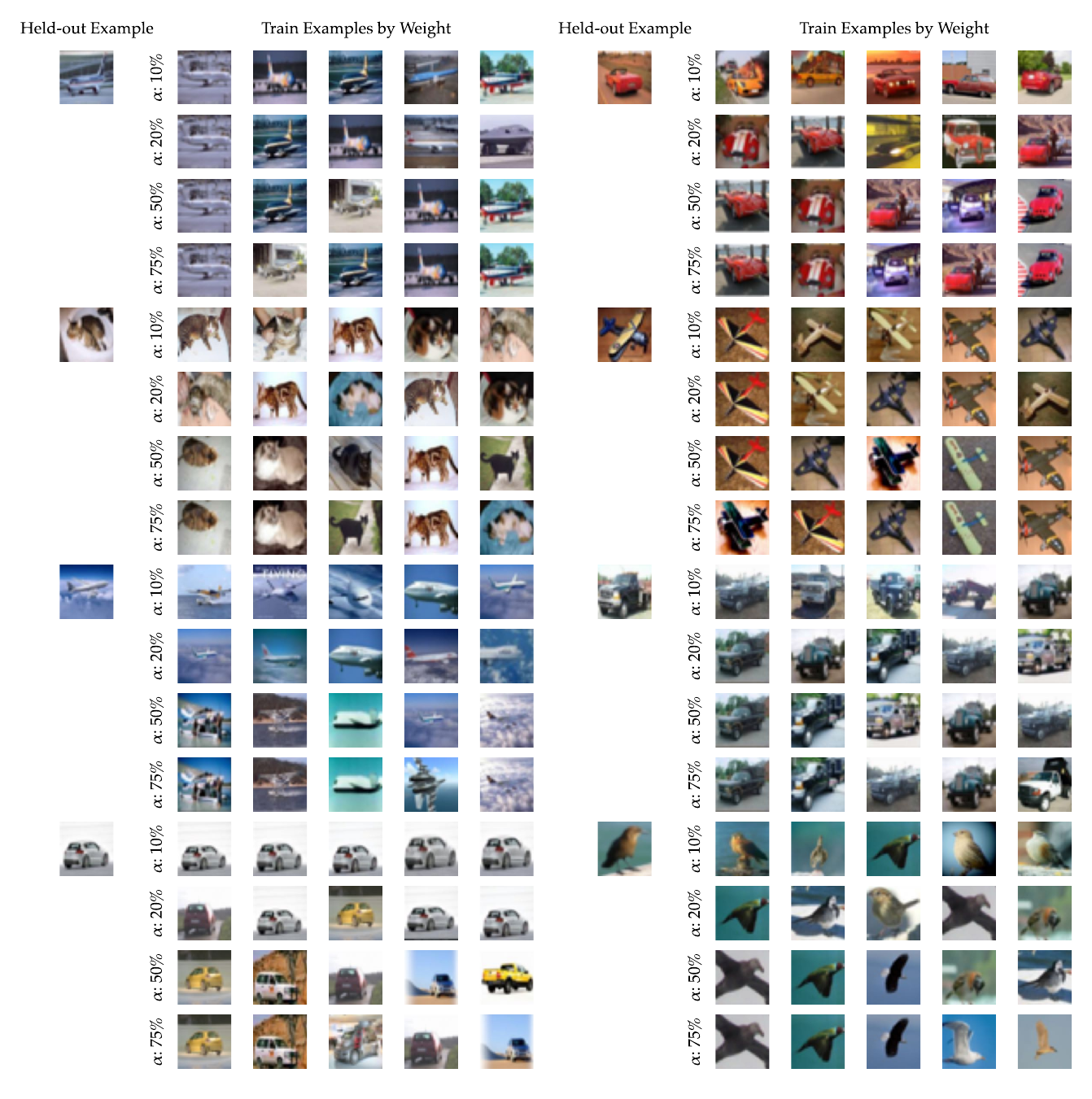}
\caption{Additional examples of held-out images and corresponding
most relevant training examples, while varying $\alpha$.}
\label{appfig:more_alphas}
\end{figure}

\begin{figure}[!hp]
    \centering
    \includegraphics[width=\textwidth]{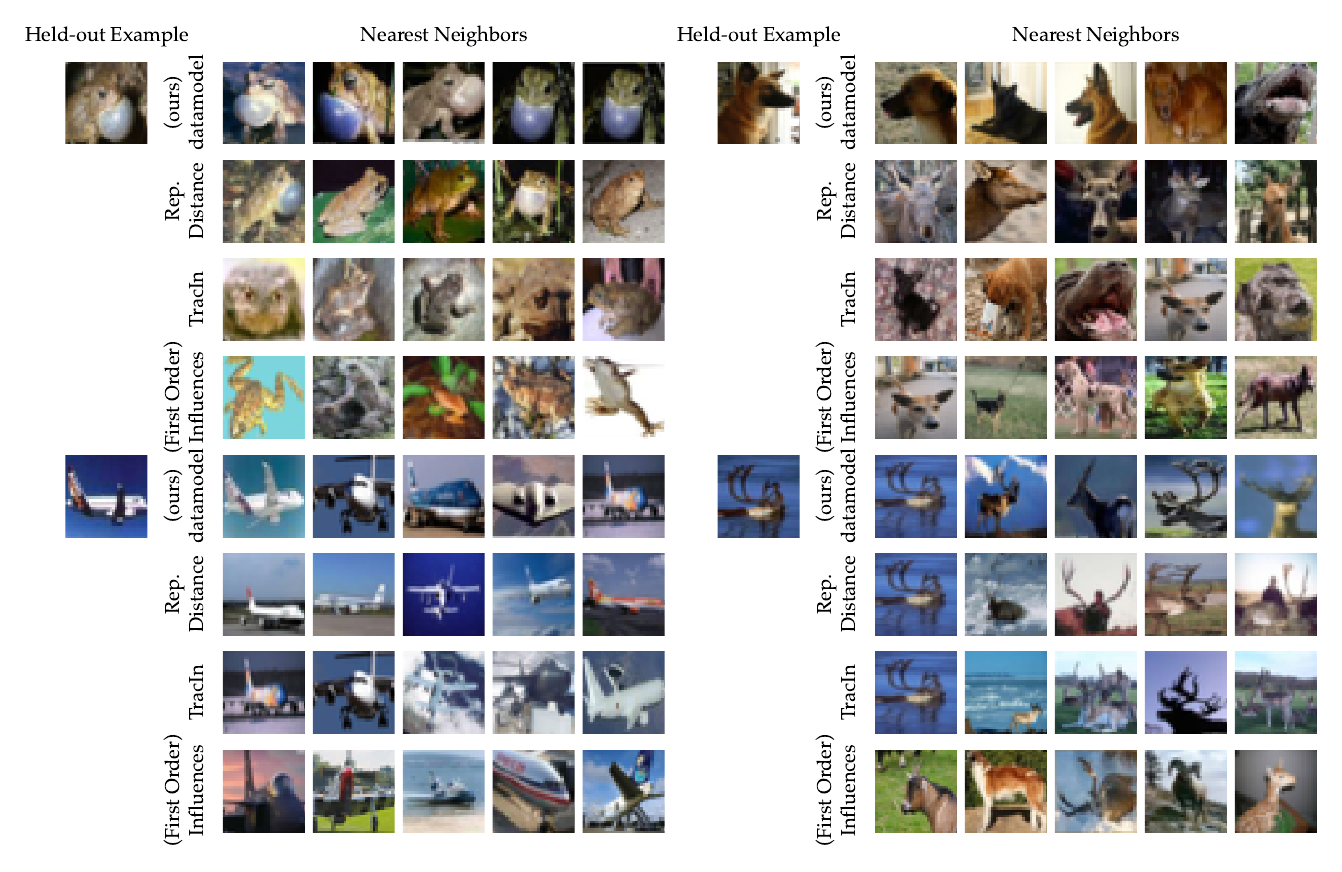}
    \caption{Comparisons of nearest neighbors found using different methods.}
    \label{appfig:nn_baseline}
\end{figure}

\subsection{\fmow{}}
In~\Cref{fig:fmow_nn} we show {\em randomly selected} target images along with their top-weight train images, using datamodels of different $\alpha$.
In Figure~\ref{appfig:more_alphas_fmow} we show
more examples of test images and their corresponding top train images as
we vary $\alpha$.

\begin{figure}[htbp]
    \centering
    \includegraphics[width=0.9\linewidth]{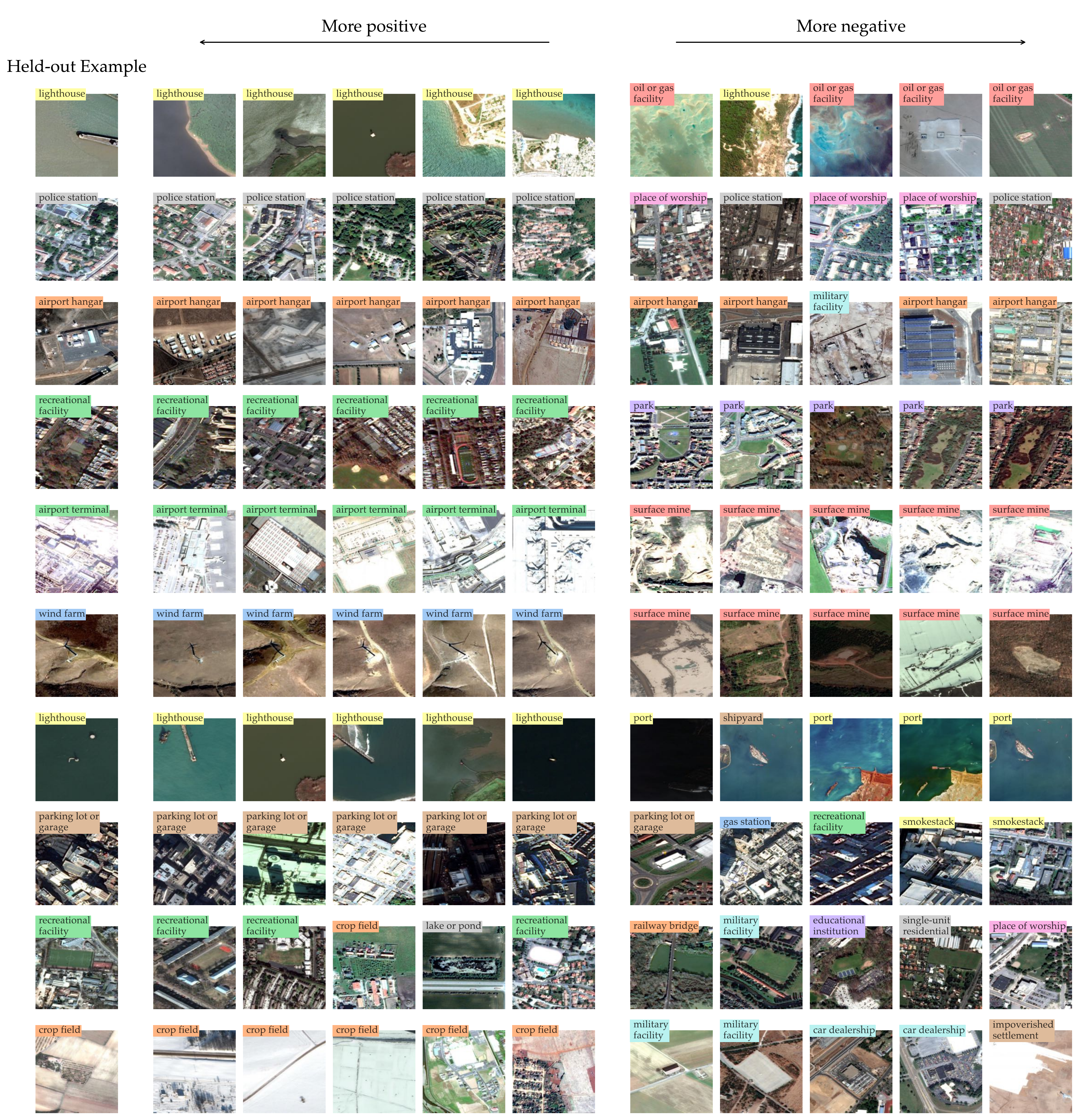}
    \caption{\fmow{} examples of held-out images and their corresponding highest and lowest datamodel weight training images.}
    \label{fig:fmow_nn}
\end{figure}

\begin{figure}[!hp]
    \centering
    \includegraphics[width=\textwidth]{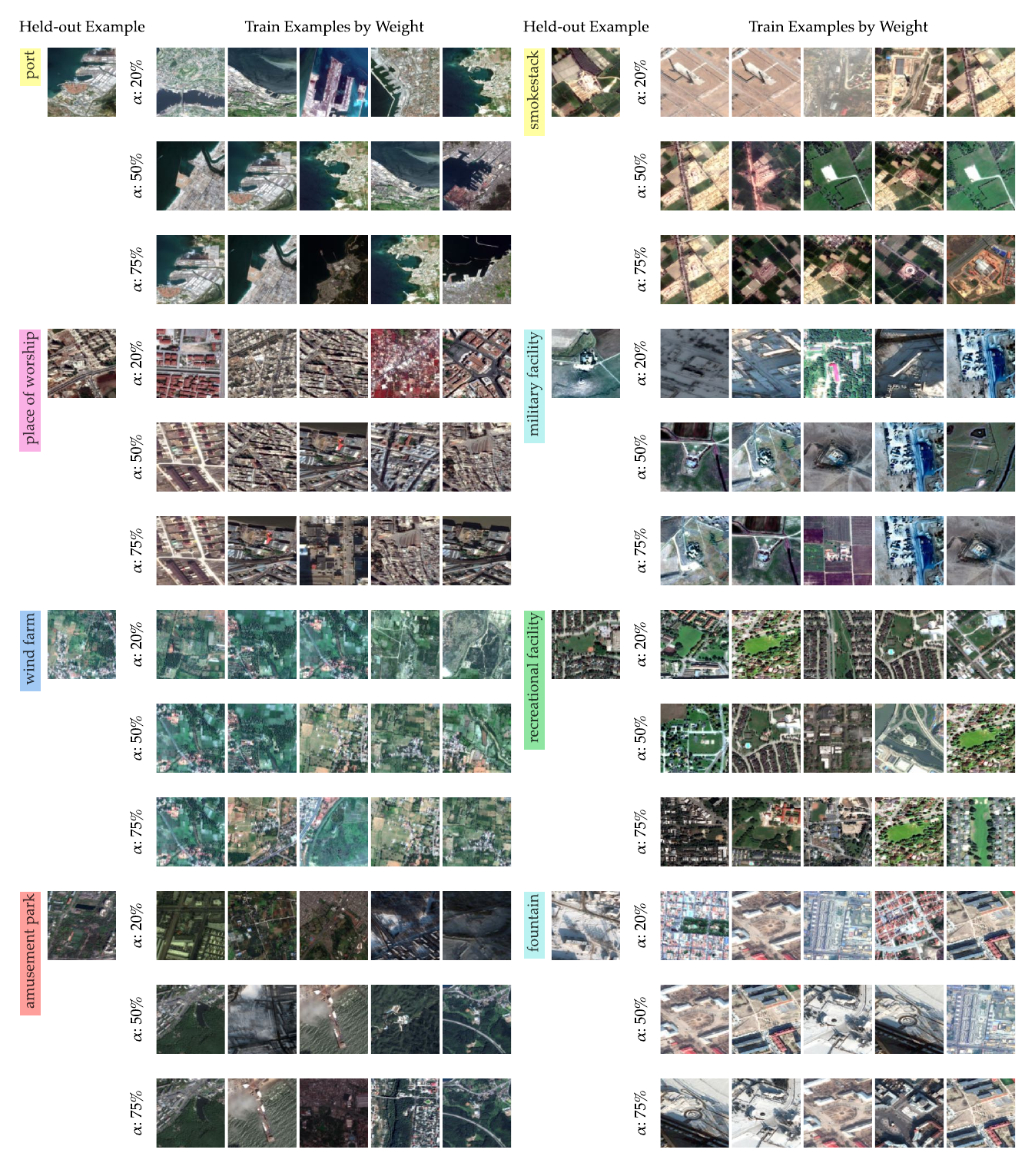}
    \caption{\fmow{} examples of held-out images and corresponding
    most relevant training examples, while varying $\alpha$.}
    \label{appfig:more_alphas_fmow}
\end{figure}
\clearpage
\section{Train-Test Leakage}
\label{app:ttl}

\subsection{CIFAR}
\paragraph{Task setup.} The general setup is as described \Cref{sec:ttl}. In the Amazon Mechanical
Turk interface (\Cref{appfig:cifar_interface}), for each test image we displayed the top 5 and bottom 5 train examples by datamodel weight; the vast majority of
potential leakage found corresponded to the top 5 examples.
Nine different workers filled out each task. We paid 12 cents per task
completed and used these qualifications: locale in US/CA/GB and percentage of hits
approved $>95\%$.

\begin{figure}[h]
\centering
\includegraphics[width=\textwidth]{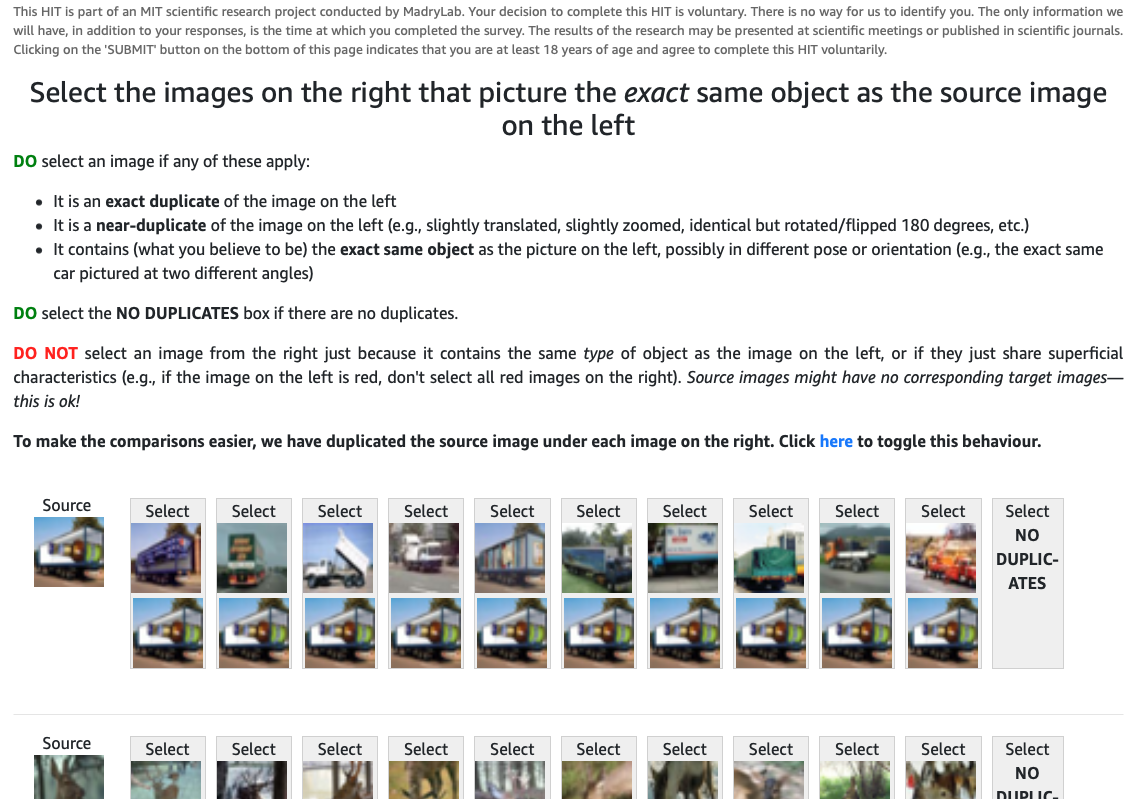}
\caption{The MTurk Interface, complete with instructions, shown to crowdsourced
annotators. Note that there are 5 rows of images in the actual interface, some
of which may require scrolling to get to.}
\label{appfig:cifar_interface}
\end{figure}

\paragraph{Examples.}
\Cref{appfig:cifar_ttl} shows more examples of (train, test)
pairs stratified by annotation score.
While there is no ground truth due
to lack of metadata, we see that the crowdsourced annotation combined with high quality candidates (as identified by datamodels) can effectively surface leaked examples.

\begin{figure}[h]
\centering
\includegraphics[width=0.9\textwidth]{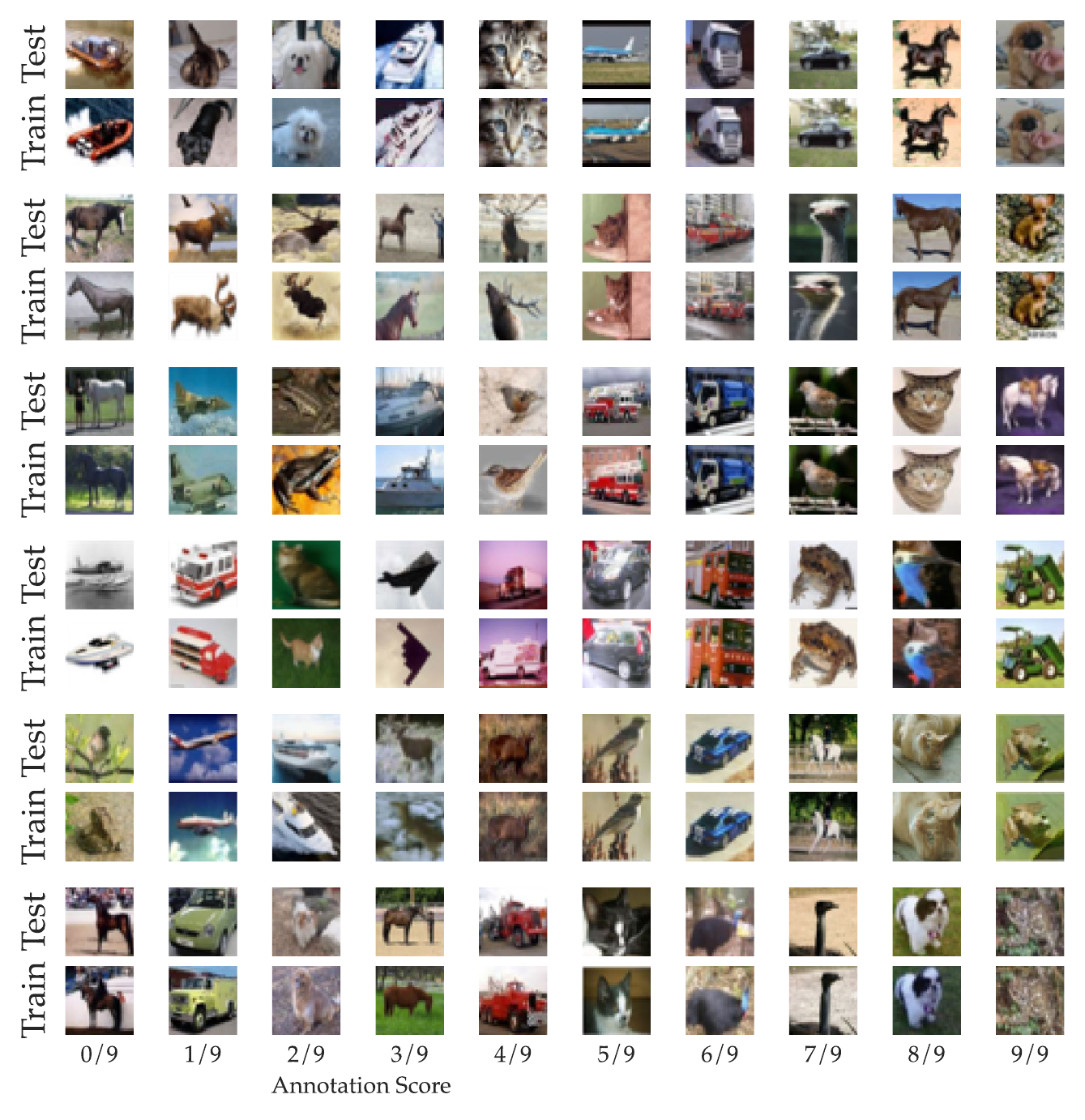}
\caption{More annotation scores paired with (train, test) leakage pairs. See
Figure~\ref{fig:cifar_ttl} and Section~\ref{sec:ttl} for more information.}
\label{appfig:cifar_ttl}
\end{figure}

\clearpage

\paragraph{Comparison with CIFAIR.} \citet{barz2020train} present CIFAIR, a
version of CIFAR with fewer duplicates. The authors define duplicates slightly
differently than our definition of same scene train-test leakage (cf. Section 3.2 of their work and our interface shown in \Cref{appfig:cifar_interface}).
They identify train-test leakage by
using a deep neural network to measure representation space distances between
images across training partitions and manually inspecting the lowest distances.

\subsection{FMoW}
Appendix Figure~\ref{appfig:fmow_cdf} shows the CDF of each test
image's minimum distance to a train image.

\begin{figure}[h]
\centering
\includegraphics[width=0.5\textwidth]{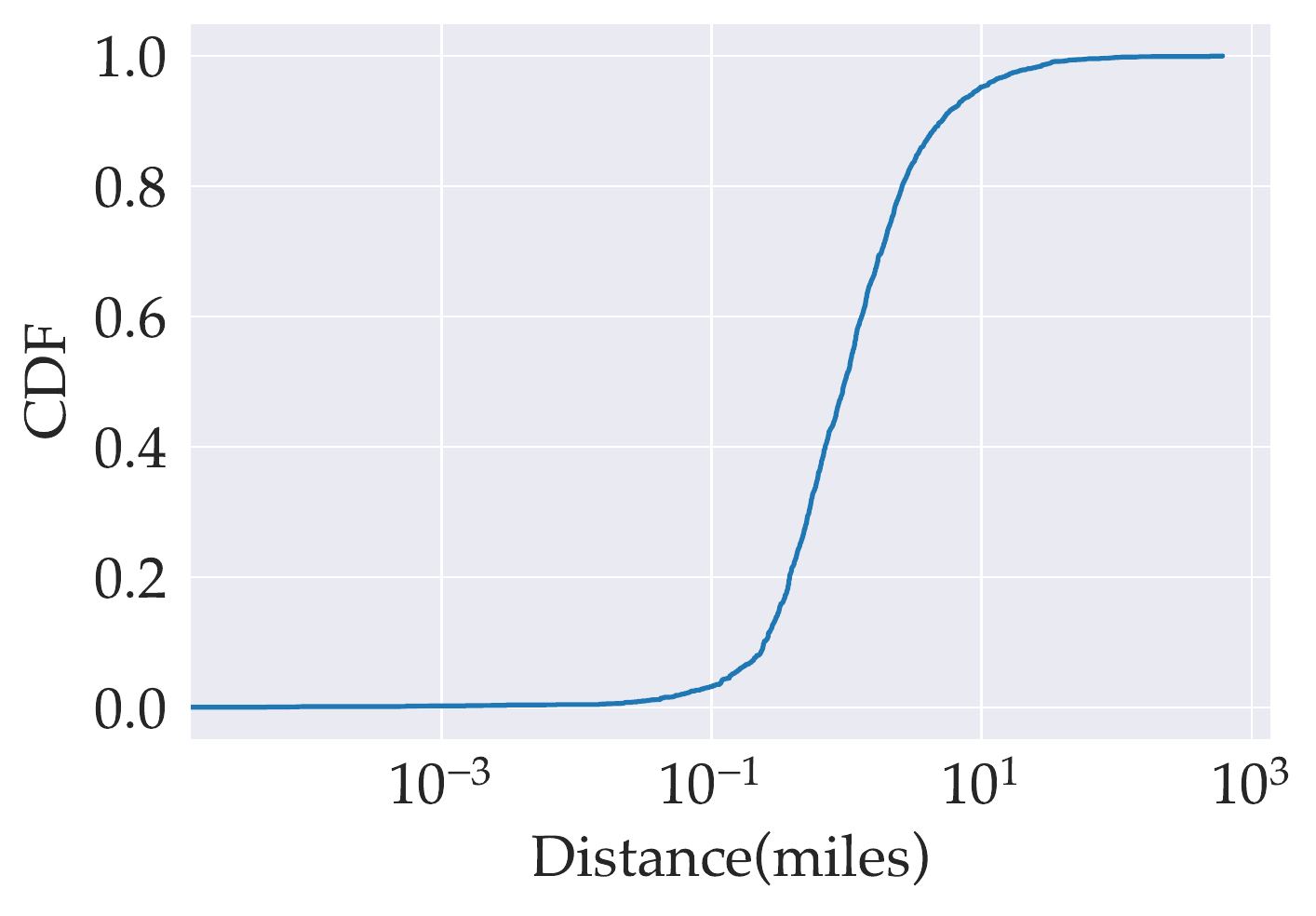}
\caption{CDF of distance in miles between each FMoW test set image and the nearest train set
image.}
\label{appfig:fmow_cdf}
\end{figure}

\clearpage

\section{Spectral Clustering}
\label{app:spectral}

\subsection{Method}
We use \texttt{sklearn}'s \texttt{cluster.SpectralClustering}.
Internally, this computes similarity scores using the radial basis function (RBF) kernel on the datamodel embeddings. %
Then, it runs spectral clustering on the graph defined by the similarity matrix $A$: it computes a Laplacian $L$, represents each node using the first $k$ eigenvectors of $L$, and runs $k$-means clustering on the resulting feature representations. We use $k=100$.

\subsection{Omitted results}
\Cref{fig:clustering_extra_compare} compares top clusters for the horse class across different $\alpha$.
\Cref{fig:clustering_extra} shows additional clusters for eight other classes, apart from the ones shown in \Cref{fig:clustering}.

\begin{figure}[h]
    \centering
    \includegraphics[width=0.49\linewidth,trim={19cm 0 0 0},clip]{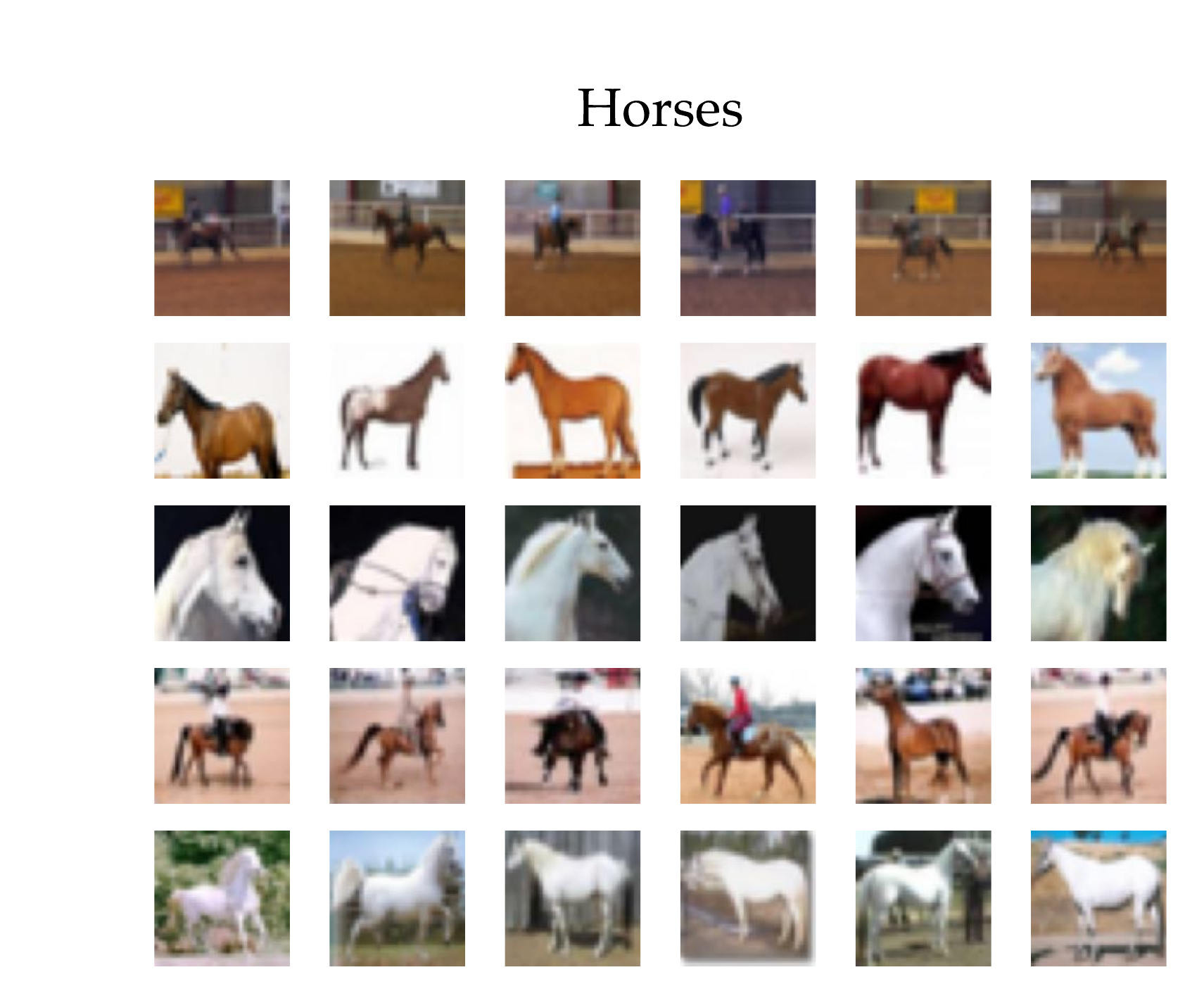}
    \includegraphics[width=0.49\linewidth,trim={19cm 0 0 0},clip]{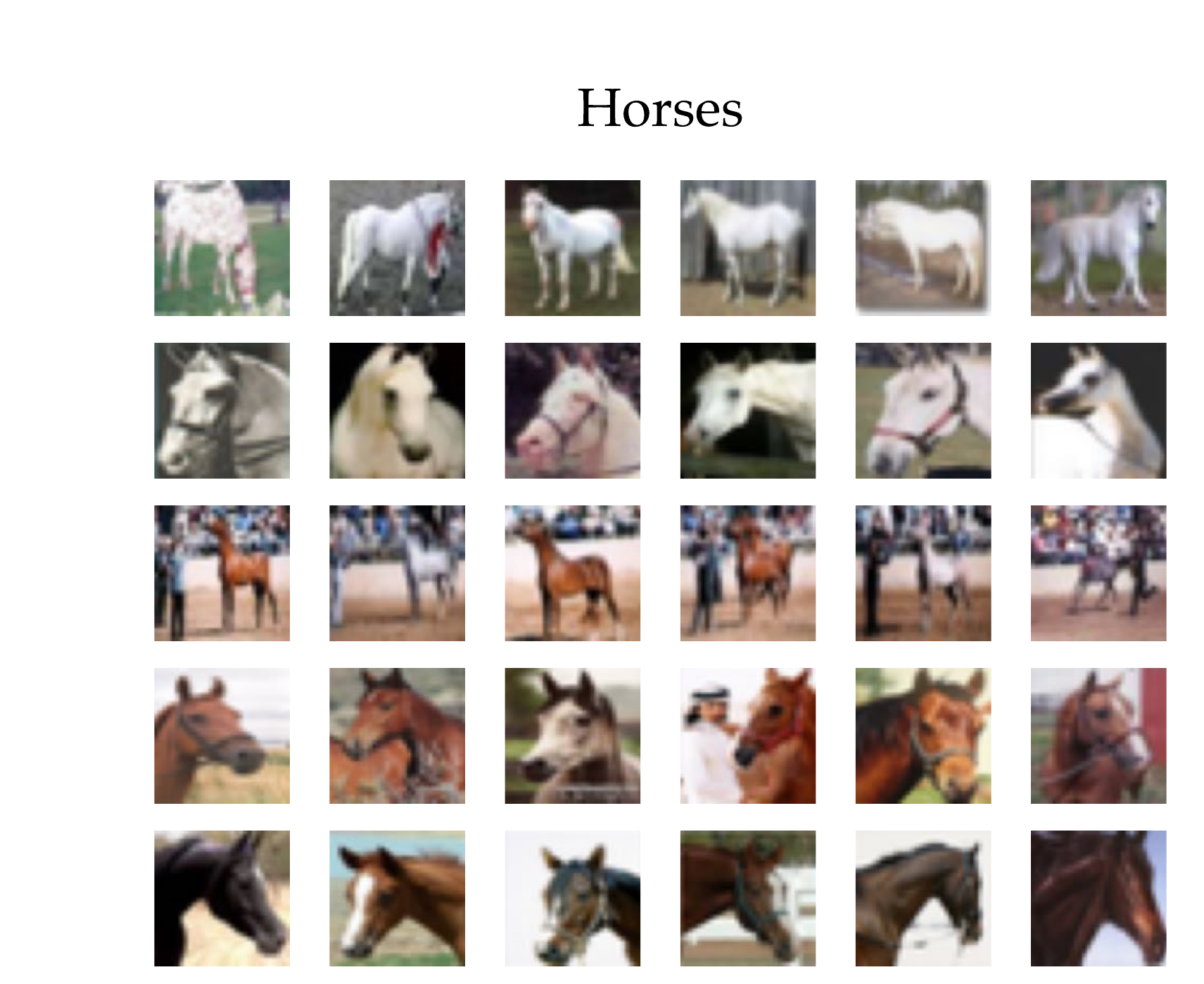}
    \includegraphics[width=0.49\linewidth,trim={19cm 0 0 0},clip]{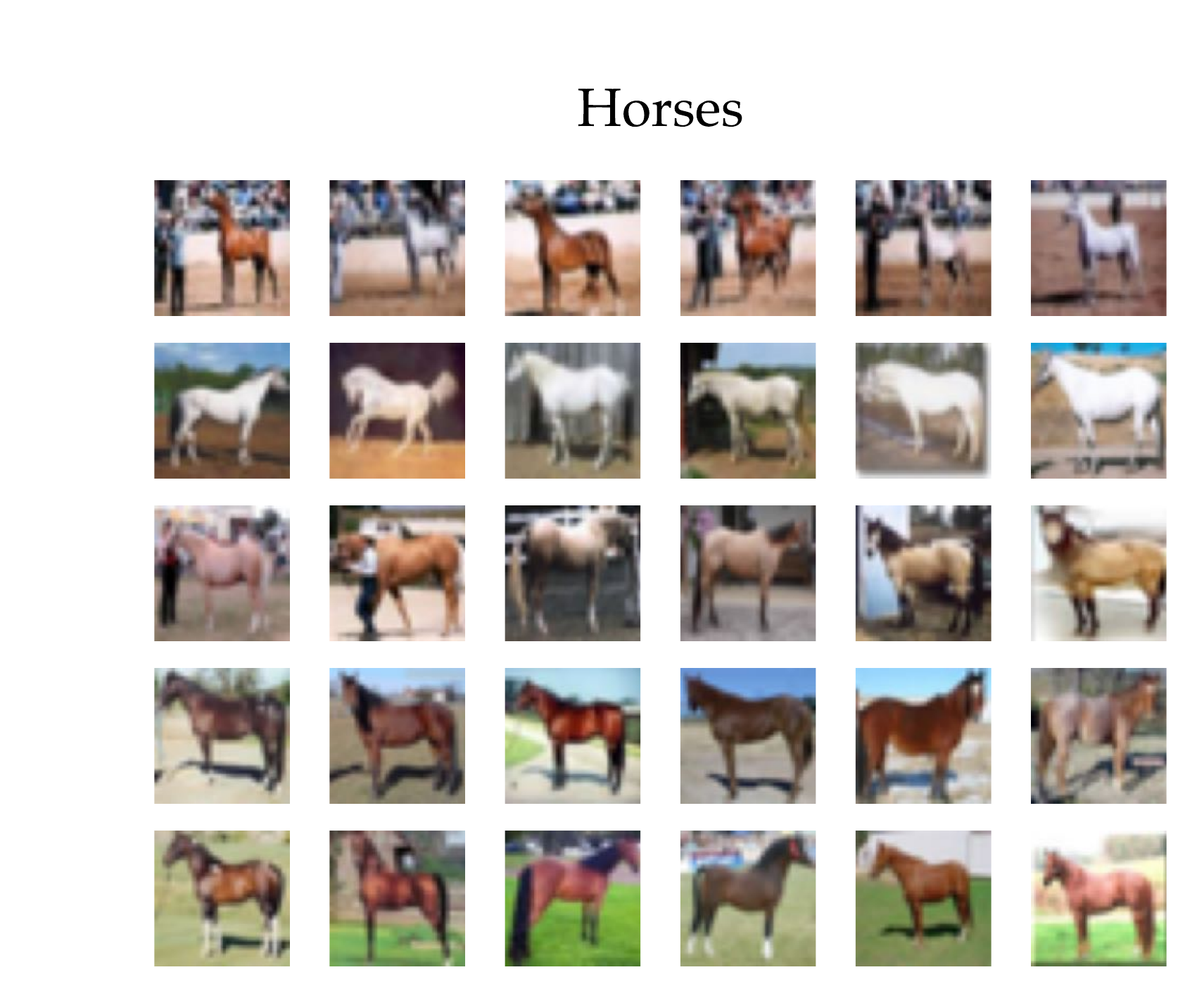}
    \includegraphics[width=0.49\linewidth,trim={19cm 0 0 0},clip]{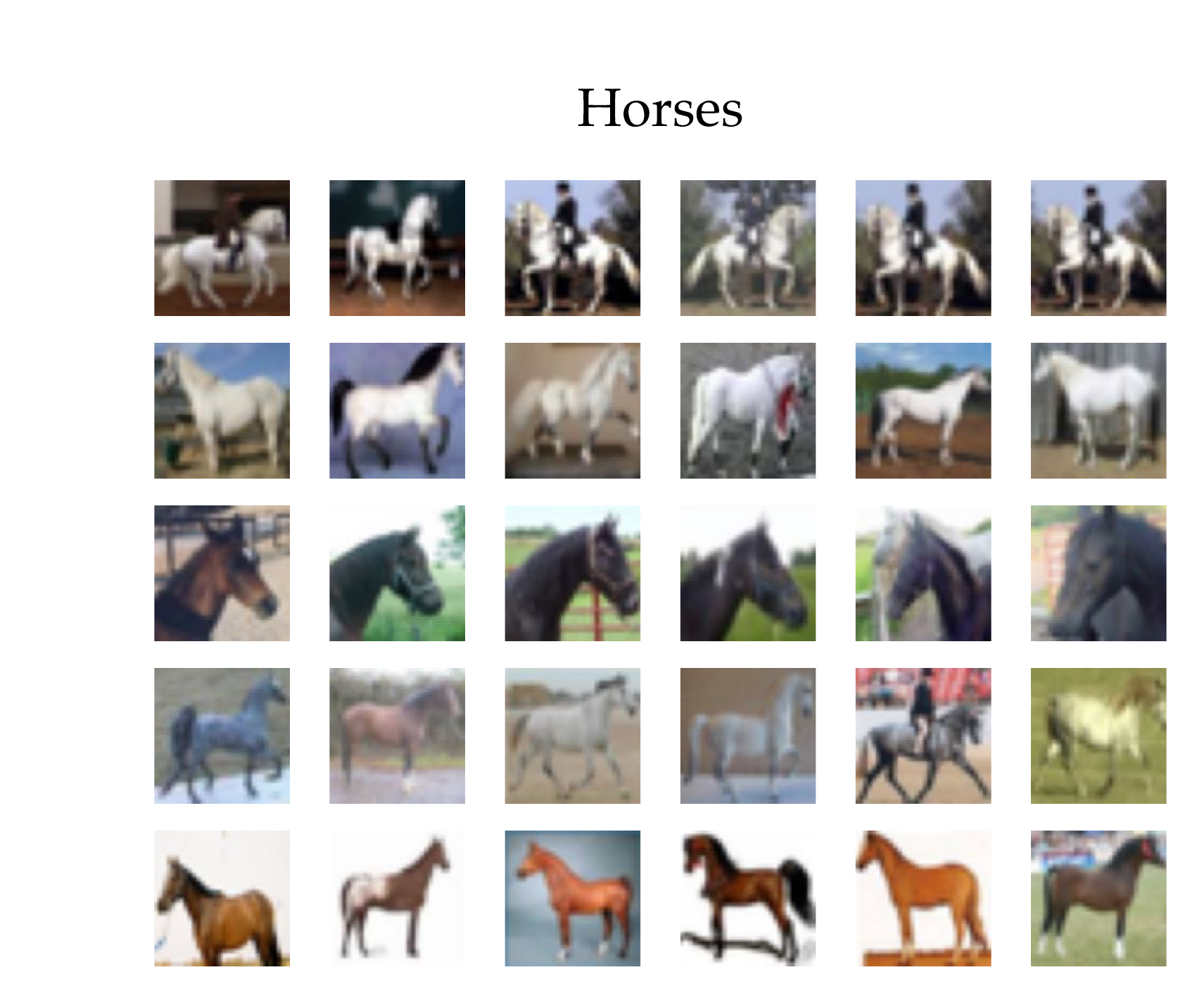}
    \caption{Omitted spectral clustering results for datamodels computed with
    $\alpha = 10\%$ (top left), $20\%$ (top right), $50\%$ (bottom left), and
    $75\%$ (bottom left).}
    \label{fig:clustering_extra_compare}
\end{figure}

\begin{figure}[h]
    \centering
    \includegraphics[width=0.9\linewidth,trim={0 0 0 1cm}, clip]{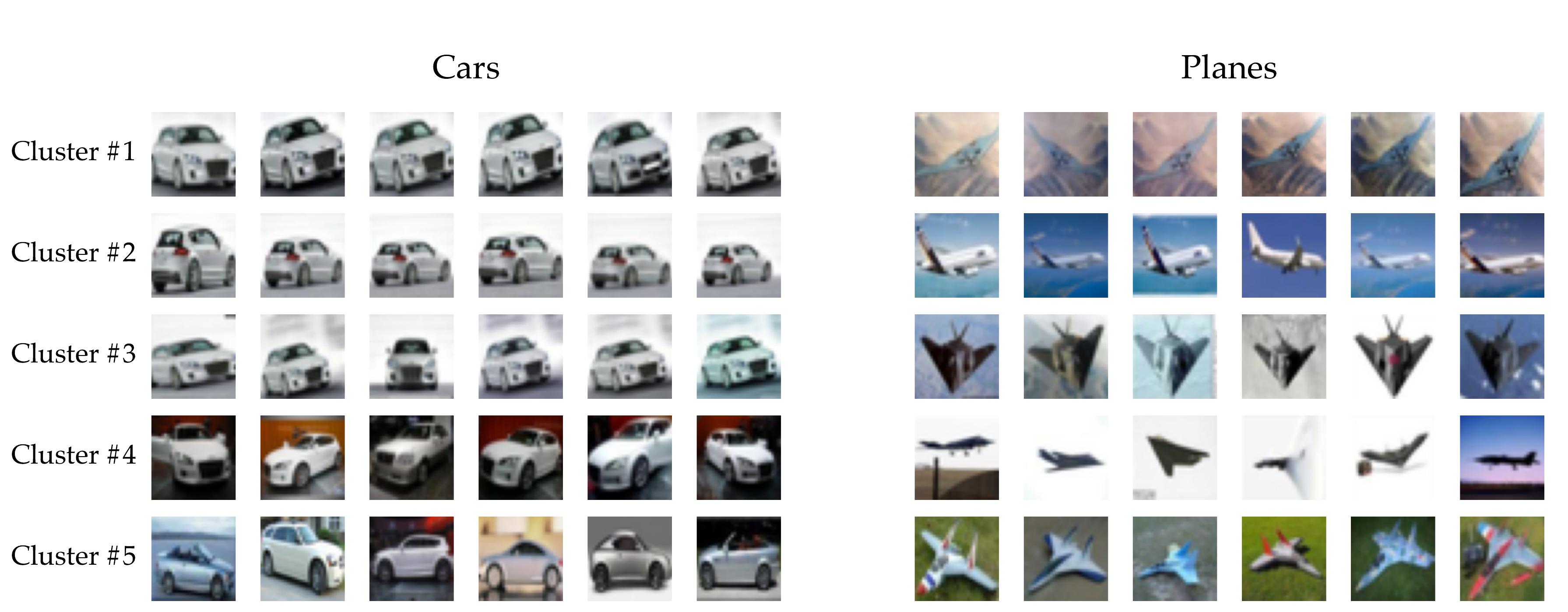}
    \includegraphics[width=0.9\linewidth,trim={0 0 0 1cm}, clip]{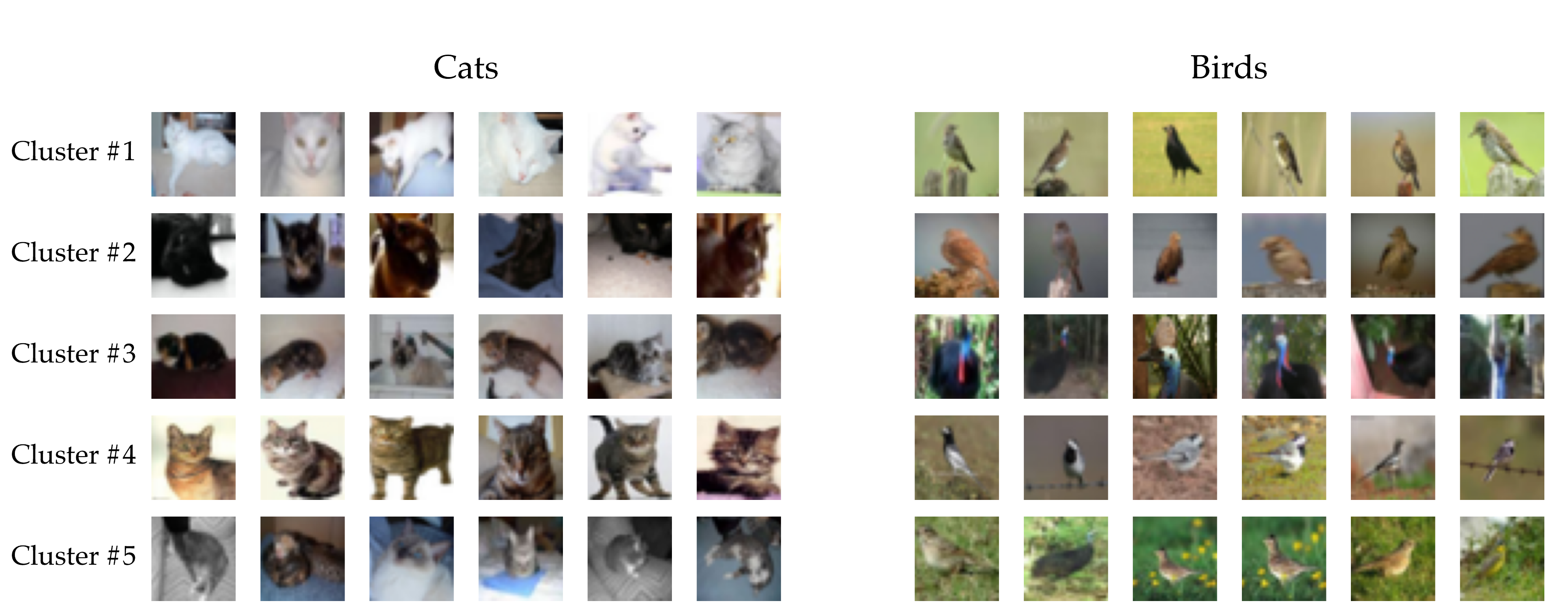}
    \includegraphics[width=0.9\linewidth,trim={0 0 0 1cm}, clip]{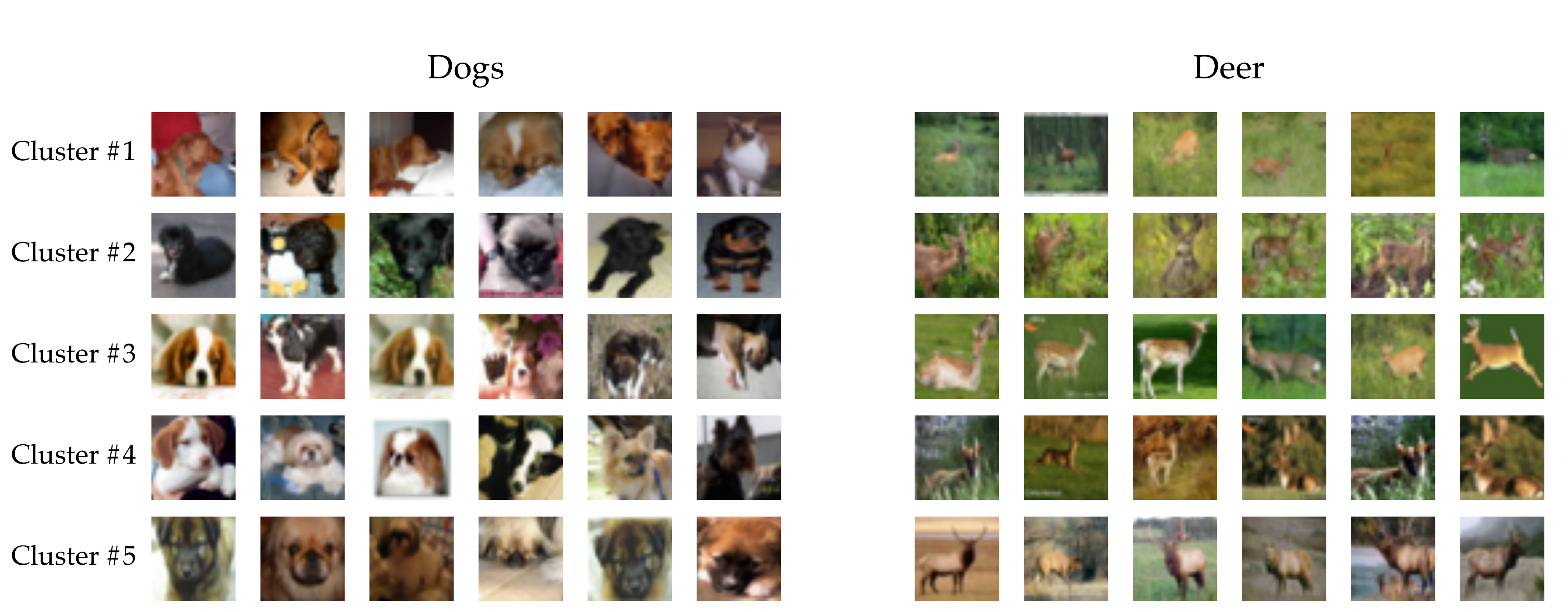}
    \includegraphics[width=0.9\linewidth,trim={0 0 0 1cm}, clip]{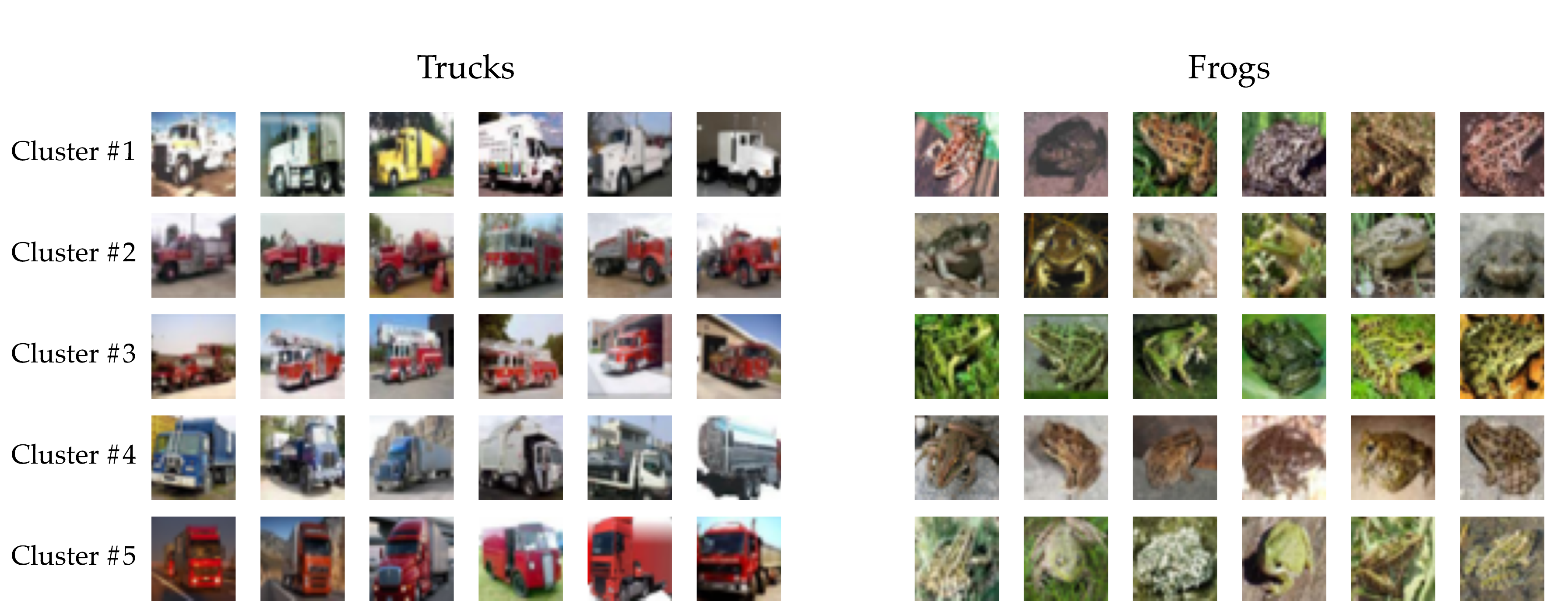}
    \caption{Omitted spectral clustering results for classes other than those in
    the main paper (Figure \ref{fig:clustering}).}
    \label{fig:clustering_extra}
\end{figure}
\clearpage
\section{PCA on Datamodel Embeddings}
\label{app:pca}

\subsection{CIFAR}
\paragraph{Setup.}
For the PCA experiments, we use datamodels for the training and test sets estimated with $\alpha=0.5$ unless mentioned otherwise.

\paragraph{Effective dimensionality.}
In \Cref{fig:explained_variance}, we compare the effective dimensionality of datamodel embeddings with that of a deep representation pretrained on CIFAR-10.
\begin{figure}[!hp]
    \centering
    \includegraphics[width=.5\linewidth]{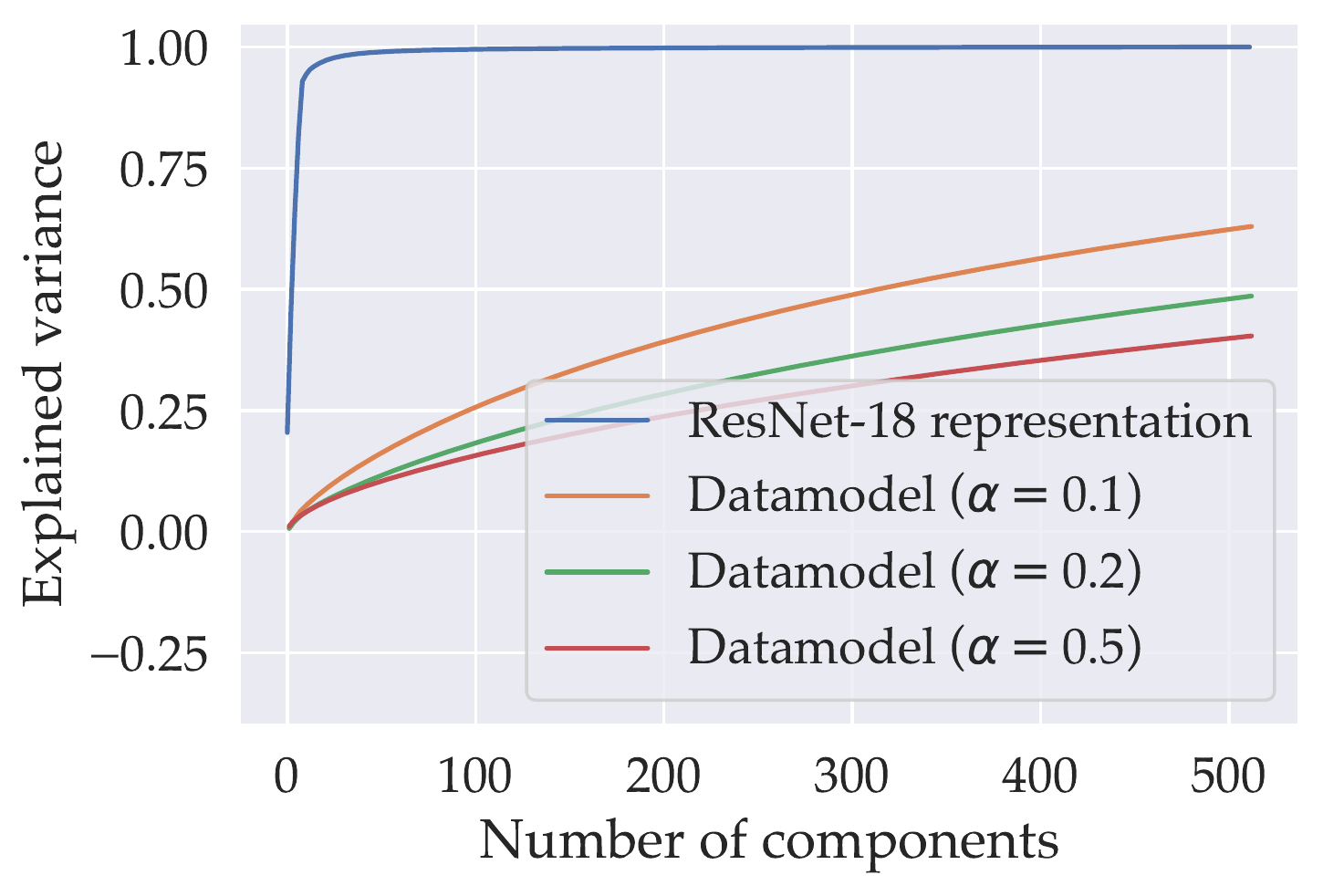}
    \caption{{\bf Datamodel embeddings have a higher effective dimension than
    deep representations}. For different embeddings, we plot the cumulative
    fraction of variance explained by the top $k$ components while varying $k$.
    For a network layer based embedding, 95\% of the variation in embedding
    space is captured by the first 10 principal components; meanwhile, datamodel
    embeddings need up to 500 components to capture even half of the variance.
    Here, we use a ResNet-18 model instead of ResNet-9 as it has more features in the representation layer (512 vs. 128); the plot looks similar for ResNet-9.
    }
    \label{fig:explained_variance}
\end{figure}

\paragraph{Analyzing model-faithfulness.} To see whether PCA directions reflect model behavior, we look at how ``removing'' different principal components affect model predictions.
More precisely, we remove training examples corresponding to:
\begin{itemize}
    \item Top $k$ most positive coordinates of the principal component vector
    \item Top $k$ most negative coordinates of the principal component vector
\end{itemize}
Then, for each principal component direction considered, we measure their impact on three groups of held-out samples:
\begin{itemize}
    \item The top 100 examples by most positive projection on the principal component
    \item The bottom 100 examples by most negative projection on the principal component
    \item The full test set
\end{itemize}
For each of these groups, we measure the mean change in margin after removing different principal component directions.
Our results (Figure~\ref{appfig:remove_direction_causality}) show that:
\begin{itemize}
    \item Removing the most positive coordinates of the PC decreases margin
    on the test set examples with the most positive projections on the PC and
    increases margin on the examples with the most negative projections on the PC.
    \item Removing the most negative coordinates of the PC
    has the opposite effect, increasing margin on the positive projection examples
    and decreasing margin on the negative projection examples.
    \item Increasing the size of each removed set increases the effect magnitude.
    \item Removing PC's have negligible impact on the aggregate test set, indicating that the impact of different PC's are roughly ``orthogonal,'' as one would expect based on the orthogonality of the PCs.
    \item Lastly, \Cref{fig:pca_causality_scatter} shows that datamodels can accurately predict the counterfactual effect of the above removed groups, similarly as in \Cref{fig:cifar_causal}.
\end{itemize}

\begin{figure}
    \centering
    \includegraphics[width=\linewidth,trim={0 0 6cm 0},clip]{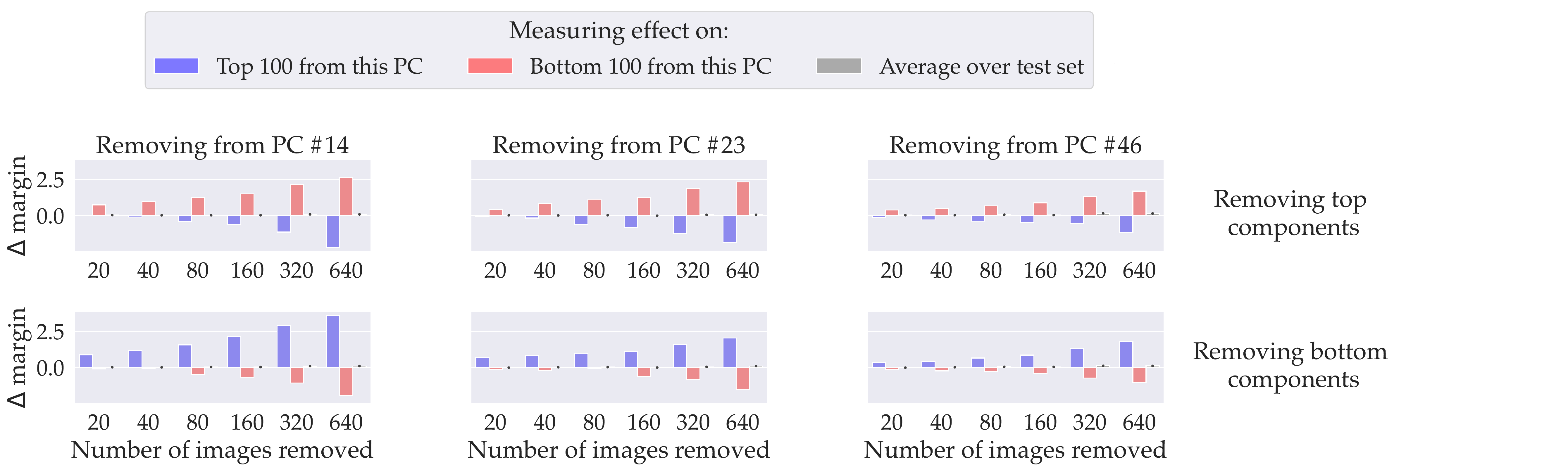}
    \caption{
    {\bf PCA directions generalize and capture ``orthogonal'' directions.}
    For each of the three principal components (PCs) above (randomly chosen from top 50), we consider the counterfactual of removing the training examples corresponding to the top or bottom $k$ coordinates in the PC, and measure its average effect on different groups: (red) test examples with the highest projections on the PC; (blue) test examples with the lowest projections on the PC; and (grey) the entire test set.
    The direction of the effect is consistent with the datamodel embeddings; removing top (resp. bottom) coordinates decrease (resp. increase) the average margin on test examples whose embeddings are most aligned with the PC. Moreover, the negligible impact over the test set in aggregate shows that the different PCs, which are orthogonal in the embedding space (by definition), are also approximately ``orthogonal'' in terms of their effect on model predictions.
    }
    \label{appfig:remove_direction_causality}
\end{figure}

\begin{figure}[!htp]
    \centering
    \includegraphics[width=0.45\linewidth]{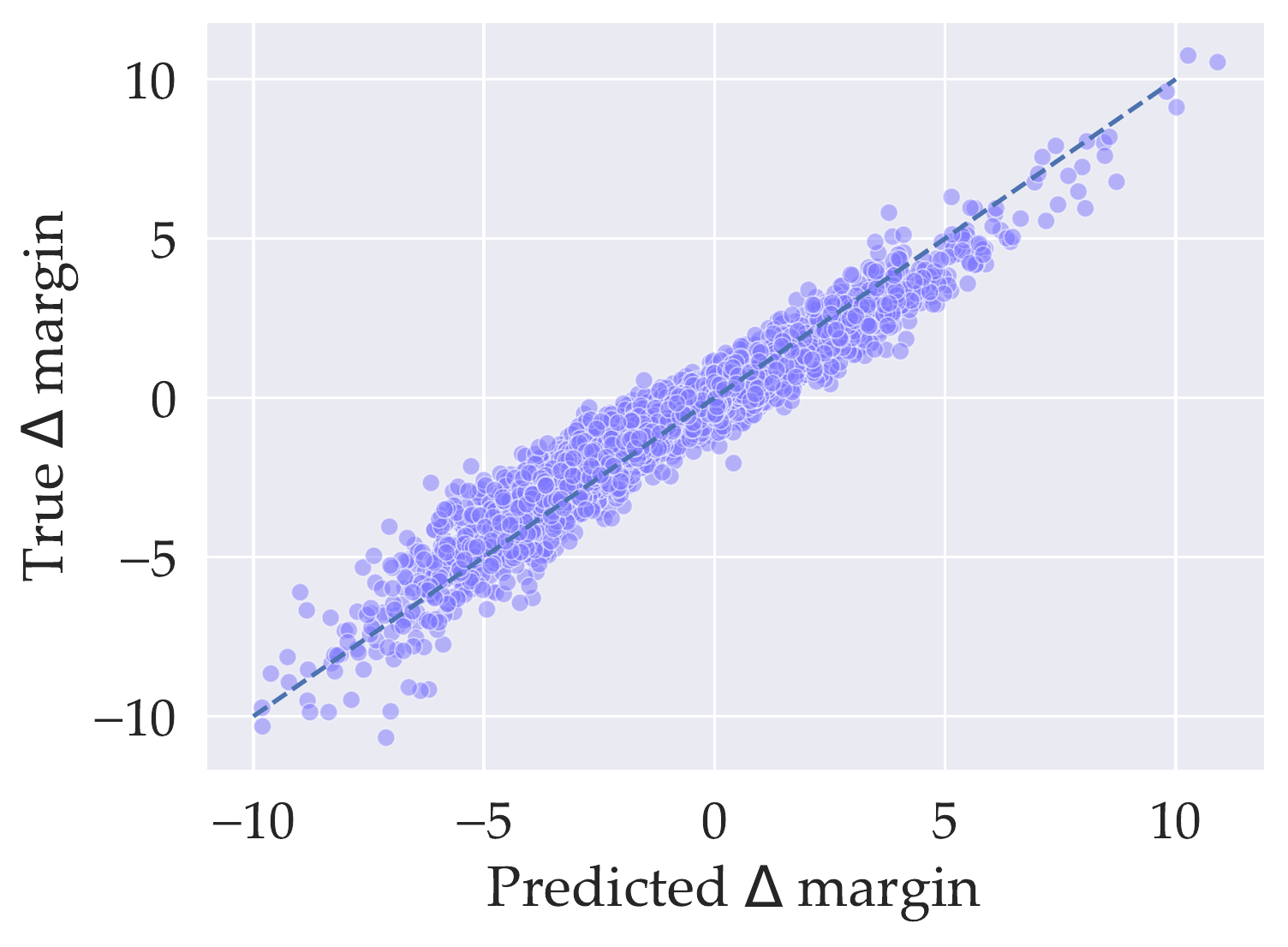}
    \caption{{\bf Datamodels predict the effect of ``removing'' principal components}.
    Each point corresponds to a
    \emph{PCA counterfactual}: removing training examples with the largest weights in the principal component (i.e., top-$k$ most positive or negative coordinates), and evaluating on test examples whose embeddings most align with the PC (e.g. smallest cosine distance).
    The $y$-coordinate of each point represents the {\em ground-truth}
    counterfactual effect (evaluated by retraining $T = 20$ times).
    The $x$-coordinate of each point represents the {\em
    datamodel-predicted} value of this quantity.
    }
    \label{fig:pca_causality_scatter}
\end{figure}

\paragraph{Representation baseline.} In \Cref{fig:pca_rep} we show the top principal components computed using a representation embedding.

\paragraph{Additional results.} In \Cref{fig:pca_extra} we show additional PCs from a datamodel PCA.

\begin{figure}[ht]
    \centering
    \includegraphics[width=\linewidth,trim={0 0 0 1cm}, clip]{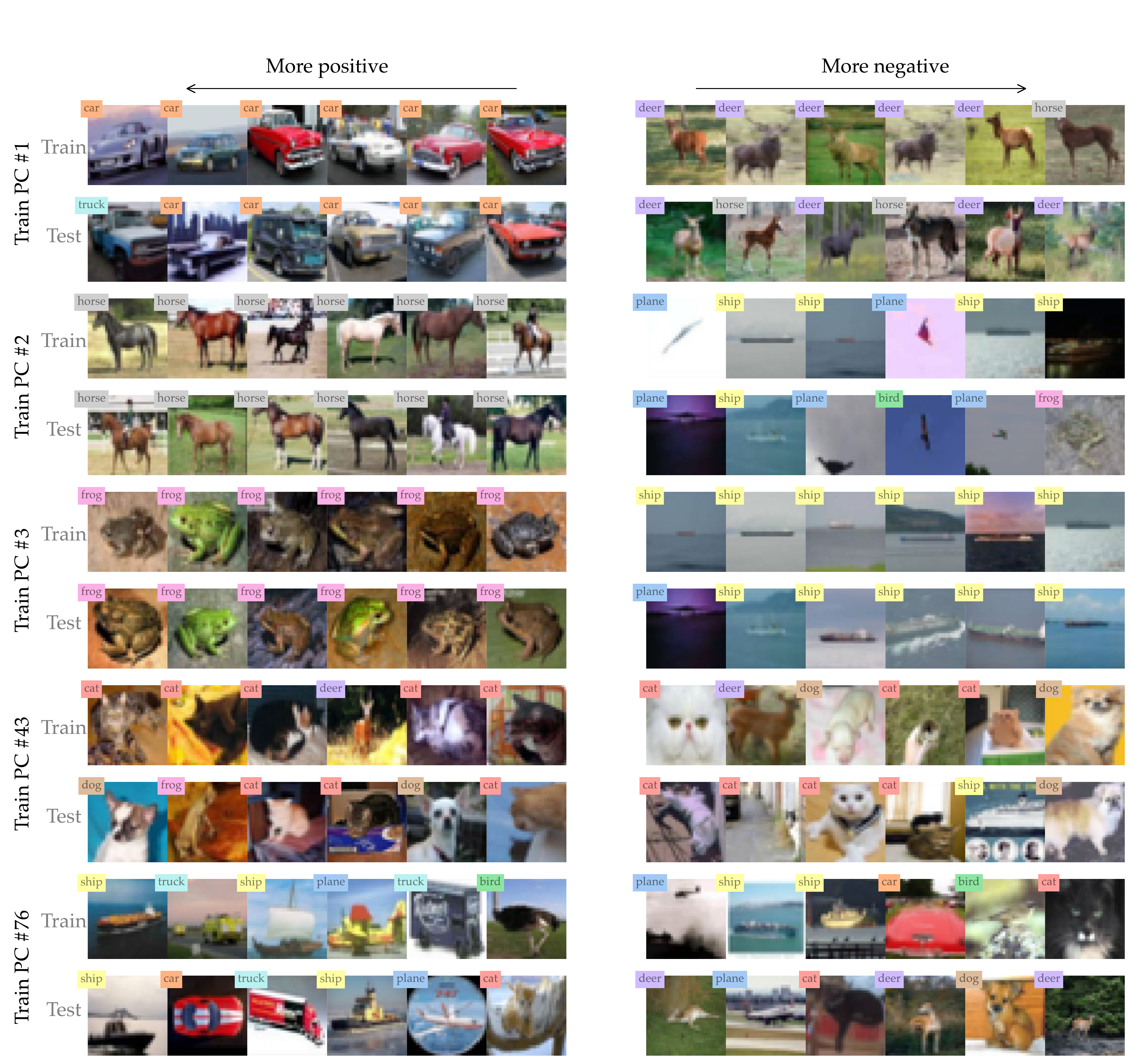}
    \caption{{\bf Representation-based baseline for PCA.} Visualization of highest magnitude images along top principal
    components of {\em representation} embeddings for CIFAR-10. In each row $i$, on the left
    we show the images with the highest normalized projections onto $v_i$, and
    on the right the images with the lowest projections. The PCs seem less coherent than those obtained from running PCA on datamodel embeddings.}
    \label{fig:pca_rep}
\end{figure}

\begin{figure}[ht]
    \centering
    \includegraphics[width=1.\linewidth]{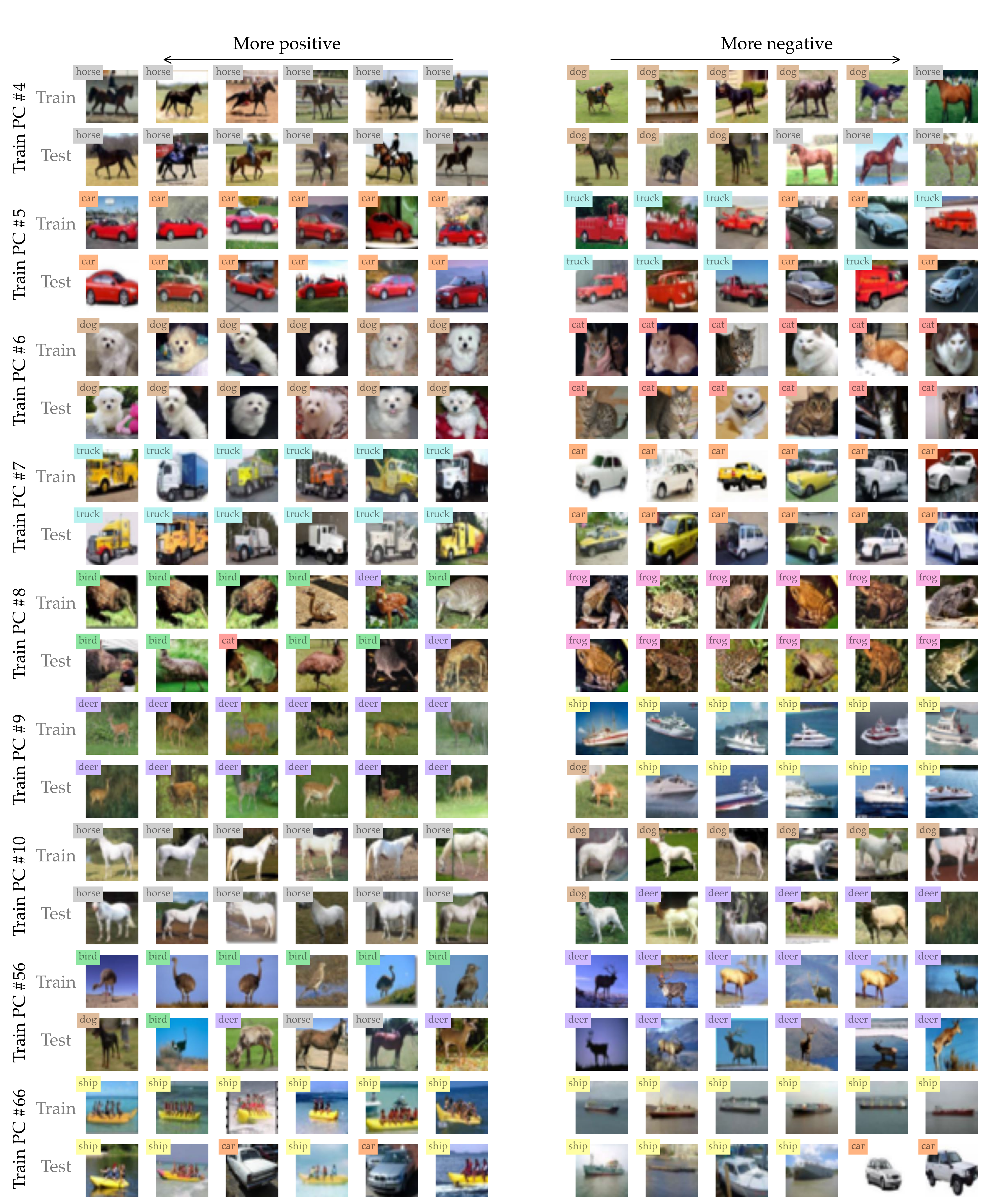}
    \caption{The remainder of the top 10 PCA directions and two selected directions.}
    \label{fig:pca_extra}
\end{figure}

\clearpage
\subsection{\fmow{}}
\begin{figure}[!ht]
    \centering
    \includegraphics[width=1.\linewidth]{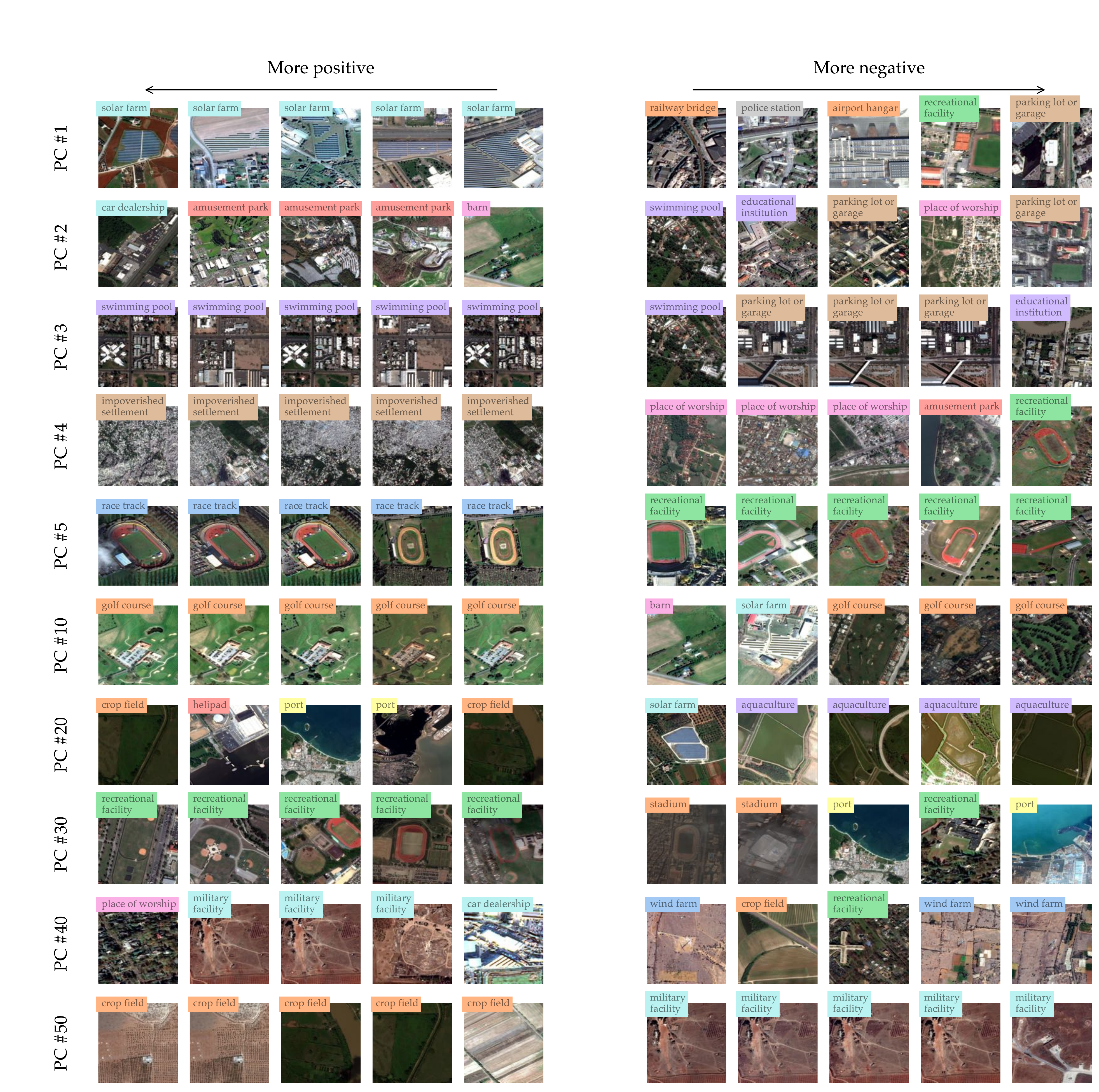}
    \caption{\fmow{} top PCA components, using $\alpha=20\%$. Top 5 and 5 selected from the top 50.}
    \label{fig:fmow_pca}
\end{figure}
\clearpage
\section{Connection between Influence Estimation and Datamodels}
\label{app:ate_compare}
\subsection{Proof of Lemma \ref{lem:atelinear}}
\label{app:ate_proof}
\atelinear*
\begin{proof}
    For convenience, we introduce the $m \times n$ binary {\em mask matrix}
    $\bm{A}$ such that $\bm{A}_{ij}$ is an indicator for whether the $j$-th
    training image was included in $S_i$. Note that $\bm{A}$ is a random matrix
    with fixed row sum of $n/2$.
    Next, we define the {\em output vector} $\bm{y} \in \{0,1\}^m$ that
    indicates whether a model trained on $S_i$ was correct on $x$.
    Finally, we introduce the {\em count matrix}
    $\bm{C} = \text{diag}(\bm{1}^\top A)$, i.e., a diagonal matrix whose entries
    are the columns sums of $\bm{A}$, e.g. the number of times each example
    appears across $m$ different masks.

    We begin with $\bm{w}_{OLS}$. Consider the $n \times n$
    matrix $\bm{\Sigma} = \frac{1}{m}\bm{Z}^\top \bm{Z} = \frac{1}{m}(2\cdot \bm{A} - \bm{1}_{m\times n})^\top
    (2\cdot \bm{A} - \bm{1}_{m\times n})$. The diagonal entries of this matrix
    are $\bm{\Sigma}_{ii} = 1$ (due to $\bm{A}$ having constant row sum), while the off-diagonal is

    \[
        \bm{\Sigma}_{ab} = \frac{1}{m}\sum_{i=1}^m \begin{cases}
            +1 &\text{ if training image $x_a, x_b \in S_i$ or $x_a, x_b \not\in S_i$} \\
            -1 &\text{ otherwise.}
        \end{cases}
    \]
    Since $\bm{\Sigma}$ has bounded entries ($|\Sigma_{ab}| \leq 1$), we have
    that for fixed $n$, $\lim_{m \to \infty} \bm{\Sigma} =
    \mathbb{E}[\bm{\Sigma}]$, and in particular
    \begin{align*}
        \bm{\Sigma}_{ab} &\to \mathbb{P}(x_{a},x_{b} \in S_i \text{ or } x_{a},x_{b} \not\in S_i)
                              - (1 - \mathbb{P}(x_{a},x_{b} \in S_i \text{ or } x_{a},x_{b} \not\in S_i)) \\
        \mathbb{P}(x_{a},x_{b} \in S_i \text{ or } x_{a},x_{b} \not\in S_i) &=
                        2 \cdot \lr{\frac{\frac{n}{2}}{n}\cdot \frac{\frac{n}{2} - 1}{n} }
                        = \frac{1}{2} - \frac{1}{n} \\
        \text{Thus, }\bm{\Sigma}_{ab} &\to -\frac{1}{2n}.
    \end{align*}
    Now, using the Sherman-Morrison formula,
    \begin{align*}
        \bm{\Sigma}^{-1} = \frac{n}{n+2}\lr{\bm{I} + \frac{2}{n} \bm{1}_{n \times n}}
    \end{align*}
    By construction, the row sums of $\bm{Z} = 2\cdot \bm{A} - \bm{1}_{m \times n}$ are
    $0$, and so $\bm{1}_{n \times n} \cdot \bm{Z}^\top = 0$. Thus,
    \[
        \bm{w}_{OLS} = (\bm{Z}^\top \bm{Z})^{-1} \bm{Z}^\top \bm{y}
        = \frac{1}{m} \lr{\frac{1}{m}\bm{Z}^\top \bm{Z}}^{-1} \bm{Z}^\top \bm{y}
        = \frac{1}{m} \cdot \frac{n}{n+2} \bm{Z}^\top \bm{y}.
    \]

    We now shift our attention to the empirical influence estimator
    $\bm{w}_{infl}$. Using our notation, we can rewrite the
    (vectorized) empirical influence estimator
    \eqref{eq:sample_efficient_influences} as:
    \begin{align*}
        \bm{w}_{infl}
            &= \bm{C}^{-1} \bm{A}^\top \bm{y} - (m\cdot \bm{I}_n - \bm{C})^{-1} \lr{\bm{1}_{m\times n} - \bm{A}}^\top \bm{y} \\
            &= \lr{\bm{C}^{-1} - (m\cdot \bm{I}_n - \bm{C})^{-1}} \bm{A}^\top \bm{y} - (m\cdot \bm{I}_n - \bm{C})^{-1} \bm{1}_{m\times n}^\top \bm{y} \\
            &= m\cdot \bm{C}^{-1} \lr{m\cdot \bm{I}_n - \bm{C}}^{-1} \bm{A}^\top \bm{y} - (m\cdot \bm{I}_n - \bm{C})^{-1}\bm{1}_{m\times n}^\top \bm{y} \\
            &= \lr{m\cdot \bm{I}_n - \bm{C}}^{-1} \lr{
                m\cdot \bm{C}^{-1}\bm{A}^\top - \bm{1}_{m\times n}^\top
            } \bm{y}.
    \end{align*}
    Now, as $m \to \infty$ for fixed $n$, the random variable $m\bm{C}^{-1}$
    converges to $2 \cdot \bm{I}$ with probability $1$. Thus,
    \begin{align*}
        m\cdot \bm{A} \bm{C}^{-1} - \bm{1}_{m \times n}
        \to 2\cdot \bm{A} - \bm{1}_{m\times n},
    \end{align*}
    and the empirical influence estimator $\bm{w}_{infl} \to \frac{2}{m}
    \bm{Z}^\top \bm{y}$, which completes the proof.
\end{proof}

\subsection{Evaluating influence estimates as datamodels}
Lemma \ref{lem:atelinear} suggests that we can re-cast empirical influence
estimates as (rescaled) datamodels fit with least-squares loss. Under this view,
(i.e., ignoring the difference in conceptual goal), we can differentiate between explicit datamodels and those arising from empirical
influences along three axes:
\begin{itemize}
    \item {\bf Estimation algorithm}: Most importantly, datamodels
    {\em explicitly} minimize the squared error between true and predicted model
    outputs. Furthermore, datamodels as instantiated here use (a) a sparsity
    prior and (b) a bias term which may help generalization.
    \item {\bf Scale}: Driven by their intended applications (where one typically
    only needs to estimate the highest-influence training points for a given
    test point), empirical
    influence estimates are typically computed with relatively few samples
    (i.e., $m < d$, in our setting) \citep{feldman2020neural}.
    In contrast, we find that for datamodel loss to plateau, one needs to
    estimate parameters using a much larger set of models.
    \item {\bf Output type}: Finally, datamodels do not restrict to prediction
    of a binary correctness variable---in this paper, for example, for deep
    classification models we find that {\em correct-class margin} was best both
    heuristically and in practice.
\end{itemize}

In this section, we thus ask: how well do the rescaled datamodels that arise
from empirical influence estimates predict model outputs? We address this question in the context of the three axes of variation described
above. In order to make results comparable across different outputs types (e.g.,
correctness vs. correct-class margin), we measure correlation (in the sense
of \citet{spearman1904proof}) between the predicted and true model outputs, in addition
to MSE where appropriate.
To ensure a conservative comparison, we also measure
performance as a predictor of {\em correctness}. In particular, we treat
$\bm{w}^\top \mask{S_i}$ as a continuous predictor of the binary variable
$\bm{1}\{\text{model trained on $S_i$ is correct on $x$}\}$, and compute the AUC
of this predictor (intuitively, this should favor empirical influence estimates
since they are computed using correctnesses directly).

In Table \ref{tab:ate_compare} we show the difference between empirical
influence estimates (first row) and our final datamodel estimates (last row),
while disentangling the effect of the three axes above using the rows in
between. As expected, there is a vast difference in terms of correlation between
the original empirical influence estimates and explicit datamodels.
\begin{table}
    \begin{tabular}[h]{@{}ccccccc@{}}
        \toprule
        Algorithm & \# models ($m$) & Output type & Spearman $r$ & MSE & AUC & Difference \\
       \midrule
       Diff. of means & 25,000 & Correctness & 0.028 & N/A & 0.529 & \\
       Diff. of means & 100,000 & Correctness & 0.053 & N/A & 0.555 &
        Under $\to$ Over-determined \\
       Diff. of means & 100,000 & Margin & 0.213 & 2.052 & 0.653 &
       Output type
        \\
       LASSO & 100,000 & Margin & 0.320 & 1.382 & 0.724 &
       Explicit datamodel  \\
       \bottomrule
    \end{tabular}
    \caption{{\bf Disentangling the effect of different factors in datamodel
    performance.} Each row shows a different estimator for datamodels. We begin
    with the empirical influence (or difference of means) on correctness
    computed with 25,000 models, which is in the overparameterized regime (as
    there are $d=50,000$ variables). Then, we increase the number of models to
    an underparameterized regime. Next, we change the output type from
    correctness to margins. Lastly, we change the estimation algorithm from
    difference of means (which is approximately equivalent to OLS, as shown
    in~\Cref{app:ate_proof}) to LASSO. Each of these changes brings about
    significant gains in the signal captured by datamodels, as measured by
    Spearman rank correlation, MSE, or AUC.}
    \label{tab:ate_compare}
\end{table}
We further illustrate this point in Figure \ref{fig:ate_compare_line_graphs},
where we show how the correlation, MSE, and AUC vary with $m$ for both empirical
influence estimates and datamodels, as well as an intermediate estimator that
uses the estimation procedure of empirical influence estimates but replaces
correctness with margin.

\begin{figure}[h]
    \centering
    \includegraphics[width=0.8\textwidth]{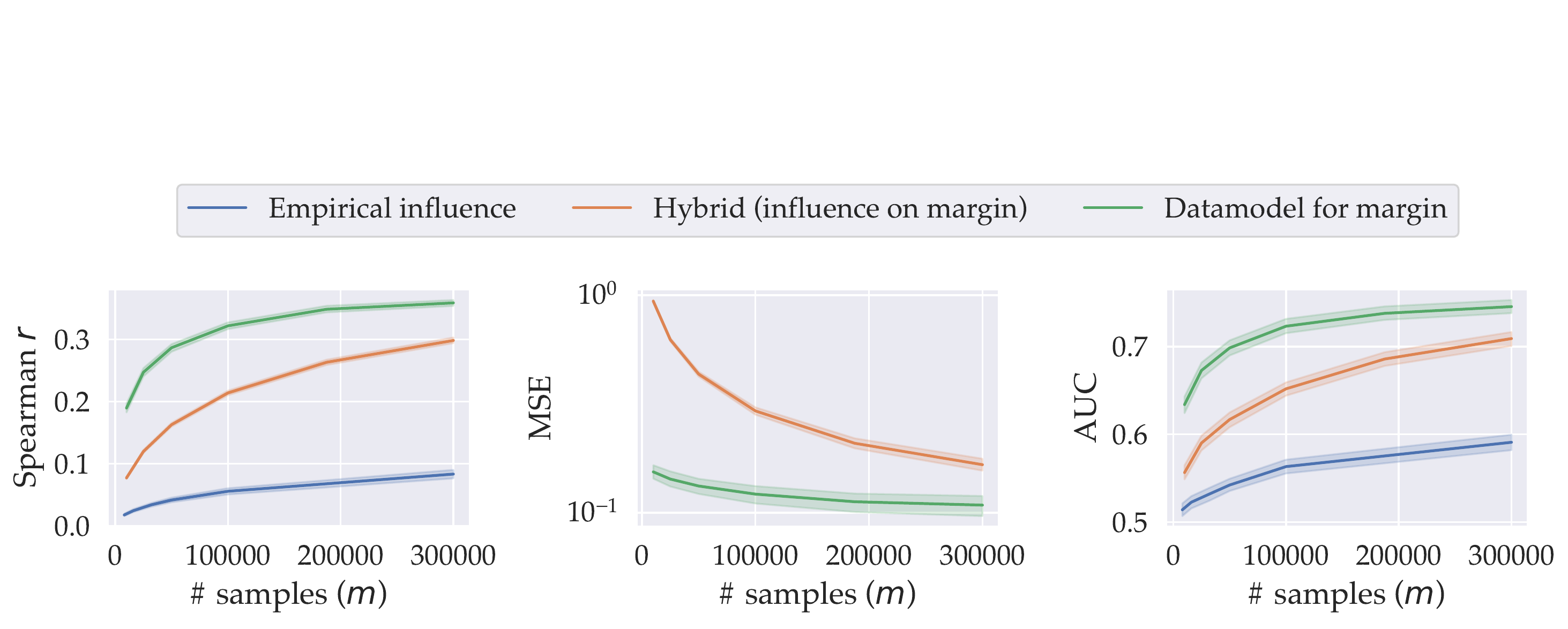}
    \caption{{\bf Datamodels have significantly better sample complexity than empirical influences.}
    We compare three estimators---empirical influence, empirical influence on margins, and $\ell_1$-regularized linear regression on margins (datamodels)---across a wide range of sample sizes on three different metrics. All metrics are averaged over the entire test set (i.e. over 10,000 datamodels).
    For MSE, we only show the estimators on margins as different output types are incomparable. Across all metrics, datamodels capture significantly more signal than empirical influences using the same number of samples. Conversely, datamodels need far fewer samples to reach the same level of performance.
    }
    \label{fig:ate_compare_line_graphs}
\end{figure}

\subsection{Testing Lemma \ref{lem:atelinear} empirically}
In this section, we visualize the performance of empirical influence estimates (\citep{feldman2020neural}) as datamodels. In Figures
\ref{fig:ate_residuals_tr} and \ref{fig:ate_residuals_val} we plot the distributions of
$\bm{w}_{infl}^\top \mask{S_i}|y_i$ for different CIFAR-10 test examples; Figure
\ref{fig:ate_residuals_tr} shows these ``conditional prediction distributions''
for subsets $S_i$ that were used to estimate the empirical influence, while
Figure \ref{fig:ate_residuals_val} shows the corresponding distributions on
held-out (unseen) subsets $S_i$.
The figures suggest that (i) indeed, empirical influences are
somewhat predictive of the correctness $y_i$, (ii) their predictiveness increases as number of samples $m \to \infty$ but is still rather low, and (c) a significant amount of the prediction error is generalization error, as the train
predictions in Figure \ref{fig:ate_residuals_tr} are significantly
better-separated than the heldout predictions in Figure
\ref{fig:ate_residuals_val}.
\begin{figure}[h]
    \centering
    \begin{subfigure}{\textwidth}
        \includegraphics[width=\textwidth,trim={0 0 2cm 1cm},clip]{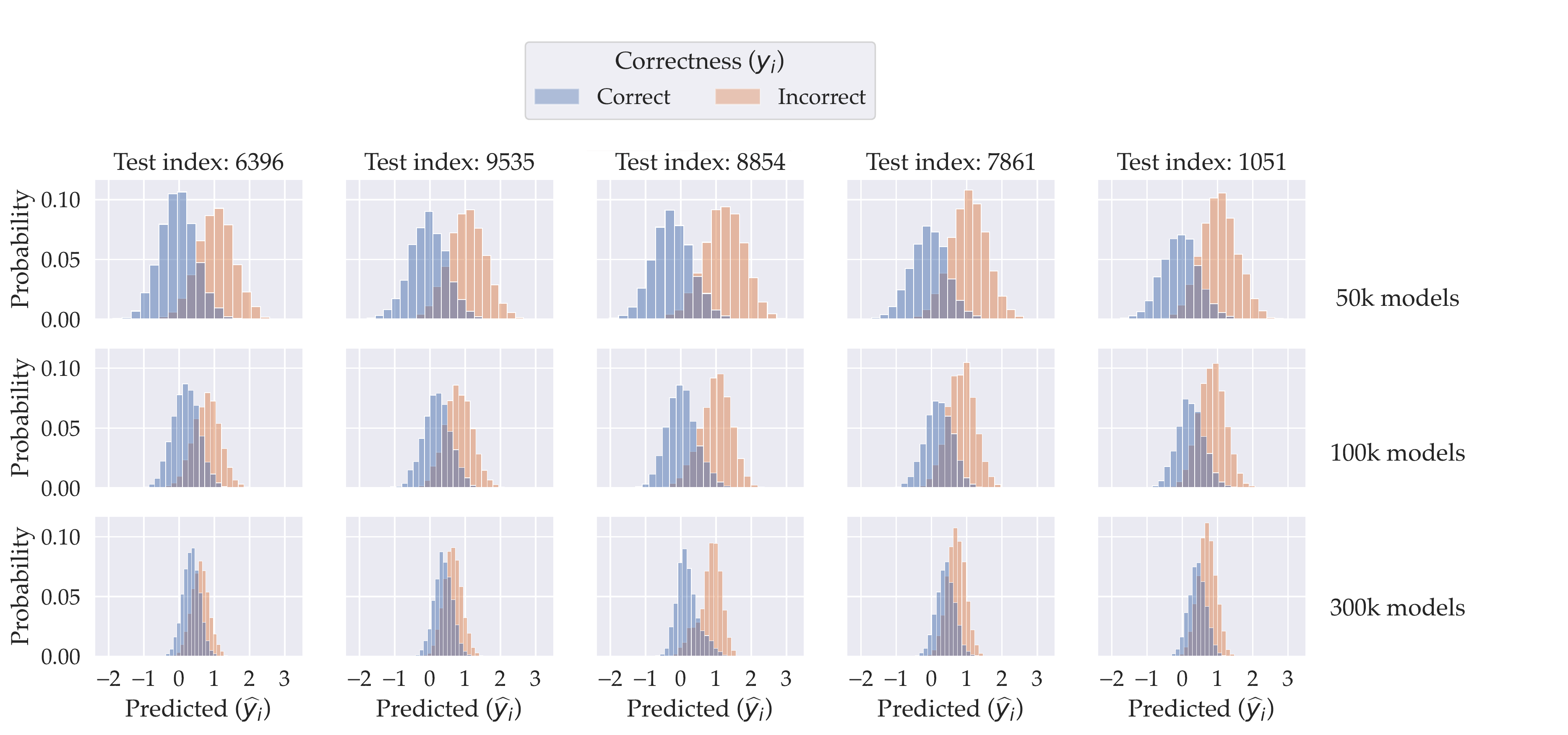}
        \caption{Train masks}
        \label{fig:ate_residuals_tr}
    \end{subfigure}
    \begin{subfigure}{\textwidth}
        \includegraphics[width=\textwidth,trim={0 0 2cm 3cm},clip]{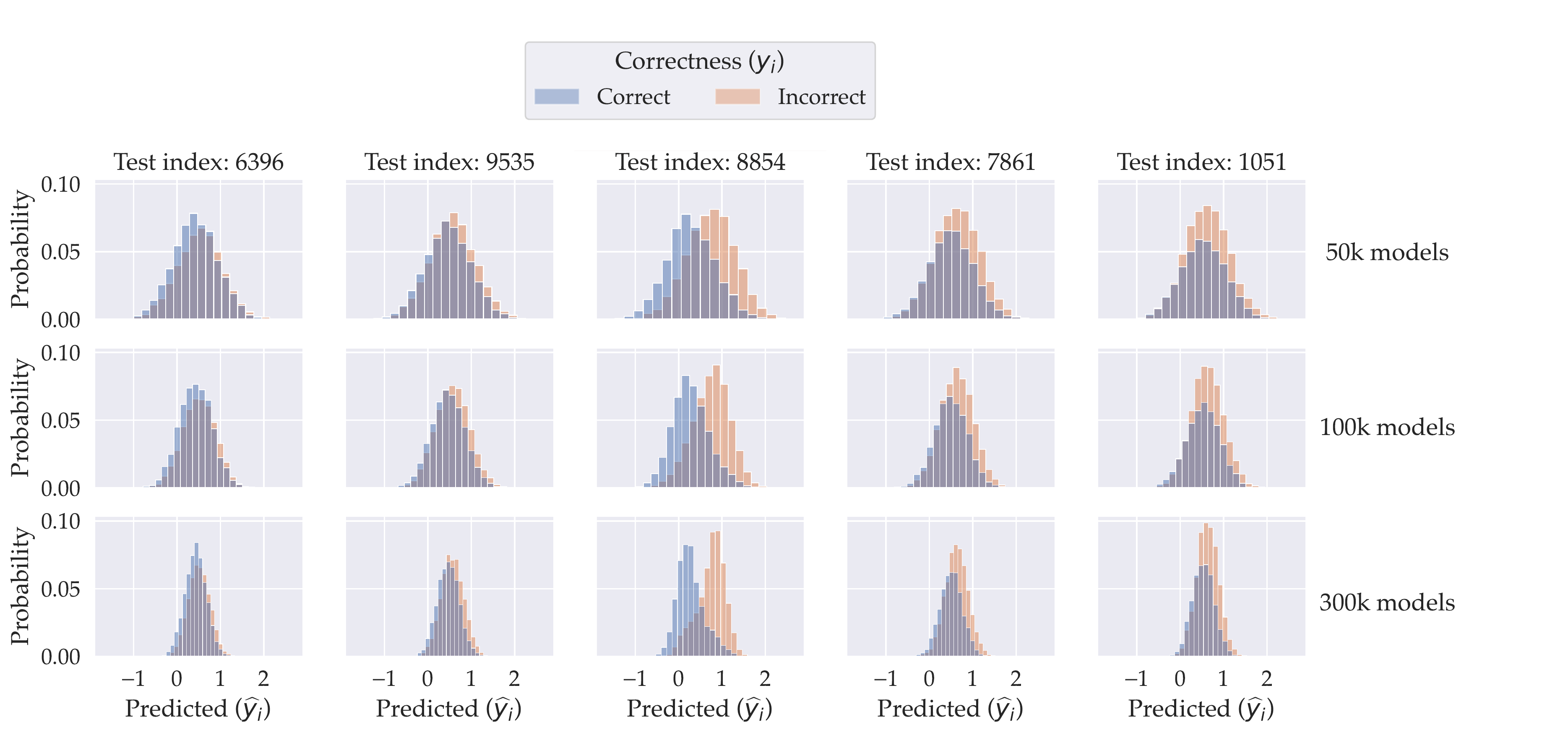}
        \caption{Heldout masks}
        \label{fig:ate_residuals_val}
    \end{subfigure}
    \caption{{\bf Empirical influence estimates are (weak) datamodels.}
    Each histogram illustrates the performance of empirical influences when the output of the corresponding datamodel is used as a statistic to distinguish between correct and incorrect predictions on the target example. Empirical influences can predict correctness on the ``train set'' of subsets (i.e. the masks used to estimate them), but suffer from significant generalization error when evaluated on a held-out set of subsets.}
\end{figure}
\clearpage
\subsection{View of empirical influences as a Taylor approximation}
\Cref{lem:atelinear} shows that we can interpret empirical influences as (rescaled) estimates of the weights of a \emph{linear} datamodel. Here, we give an alternative intuition for why this is the case, even though the definition of empirical influence does not explicitly assume linearity anywhere: we show that the influences define a first-order
Taylor approximation of the multilinear extension $f$ of our target function $F$ of interest, where the influences (approximately) correspond to first-order derivatives of $f$.

Recall that we want to learn some output of interest $F: 2^T \rightarrow \mathbb{R}$, say the probability of correctness on a test example $z$, as a function of the examples $S \subset T$ included in the training set.
We first extend this function continuously so that we can take its derivatives.
The multilinear extension \citep{owen1972multilinear} of set function $F$ to the domain $[0,1]^n$ ($|T|=n$) is given by:
\begin{align}
    f(x) &= \sum\limits_{S \subseteq T} F(S) \prod\limits_{i \in S} x_i \prod\limits_{i \not\in S} (1-x_i)
\end{align}
$f(x)$ also has an intuitive interpretation: it is the expected value of $F(S)$ when $S$ is chosen by including each $x_i$ in the input with probability $x_i$.

Next, we take the derivative of $f$ w.r.t. to the input $x_i$:
\[
    \frac{\partial f}{\partial{x_i}} =
    \underbrace{\sum\limits_{S \subseteq T, i \in S} F(S) \prod\limits_{j \in S, j\ne i} x_j \prod\limits_{j \not\in S} (1 - x_j)}_{\mathop{\mathbb{E}}\limits_{s_j \sim Bern(x_j), s_i=1} F(S)}  - \underbrace{\sum\limits_{S \subseteq T, i \not\in S} F(S) \prod\limits_{j \in S} x_j \prod\limits_{j \not\in S, j \ne i} (1-x_j)}_{\mathop{\mathbb{E}}\limits_{s_j \sim Bern(x_j), s_i=0} F(S)}
\]

Note that because $f$ is multilinear, the derivative w.r.t. to $x_i$ is constant in $x_i$, but not w.r.t. to other $x_j$.
Now, observe that the above expression evaluated at $x_j=\alpha,\;\forall x_j$ corresponds approximately\footnote{There are two sources of approximation here. First, the $\alpha$-subsampling used in our datamodel definition is defined globally (e.g. $\alpha$ fraction of entire train set), which is different from the i.i.d. $Bern(\alpha)$ sampling that is considered here. Second, we only observe noisy versions of $F(S)$.} to $\alpha$-subsampled influence $\theta_i$, of $i$ on $F$:
the first term corresponds (using our earlier interpretation) to the expectation of $F(S)$ conditional on $S$ including $i$, and the second to that conditional on $S$ excluding $i$.

Finally, the first-order Taylor approximation of $f$ around an $x$ is given as:
\[
f(x) \approx F(\emptyset) + \sum\limits_{i} \frac{\partial f}{\partial{x_i}} \cdot x_i \approx F(\emptyset)  + \sum\limits_{i} \theta_i \cdot x_i
\]
where $\theta_i$ are the empirical influences.

\paragraph{The role of $\alpha$.}
The above perspective provides an alternative way to think the role of the sampling fraction $\alpha$.
The weights $\theta_i$ depend on the regime we are interested in; if we use $\alpha$-subsampled influences, then we are effectively taking a local linear approximation of $f$ in the regime around $\vec{x} = \alpha \cdot \vec{1}$.

\paragraph{Remark.}
Though we include the exposition above for completeness, this is a classical derivation that has appeared in similar form in prior works \citep{owen1972multilinear}.
Another connection is that {\em Shapley value} is equivalent to the integral of $f$ along the ``main diagonal'' of the hypercube; it is effectively empirical influences averaged uniformly over the choice of $\alpha$.

\end{document}